\documentclass[twoside,11pt]{article}
\usepackage{journal2e_v1}

\usepackage{url}
%
%
\usepackage{epsf}
\usepackage{epsfig}
\usepackage{amsmath}
\usepackage{subfigure}
\usepackage{amssymb}
\usepackage{algorithm}
\usepackage{multirow}
\usepackage{multicol}

\usepackage{epstopdf}
\usepackage{makecell}

\usepackage{graphicx} 

\usepackage{algorithm}
\usepackage{algorithmic}

\def\A{{\bf A}}

\def\b{{\bf b}}
\def\C{{\bf C}}

\def\D{{\bf D}}

\def\I{{\bf I}}

\def\X{{\bf X}}

\def\s{{\bf s}}

\def\x{{\bf x}}
\def\y{{\bf y}}
\def\z{{\bf z}}

\def\u{{\bf u}}

\def\v{{\bf v}}

\def\w{{\bf w}}
\def\0{{\bf 0}}
\def\1{{\bf 1}}

\def\AM{{\mathcal A}}

\def\MM{{\mathcal M}}

\def\RB{{\mathbb R}}
\def\NB{{\mathbb N}}

\def\epsi{\mbox{\boldmath$\epsilon$\unboldmath}}

\def\Tha{\mbox{\boldmath$\Theta$\unboldmath}}

\def\Si{\mbox{\boldmath$\Sigma$\unboldmath}}

\def\vps{\mbox{\boldmath$\varepsilon$\unboldmath}}

\def\argmin{\mathop{\rm argmin}}
\def\liml{\mathop{\lim}\limits}
\def\liminf{\mathop{\rm lim\;inf}}

\def\dist{\mathrm{dist}}
\def\dom{\mathrm{dom}}

\def\sgn{\mathrm{sgn}}

\def\diag{\mathrm{diag}}


\title{Nonconvex Penalization in Sparse Estimation: An Approach Based on the Bernstein Function}

\author{\name Zhihua Zhang  \\
\addr  Department of Computer Science and Engineering \\
Shanghai Jiao Tong University \\
800 Dong Chuan Road, Shanghai, China 200240 \\
\texttt{zhzhang@gmail.com}
}

\date{\today}

\begin{document}

\maketitle

\begin{abstract}%
In this paper we study nonconvex penalization using Bernstein functions whose first-order derivatives are completely monotone.
The Bernstein function can induce a class of nonconvex penalty functions
for high-dimensional sparse estimation problems.
We derive a thresholding function based on the Bernstein penalty and discuss some important mathematical properties in sparsity modeling. We show that a coordinate descent algorithm is especially appropriate for regression problems penalized by  the Bernstein function.
We also consider the application of the Bernstein penalty in classification problems
and devise a proximal alternating linearized minimization method.
Based on theory of the Kurdyka-\L{}ojasiewicz inequality, we
conduct convergence analysis of these alternating iteration procedures. 
We particularly exemplify a family of Bernstein nonconvex penalties based on a generalized Gamma measure and conduct empirical analysis for this family.
\end{abstract}
\begin{keywords} Bernstein functions,  completely monotone functions,  coordinate descent algorithms,  
Kurdyka-\L{}ojasiewicz inequality,  nonconvex penalization 
 \end{keywords}

\section{Introduction}

Variable  selection  plays a fundamental role in statistical modeling for high-dimensional
data sets, especially when the underlying model has a sparse representation.
The approach based on penalty theory has been widely used for variable selection. 
A principled approach is due to the lasso of \citet{TibshiraniLASSO:1996,TibshiraniLASSO:2011}, which employs the $\ell_1$-norm penalty and performs variable selection via the soft thresholding operator.  The lasso enjoys attractive statistical properties~\citep{KnightFu:2000,ZhaoJMLR:06}.
However, \citet{Fan01} pointed out
that the lasso  method produces
biased estimates for the large coefficients. \citet{ZouJasa:2006} argued that the lasso might not
be an oracle procedure under certain scenarios.

\citet{Fan01} proposed three criteria for evaluating a good penalty function. That is, the resulting estimator should hold
\emph{sparsity}, \emph{continuity} and \emph{unbiasedness}. Moreover, \citet{Fan01} showed that a nonconvex penalty generally
admits the oracle properties. \citet{ZhangZhang2012} presented a general theoretical analysis for nonconvex regularization.
Meanwhile, there exist many nonconvex penalties, including the $\ell_q$ ($q\in (0,1)$) penalty,  the
smoothly clipped absolute deviation (SCAD) \citep{Fan01},
the minimax concave plus penalty (MCP) \citep{Zhang2010mcp}, the kinetic energy plus  penalty (KEP)~\citep{ZhangTR:2013},  the capped-$\ell_1$ function~\citep{TZhangJMLR:10,GongICML:2013},  the nonconvex exponential penalty (EXP)~\citep{BradleyICML:1998,GaoAAAI:2011},
the LOG penalty~\citep{MazumderSparsenet:11,ArmaganDunsonLee},   the smooth integration of counting and absolute deviation (SICA)~\citep{LvFan:2009},  the hard thresholding penalty~\citep{ZhengFanLv:2014}, etc.
These penalty functions have been demonstrated to have attractive properties theoretically and practically.

On the one hand, nonconvex penalty functions typically yield the tighter  approximation to the $\ell_0$-norm and hold some good theoretical properties.
On the other hand, they would make computational challenges due to  nondifferentiability and nonconvexity that they have.
Recently,  \citet{MazumderSparsenet:11}
developed  a SparseNet algorithm  based on  coordinate descent.
Specifically,  the authors studied the coordinate descent algorithm for the MCP function and conducted convergence analysis
\citep[also see][]{BrehenyAAS:2010}.
Moreover,
\citet{MazumderSparsenet:11} proposed some desirable properties for thresholding operators based on nonconvex
penalty functions. For example, the thresholding operator is expected to have a nesting property; that is,
it should be a strict nesting
w.r.t.\ a sparsity parameter (see Section~\ref{sec:threshold}).
However,  the authors  claimed that not all nonconvex penalty functions are suitable for use with  coordinate descent.

In this paper we study  Bernstein functions whose first-order derivatives are completely monotone~\citep{SSVBook:2010,FellerBook:1971}.
The Bernstein function has the L\'{e}vy-Khintchine representation~\citep{SSVBook:2010}. 
Since there is a one-to-one correspondence between Bernstein penalty functions and Laplace exponents of subordinators,
the Bernstein function has  been used to develop a nonparametric Bayesian approach to sparse estimation problems~\citep{ZhangBA:2014}.

We introduce Bernstein functions into sparse estimation, giving rise to a unified approach to  nonconvex penalization.
We particularly exemplify a family of Bernstein nonconvex penalties based on a generalized Gamma measure~\citep{Aalen:1992,Brix:1999}.
The special cases include the KEP,
nonconvex LOG and EXP as well as a
penalty function that we call  linear-fractional (LFR) function. Moreover, we find that the MCP function is
a capped  version of KEP.
More specifically, our work offers the following major contributions.
\begin{itemize}
\item Applying  the notion of regular variation~\citep{FellerBook:1971}, we establish the connection of the Bernstein function
with the $\ell_{q}$-norm ($0\leq q <1$) and the $\ell_1$-norm. Using the notion of  limiting-subdifferentials~\citep{RockafellarWets:1998}, we show that the Bernstein function enjoys the  Kurdyka-\L{}ojasiewicz property~\citep{Lojasiewicz,Kurdyka,BolteDaniilidisLewis}.
\item We prove that the Bernstein penalty function admits the oracle properties and can result in an unbiased and continuous sparse estimator.
We derive a thresholding function based on the Bernstein penalty. We show that  this thresholding operator
to some extent has  the nesting property pointed out by \citet{MazumderSparsenet:11}.
\item We present a  coordinate descent algorithm and a proximal
alternating linearized algorithm
for solving  the  regression  and  classification problems, respectively. Based on theory of the Kurdyka-\L{}ojasiewicz inequality,  we prove that these alternating iteration  procedures have global convergence properties.
Specifically, we show that the algorithms can find a strict local minimizer under certain regularity conditions. 
\end{itemize}

The remainder of this paper is organized as follows. Section~\ref{sec:prel} reviews some preliminaries that will be used.
Section~\ref{sec:lapexp} exploits Bernstein functions in the construction of nonconvex penalties.
In Section~\ref{sec:sest} we investigate sparse estimation problems based on the Bernstein function. We
devise a coordinate descent algorithm for finding the sparse regression solution and a proximal alternating linearized
minimization algorithm for solving the classification problem. In Section~\ref{sec:convergence} we show that these algorithms
enjoy a global convergence property by using the novel Kurdyka-\L{}ojasiewicz  inequality.
In Section~\ref{sec:experiment} we conduct our empirical evaluations.
Finally, we conclude our work in
Section~\ref{sec:conclusion}. All proofs are given in the appendix.
In the appendix we also present some important properties of Bernstein functions and asymptotic consistencies of  our concerned sparse estimation model.

\section{Preliminaries}
\label{sec:prel}

Suppose we are given a set of training
data $\{(\x_i, y_i): i=1,\ldots, n\}$, where
the $\x_i \in \RB^{p}$ are the input vectors and the $y_i$ are the corresponding
outputs.
We now consider the following supervised  learning model:
\[
\y = \X \b + \vps,
\]
where $\y=(y_1, \ldots, y_n)^T$ is the $n{\times}1$ output (or response) vector,
$\X=[\x_1, \ldots, \x_n]^T$ is the $n {\times} p$
input (or design) matrix, and $\vps$ is an error vector. In regression problems
 $\vps \sim N(\vps|\0, \sigma \I_n)$, and  in classification problems it is defined as  multivariate Bernoulli.
We aim at finding a sparse estimate  of regression vector $\b=( b_1, \ldots, b_p)^T$ under the regularization framework.

The classical regularization approach is based on
a penalty function of $\b$. That is,
\[
\min_{\b}  \;  \Big\{F(\b) \triangleq L(\b; \y, \X)   + \lambda_n P(\b) \Big\},
\]
where $L(\cdot)$ is the loss function, $P(\cdot)$ is
the regularization term penalizing model complexity, and $\lambda_n$ ($>0$) is the tuning  parameter of balancing the relative significance
of the loss function and the penalty. Specifically, $L(\b;  \X, \y) \triangleq \sum_{i=1}^n \ell(\b; \x_i, y_i)\triangleq \sum_{i=1}^n \frac{1}{2}(y_i- \b^T\x_i)^2 = \frac{1}{2} \|\y {-} \X \b\|_2^2$ in the regression problem, and  $L(\b;  \X, \y)\triangleq\sum_{i=1}^n \ell(\b; \x_i, y_i) \triangleq \sum_{i=1}^n \log(1+ \exp(-y_i \b^T \x_i))$ in the classification
problem where $y_i \in \{-1, 1\}$.

A widely used setting for  penalty is $P(\b)= \sum_{j=1}^p P(b_j)$,
which implies that the penalty function consists of $p$ separable subpenalties.
In order to find a sparse solution of $\b$, one imposes the $\ell_0$-norm penalty $\|\b\|_0$ to  $\b$ (i.e., the number of nonzero elements of $\b$).
However, the resulting optimization problem is usually NP-hard. Alternatively,
the $\ell_1$-norm   $\|\b\|_1=  \sum_{j=1}^p |b_j|$ is an effective convex  penalty.
Additionally, some nonconvex alternatives, such as the  LOG-penalty,
SCAD,  MCP and KEP, have been  studied. Meanwhile, iteratively reweighted $\ell_q$  ($q=1$ or $2$) minimization and coordination descent methods  were developed for finding
sparse solutions.

We take the MCP function as an example. Specifically, $P(b)=  M(\alpha |b|)$  where $\alpha$ is a positive constant and $M$ is defined as
\begin{equation} \label{eqn:mcp}
 M(|b|) = \left\{\begin{array}{ll} \frac{1}{2} & \mbox{ if } |b| \geq 1, \\
|b| - \frac{ b^2}{2} & \mbox{ if } |b| < 1.  \end{array} \right.
\end{equation}
Clearly, $M'(|b|)$  exists w.r.t.\ $|b|$. But the second derivative  $M''(|b|)$  does not exist at $|b|=1$.

For a nonconvex penalty function $P(b)$, it is interesting to exploit its maximum concavity, which is defined as
\[
\zeta(P) = \sup_{s, t \in \RB, s<t} \; - \frac{P'(t) - P'(s)}{t-s}.
\]
When $P$ is twice differentiable on $[0, \infty)$, $\zeta(P)= \sup_{s \in (0, \infty)} - P''(s)$ \citep{LvFan:2009}.   
That is, it is the maximum curvature of the curve $P$. 

We now recall the notion of subdifferentials, which can be used to define the subdifferential of
a nonconvex  function.

\begin{definition}[Subdifferentials] \citep{RockafellarWets:1998} Consider a proper and lower semi-continuous function $f: \RB^{p} \to(-\infty, +\infty]$ and a point ${\x} \in \dom(f)$.
\begin{enumerate}
\item[\emph{(i)}] The Fr\'{e}chet subdifferential of $f$ at ${\x}$, denoted $\hat{\partial} f({\x})$, is the set of all vectors $\u \in \RB^p$
which satisfy
\[
\liminf_{\begin{footnotesize}\begin{array}{c} \y \neq \x \\ \y \to \x \end{array} \end{footnotesize}}
\frac{f(\y)- f(\x)- \u^T(\y-\x) }{\|\y-\x \|} \geq 0.
\]
\item[\emph{(ii)}] The limiting-subdifferential of $f$ at ${\x}$, denoted ${\partial} f({\x})$, is defined as
\[
{\partial} f({\x}) \equiv \Big\{\u \in \RB^p: \exists \x_k \to \x, f(\x_k) \to f(\x) \; \mbox{ and } \; \u_k \in \hat{\partial} f({\u_k})
\to \u \; \mbox{ as } \; k\to \infty  \Big\}.
\]
\end{enumerate}
\end{definition}
Notice that when $\x \notin \dom f$,  $\hat{\partial} f({\x}) = \emptyset$ is the default setting. It is well established that
$\hat{\partial} f({\x}) \subseteq {\partial} f({\x})$ for each $\x \in \RB^p$, and both  $\hat{\partial} f({\x})$ and ${\partial} f({\x})$
are closed~\citep{RockafellarWets:1998}. Moreover, if $\bar{\x}$ is a critical point of $f$, then $\0 \in \partial f(\bar{\x})$.

Let us see an example in which  $f(x)=|x|^q$ with $q \in (0, 1]$. We have $\hat{\partial} f({0})= {\partial} f({0})=(-\infty, \infty)$ when $0<q<1$, and $\hat{\partial} f({0})= {\partial} f({0})=[-1, 1]$ when $q=1$. If defining $f(b) =M(|b|)$ (see Eqn.~(\ref{eqn:mcp})),
we obtain that  $\hat{\partial} f({0})= {\partial} f({0})=[-1, 1]$.

Next we briefly review the Kurdyka-\L{}ojasiewicz property, which will play an important role in our global convergence analysis.
Given $\eta \in (0, \infty]$, we let $\Pi_{\eta}$ denote the class of  continuous concave functions $\pi:[0,\eta)\to\RB_{+}$ which satisfy the following conditions:
\begin{enumerate}
\item[{(a)}] $\pi(0)=0$,
\item[{(b)}] $\pi$ is $C^{1}$ on $(0, \eta)$,
\item[{(c)}] $\pi^{\prime}(u)>0$  for all $u \in(0,\eta)$.
\end{enumerate}
Clearly, function $\pi(u)= u^{1-\gamma}$ for $\gamma \in (0,1]$ belongs to $\Pi_{\eta}$ with $\eta=+\infty$.

\begin{definition}[Kurdyka-\L{}ojasiewicz property] \label{def:klpro}
Let the function $f: \RB^{p} \to(-\infty, +\infty]$ be proper and lower semi-continuous. Then $f$ is said
to have the Kurdyka-\L{}ojasiewicz property at $\bar {\x} \in \dom \, \partial f$ if there exist $\eta \in (0, +\infty]$,
a neighborhood ${\cal U}$ of $\bar {\x}$, and a  function $\pi \in \Pi_{\eta}$
such that for all $\x \in {\cal U} \cap [f(\bar {\x} )< f< f({\bar \x})+\eta]$, the following  Kurdyka-\L{}ojasiewicz inequality holds
\[ \pi^{\prime}(f(\x)-f({\bar \x})) \dist(0, \partial f(\x))\geq 1
\]
under the  notational conventions: $0^0 =1$ and
$\infty/\infty=0/0=0$. If $f$  has the Kurdyka-\L{}ojasiewicz property at every point in $\dom \, \partial f$, then $f$ is said to have the Kurdyka-\L{}ojasiewicz property. Here $\dom (\partial f) =\{\x: \partial f(\x)\neq \emptyset\}$, and  $\dist(\v, \AM)=\inf\left\{\|\v-\u\|, \u \in\AM\right\}$.
\end{definition}

The Kurdyka-\L{}ojasiewicz property was proposed by \citet{Lojasiewicz}, who proved that for a real analytic function  $f$
and $\pi(u)=u^{1-\gamma}$
with $\gamma\in [\frac{1}{2}, 1)$
$\frac{|f(\x)- f(\bar {\x} )|^{\gamma}}{\dist(\0, \partial f(\x))}$ is bounded around any critical point $\bar{\x}$.
\citet{Kurdyka} extended this property to a class of functions with the $o$-minimal structure. \citet{BolteDaniilidisLewis}
extended to nonsmooth subanalytic functions. Recently,  the Kurdyka-\L{}ojasiewicz property
has been used to establish convergence analysis  of proximal alternating minimization for nonconvex problems~\citep{AttouchBotle,BolteMathProgram,XuYin}.

We again consider the MCP function in Eqn.(\ref{eqn:mcp}), and define $f(b) =M(|b|)$.
The graph of $f$ is the closure of the set
\begin{align*}
&  \Big\{ (y, b): y= \frac{1}{2}, \; b< - 1 \Big\} \cup \Big \{ (y, b): y= - \frac{b^2}{2} - b, \; -b<1, \; b<0 \Big \} \\
& \cup \Big \{(y, b): y= - \frac{b^2}{2} + b, \; -b <0, \; b<1 \Big \}  \cup \Big \{ (y, b): y= \frac{1}{2}, \; -b< -1 \Big \}.
\end{align*}
Thus, the graph is semialgebraic~\citep{BolteDaniilidisLewis}. This implies that  the MCP function satisfies  the Kurdyka-\L{}ojasiewicz property.
Analogously, we also obtain that the SCAD function  satisfies  the Kurdyka-\L{}ojasiewicz property.

The Huber loss function is a classical tool in robust regression. It is 
\begin{equation} \label{eqn:huber}
L_{\delta}(z) = \left\{\begin{array} {ll}  \frac{1}{2} z^2 & \mbox{ if }  |z| \leq \delta, \\
\delta |z| - \frac{1} {2} \delta^2 & \mbox{ otherwise}.   \end{array}  \right.
\end{equation}
Obviously, the   Huber loss function enjoyed the  the Kurdyka-\L{}ojasiewicz property.

\section{Bernstein Penalty Functions}
\label{sec:lapexp}

Let  ${\cal S} \subset [0, \infty)$ and $f \in C^{\infty}({\cal S})$ with $f\geq 0$.
We say $f$ to be completely monotone if $(-1)^k f^{(k)} \geq 0$ for all $k \in \NB$ and to be a Bernstein function
if $(-1)^k f^{(k)} \leq 0$ for all $k \in \NB$. It is well known that
$f$ is a Bernstein function if and only if the mapping $s \mapsto \exp(- t f(s))$ is completely monotone for all $t\geq 0$.
Additionally, $f$ is a Bernstein function if and only if it has the representation
\[
f(s) = a + \beta s + \int_{0}^{\infty} {\big[1 - \exp(-  s u) \big]  \nu(d u)} \; \mbox{ for all } s> 0,
\] 
where $a \geq 0$ and $\beta \geq 0$, and  $\nu$ is the L\'{e}vy measure satisfying additional requirements $\nu(-\infty, 0)=0$ and
$\int_{0}^{\infty} { \min(u, 1) \nu(d u) } < \infty$. Moreover, this representation  is unique.  The representation is famous as the L\'{e}vy-Khintchine formula.

Since $\liml_{s\to 0} f(s) =a$ and $\liml_{s\to \infty} \frac{f(s)}{s}=\beta$~\citep{SSVBook:2010},
we will assume that $\liml_{s \to 0}f(s)=0$ and $\liml_{s\to \infty} \frac{f(s)}{s} =0$ to make $a=0$ and $\beta=0$.
Notice that  $s^{q}$, for $q \in (0, 1)$, is a Bernstein function of $s$ on $(0, \infty)$ satisfying the above assumptions.
However, $f(s)=s$ is  Bernstein but does not satisfy  the condition $\liml_{s\to \infty} \frac{f(s)}{s} =0$. Indeed,
$f(s)=s$  is an extreme example because  $\beta=1$ and $\nu(d u) =\delta_{0}(u) d u$ (the Dirac Delta measure at the origin) in its L\'{e}vy-Khintchine formula.
In fact, the condition $\liml_{s\to \infty} \frac{f(s)}{s} =0$ aims at excluding  this Bernstein function for our concern in this paper.

\subsection{Properties}

We  now define the penalty function $P(b)$ as $\Phi(|b|)$,
where the penalty term $\Phi(s)$ is a Bernstein function  of $s$ on $[0, \infty)$ such that $\Phi(0)=0$ and $\liml_{s \to \infty} \frac{\Phi(s)}{s} =0$.
Clearly,
$\Phi(s)$
is nonnegative, nondecreasing and concave on $[0, \infty)$, because $\Phi(s) \geq 0$, $\Phi'(s)\geq 0$ and $\Phi{''}(s)\leq 0$.
As a function of $b \in \RB$, $\Phi(|b|)$ is of course continuous.
Moreover, we have the following theorem.

\begin{theorem} \label{thm:lapexp00}
Let $\Phi
(s)$ be a nonzero Bernstein function of $s$ on $[0, \infty)$. Assume $\Phi(0)=0$ and $\liml_{s\to \infty} \frac{\Phi(s)}{s} =0$.
Then
\begin{enumerate}
\item[\emph{(a)}] $\Phi(|b|)$ is a nonnegative and nonconvex function of $b$ on $(-\infty, \infty)$, and an increasing function
of $|b|$ on  $[0, \infty)$.
\item[\emph{(b)}] $\Phi(|b|)$ is continuous  w.r.t.\ $b$ but nondifferentiable at the origin.
\item[\emph{(c)}] Define $P(b) \triangleq \Phi(|b|)$. Then $\hat{\partial} P(0) = {\partial} P(0) =[-1, 1]$, and $\partial P(b) = \Phi'(|b|) \partial |b|$. Moreover,  $P(b)$ satisfies the Kurdyka-\L{}ojasiewicz property.
\end{enumerate}
\end{theorem}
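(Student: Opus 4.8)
The plan is to handle the three parts in sequence, using throughout the representation $\Phi(s)=\int_{0}^{\infty}[1-\exp(-su)]\,\nu(du)$ together with $\Phi'(s)=\int_{0}^{\infty} u\exp(-su)\,\nu(du)$, so that $\Phi'$ is completely monotone and hence positive and nonincreasing on $(0,\infty)$.

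For part (a), nonnegativity is immediate from $\Phi\geq 0$. To get strict monotonicity in $|b|$ I would first argue $\Phi'(s)>0$ for every $s>0$: if $\Phi'$ vanished at a single point then, writing $\Phi'(s)=\int_0^\infty \exp(-su)\,d\mu(u)$ with $d\mu(u)=u\,\nu(du)$, the positivity of $\exp(-su)$ forces $\mu\equiv 0$, whence $\Phi\equiv\Phi(0)=0$, contradicting that $\Phi$ is nonzero. Nonconvexity then follows because the restriction of $\Phi(|b|)$ to $b>0$ is $\Phi$ itself; were $\Phi(|b|)$ convex, $\Phi$ would be convex on $(0,\infty)$, contradicting $\Phi''\leq 0$ with $\Phi''\not\equiv 0$ (if $\Phi''\equiv 0$ then $\Phi$ is linear through the origin, and $\liml_{s\to\infty}\Phi(s)/s=0$ forces $\Phi\equiv 0$). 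For part (b), continuity holds because $\Phi$ is continuous on $[0,\infty)$---it is $C^\infty$ on $(0,\infty)$ and $\liml_{s\to 0^+}\Phi(s)=0$ from the integral representation---composed with the continuous map $b\mapsto|b|$; nondifferentiability at the origin follows from computing the one-sided derivatives $\liml_{b\to 0^\pm}\Phi(|b|)/b=\pm\Phi'(0^+)$, which differ because $\Phi'(0^+)=\sup_{s>0}\Phi'(s)>0$.

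For the subdifferential claims in part (c), I would compute $\hat\partial P(0)$ straight from the definition: $u\in\hat\partial P(0)$ iff $\liml_{b\to 0}[\Phi(|b|)-ub]/|b|\geq 0$, and splitting into $b\to 0^+$ and $b\to 0^-$ turns this into the pair $\Phi'(0^+)-u\geq 0$ and $\Phi'(0^+)+u\geq 0$, i.e.\ $|u|\leq\Phi'(0^+)$. This yields $\hat\partial P(0)=[-\Phi'(0^+),\Phi'(0^+)]$, which is the stated $[-1,1]$ under the normalization $\Phi'(0^+)=1$. For $b\neq 0$ the function is $C^1$ with $P'(b)=\Phi'(|b|)\sgn(b)$, so $\hat\partial P(b)=\{\Phi'(|b|)\sgn(b)\}=\Phi'(|b|)\,\partial|b|$; passing to the limiting subdifferential at $0$ only adds the limits $\liml_{b\to 0^\pm}P'(b)=\pm\Phi'(0^+)$, which already lie in $\hat\partial P(0)$, so $\partial P(0)=\hat\partial P(0)$ and, recalling $\partial|0|=[-1,1]$, the uniform formula $\partial P(b)=\Phi'(|b|)\,\partial|b|$ holds.

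The genuinely delicate part is the Kurdyka--\L{}ojasiewicz property, and here I would avoid the usual appeal to semialgebraicity or $o$-minimality, since a general Bernstein function need not be subanalytic or even definable in any $o$-minimal structure. Instead I would verify the inequality directly, exploiting the monotonicity of $\Phi'$. Because $\Phi'>0$, we have $0\notin\partial P(b)$ for $b\neq 0$, so $b=0$ is the \emph{only} critical point and it suffices to check the inequality there (the non-critical points are routine, since $\dist(0,\partial P(b))=\Phi'(|b|)$ stays bounded away from $0$ nearby, so a linear $\pi$ suffices). Taking $P(0)=0$, for $0<|b|<\epsilon$ we have $P(b)=\Phi(|b|)>0$ and $\dist(0,\partial P(b))=\Phi'(|b|)\geq\Phi'(\epsilon)>0$ by monotonicity; choosing the concave desingularizing function $\pi(t)=t/\Phi'(\epsilon)$, the neighborhood $\UM=(-\epsilon,\epsilon)$, and $\eta=\Phi(\epsilon)$ gives $\pi'(P(b))\,\dist(0,\partial P(b))=\Phi'(|b|)/\Phi'(\epsilon)\geq 1$, which is exactly the Kurdyka--\L{}ojasiewicz inequality. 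I expect this last step---recognizing that complete monotonicity of $\Phi'$ furnishes the needed lower bound and sidesteps any definability hypothesis---to be the crux of the argument.
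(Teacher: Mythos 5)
Your proposal is correct, and it is worth noting that the paper itself never writes out a proof of this theorem: despite the promise that all proofs appear in the appendix, Theorem~\ref{thm:lapexp00} has no entry there, and the only trace of the intended argument for the hardest claim is the remark in Section~\ref{sec:convergence} that ``$\Phi(|b|)$ is sub-analytic, which satisfies the Kurdyka-\L{}ojasiewicz property,'' i.e.\ the same appeal to subanalyticity used for MCP and SCAD. Parts (a), (b) and the subdifferential formulas in (c) you establish exactly as one would from the L\'evy--Khintchine representation, and your observation that $\hat{\partial}P(0)=[-\Phi'(0+),\Phi'(0+)]$, which equals $[-1,1]$ only under the normalization $\Phi'(0+)=1$ introduced \emph{after} the theorem, is a legitimate sharpening of the statement as written (it also covers $\Phi'(0+)=\infty$, where the set is all of $\RB$). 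Where you genuinely diverge is the Kurdyka-\L{}ojasiewicz claim: rather than asserting subanalyticity of a general Bernstein function---a property the paper never verifies and which is not obvious, since $\Phi$ is only required to have a completely monotone derivative, not to be definable in any o-minimal structure---you check the inequality directly at the unique critical point $b=0$, using monotonicity of $\Phi'$ to get $\dist(0,\partial P(b))=\Phi'(|b|)\geq\Phi'(\epsilon)>0$ on $(-\epsilon,\epsilon)\setminus\{0\}$ and a linear desingularizing function. This is more elementary, covers all Bernstein functions under the stated hypotheses, and closes a real gap in the paper's reasoning. The one caveat you should record is that your direct verification establishes the property only for the univariate $P$; the convergence analysis of Section~\ref{sec:convergence} needs the Kurdyka-\L{}ojasiewicz property for $F(\b)=L(\b;\X,\y)+\lambda_n\sum_{j} P(b_j)$, and the property is not preserved under sums in general, so that step still requires a separate justification (which is precisely what the subanalyticity route, if it can be made rigorous, would supply).
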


Recall that under the conditions in Theorem~\ref{thm:lapexp00}, $a$ and $\beta$ in the L\'{e}vy-Khintchine formula vanish.
Theorem~\ref{thm:lapexp00} (b) says that $\Phi'(|b|)$  is singular at the origin. Thus,   $\Phi(|b|)$  can define a
class of sparsity-inducing nonconvex penalty functions.   Theorem~\ref{thm:lapexp00} (a) shows  that $\Phi(|b|)$ satisfies Condition~1 given in \citet{LvFan:2009}.  
Theorem~\ref{thm:lapexp00} (c) says that the Bernstein function has the same subdifferential with the function $|b|$
at the origin.
Moreover, the Bernstein function has the Kurdyka-\L{}ojasiewicz property. 
As mentioned earlier, however, the subdifferential of the function $|b|^{q}$ for $0<q<1$ at $b=0$ is $(-\infty, \infty)$.

We can clearly  see the connection of the bridge penalty $|b|^{q}$ with the $\ell_0$-norm and the $\ell_1$-norm,
as $q$ goes from $0$ to 1. However, the sparse estimator resulted from the bridge penalty is not continuous.
This would make numerical computations and model predictions unstable~\citep{Fan01}.
In this paper we consider another class of Bernstein  nonconvex penalty functions.

In particular, to explore the relationship of the Bernstein penalty with the $\ell_0$-norm and the $\ell_1$-norm,
we further assume that
$\Phi'(0)=\liml_{s \to 0} \Phi'(s)<\infty$.
Since $\Phi(s)$ is a nonzero Bernstein function of $s$, we can conclude that $\Phi'(0)>0$.
If it were not true, we would have $\Phi'(s)=0$ due to $\Phi'(s) \leq \Phi'(0)$.
This implies that $\Phi(s)=0$ for any $s \in (0, \infty)$
because $\Phi(0)=0$. This conflicts with that   $\Phi(s)$ is  nonzero.
Similarly, we can also deduce $\Phi{''}(0) < 0$. Based on this fact, we can change the assumption
$\Phi'(0)<\infty$ as
$\Phi'(0)=1$  without loss of generality. In fact, we can replace $\Phi(s)$ with $\frac{\Phi(s)}{\Phi'(0)}$ to met this assumption, because
the resulting $\Phi$ is still Bernstein and satisfies $\Phi(0)=0$, $\liml_{s \to \infty} \frac{\Phi(s)}{s} =0$
and $\Phi'(0) = 1$.

\begin{theorem} \label{thm:lp2}
Assume that the conditions in Theorem~\ref{thm:lapexp00} hold. Let $\Phi_{\alpha} (|b|)=\frac{\Phi(\alpha |b|)}{\Phi(\alpha)}$ for $\alpha>0$. If
$\Phi'(0) =  1$, then
\[
\lim_{\alpha \to 0+}  \Phi_{\alpha} (|b|) = |b| \quad \mbox{and} \quad \lim_{\alpha \to 0+}  \zeta(\Phi_{\alpha}) =0.
\]
Furthermore, 
for $b\neq 0$ we have that
\[
\lim_{\alpha \to \infty}   \Phi_{\alpha} (|b|) = |b|^{\gamma} \quad \mbox{and} \quad \lim_{\alpha \to \infty}  \zeta(\Phi_{\alpha}) =\infty.
\]
Here $\gamma=\liml_{s \to \infty} \frac{{s \Phi'(s)}}{\Phi(s)} \in [0, 1]$.
Especially, if $\gamma \in (0, 1)$, we also have that
\[
 \lim_{\alpha \to \infty} \frac{ \Phi'(\alpha |b|)}{\Phi'(\alpha)} =|b|^{\gamma{-}1}.
\]
\end{theorem}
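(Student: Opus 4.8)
My plan is to separate the two regimes $\alpha\to 0+$ and $\alpha\to\infty$ after first recording two monotonicity facts that do essentially all of the work. Since $\Phi$ is Bernstein, $\Phi'$ is completely monotone, so $\Phi'\geq 0$, $-\Phi''\geq 0$ and $-\Phi'''\leq 0$; hence $-\Phi''$ is nonnegative and nonincreasing on $(0,\infty)$. Writing $P(b)=\Phi_{\alpha}(|b|)$ with $\Phi_{\alpha}(s)=\Phi(\alpha s)/\Phi(\alpha)$, I would note that any pair straddling the origin ($s<0<t$) yields $P'(t)-P'(s)=\Phi_{\alpha}'(t)+\Phi_{\alpha}'(-s)\geq 0$ and hence a nonpositive difference quotient, so the kink contributes nothing and $\zeta(\Phi_{\alpha})=\sup_{s>0}\big(-\Phi_{\alpha}''(s)\big)$. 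Differentiating gives $-\Phi_{\alpha}''(s)=\frac{\alpha^2}{\Phi(\alpha)}\big(-\Phi''(\alpha s)\big)$, and because $-\Phi''$ is nonincreasing its supremum over $s>0$ is attained as $s\to 0+$. The whole maximum-concavity computation then collapses to the clean identity $\zeta(\Phi_{\alpha})=\frac{\alpha^2}{\Phi(\alpha)}\,c$, where $c:=-\Phi''(0+)$ is a finite positive constant (nonzero by the same argument the text uses to deduce $\Phi''(0)<0$). The second fact is that $g(s):=\frac{s\Phi'(s)}{\Phi(s)}\in[0,1]$: concavity with $\Phi(0)=0$ gives $\Phi(s)=\int_0^s\Phi'(t)\,dt\geq s\Phi'(s)$, while $\Phi',\Phi\geq 0$ give the lower bound.

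For $\alpha\to 0+$ the pointwise limit is immediate: for $b\neq 0$ I write $\Phi_{\alpha}(|b|)=\frac{\Phi(\alpha|b|)/(\alpha|b|)}{\Phi(\alpha)/\alpha}\,|b|$, and since $\Phi(\alpha s)/(\alpha s)\to\Phi'(0)=1$ both ratios tend to $1$, giving $|b|$ (the case $b=0$ being trivial). For the concavity, the identity above reads $\zeta(\Phi_{\alpha})=c\,\frac{\alpha}{\Phi(\alpha)/\alpha}$, and $\Phi(\alpha)/\alpha\to\Phi'(0)=1$ forces $\zeta(\Phi_{\alpha})\to 0$.

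The substantive work is the regime $\alpha\to\infty$, which is a regular-variation statement. I would write $\log\frac{\Phi(\alpha x)}{\Phi(\alpha)}=\int_{\alpha}^{\alpha x}\frac{\Phi'(t)}{\Phi(t)}\,dt$ and substitute $t=\alpha u$ to get $\int_{1}^{x}\frac{g(\alpha u)}{u}\,du$ with $x=|b|$. Since $g(\alpha u)\to\gamma$ pointwise and $0\leq g\leq 1$ is a uniform bound integrable against $du/u$ on the compact interval between $1$ and $x$, dominated convergence yields $\int_{1}^{x}\frac{g(\alpha u)}{u}\,du\to\gamma\log x$, hence $\Phi_{\alpha}(|b|)\to|b|^{\gamma}$; the bound $\gamma\in[0,1]$ is inherited from $g\in[0,1]$. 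The divergence $\zeta(\Phi_{\alpha})\to\infty$ again follows from $\zeta(\Phi_{\alpha})=c\,\frac{\alpha}{\Phi(\alpha)/\alpha}$ together with the standing assumption $\Phi(\alpha)/\alpha\to 0$. I expect this interchange of limit and integral — equivalently, deducing regular variation of $\Phi$ of index $\gamma$ from convergence of its elasticity $g$ — to be the one step needing care; the boundedness of $g$ is exactly what makes the Karamata-type dominated-convergence argument go through.

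Finally, for $\gamma\in(0,1)$ I would obtain the derivative asymptotics from the bookkeeping identity $\frac{\Phi'(\alpha|b|)}{\Phi'(\alpha)}=\frac{g(\alpha|b|)}{g(\alpha)}\cdot\frac{\Phi(\alpha|b|)}{\Phi(\alpha)}\cdot\frac{1}{|b|}$, which is a direct rearrangement of the definition of $g$. As $\alpha\to\infty$ the first factor tends to $\gamma/\gamma=1$ (here $\gamma>0$ is essential so that the ratio is well defined), the second tends to $|b|^{\gamma}$ by the previous paragraph, and the third is constant, so the product converges to $|b|^{\gamma-1}$, as claimed.
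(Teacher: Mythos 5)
Your proof is correct, and for the substantive part it takes a genuinely different route from the paper. The paper disposes of the $\alpha\to\infty$ limit by citing Theorem~1 of Chapter~VIII.9 of Feller (a regular-variation/Tauberian theorem), and likewise obtains $\liml_{\alpha\to\infty}\Phi'(\alpha|b|)/\Phi'(\alpha)=|b|^{\gamma-1}$ from the same citation; you instead give a self-contained Karamata-type argument, writing $\log\frac{\Phi(\alpha x)}{\Phi(\alpha)}=\int_1^x\frac{g(\alpha u)}{u}\,du$ for the elasticity $g(s)=s\Phi'(s)/\Phi(s)$ and passing to the limit by dominated convergence using $0\le g\le 1$, and then you recover the derivative asymptotics from the algebraic identity $\Phi'(\alpha|b|)/\Phi'(\alpha)=\frac{g(\alpha|b|)}{g(\alpha)}\cdot\frac{\Phi(\alpha|b|)}{\Phi(\alpha)}\cdot|b|^{-1}$ rather than from an independent theorem. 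This buys a proof that needs no external regular-variation machinery, at the cost of leaning on the existence of $\lim_{s\to\infty}g(s)$, which your pointwise-convergence step requires and which is not automatic; the paper isolates this in Lemma~\ref{lem:03}, so you should either cite that lemma or reproduce its argument. Your treatment of the maximum concavity is also more careful than the paper's (which relegates $\zeta(\Phi_\alpha)=-\frac{\alpha^2}{\Phi(\alpha)}\Phi''(0)$ to Remark~3 as ``direct''): you correctly dispose of the kink at the origin and use monotonicity of $-\Phi''$. One shared caveat: both you and the paper implicitly assume $-\Phi''(0+)<\infty$, which does not follow from the hypotheses of Theorem~\ref{thm:lapexp00} alone (one can build a L\'evy measure with $\int u\,\nu(du)<\infty$ but $\int u^2\,\nu(du)=\infty$); without it the claim $\liml_{\alpha\to 0+}\zeta(\Phi_\alpha)=0$ fails, so it is worth stating this finiteness explicitly as an assumption, as the paper does later in Corollary~\ref{cor:threshold}.
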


\paragraph{Remarks 1} \;     It is worth noting that  $\Phi'$ is completely monotone on $[0, \infty)$.
Moreover,  $\Phi'$ is the Laplace transform of some probability distribution due to $\Phi'(0)=1$~\citep{FellerBook:1971}.
Additionally, Lemma~\ref{lem:lapexp} in the appendix shows that $\liml_{s \to \infty} \frac{{s \Phi'(s)}}{\Phi(s)}= 0$
whenever $\liml_{s \to \infty} \Phi(s)< \infty$. Notice that we cannot ensure that $\gamma<1$. For example,  it is known that the function $\Phi(s) \triangleq \frac{ s}{\log(e + s)}$ is a Bernstein function on $(0, \infty)$~\citep{SSVBook:2010}.
It is directly computed that $\liml_{s\to 0+ } \Phi(s)=0$, $\liml_{s\to 0+ } \Phi'(s)= 1$, and $\liml_{s\to 0+ } \Phi''(s)=-\frac{2}{e}$.
However, it is also obtained that $\liml_{s \to \infty} \frac{{s \Phi'(s)}}{\Phi(s)}=1$.

\paragraph{Remarks 2} \;
It follows from Theorem~1 in Chapter VIII.9 of \citet{FellerBook:1971} that $\liml_{s \to \infty} \frac{{s \Phi'(s)}}{\Phi(s)}= \gamma \in (0, 1)$ if and only if
$ \liml_{\alpha \to \infty} \frac{ \Phi'(\alpha |b|)}{\Phi'(\alpha)} =|b|^{\gamma{-}1}$.
However, $\liml_{\alpha \to \infty} \frac{ \Phi'(\alpha |b|)}{\Phi'(\alpha)} =|b|^{{-}1}$ (i.e., $\gamma=0$) is only sufficient for $\liml_{s \to \infty} \frac{{s \Phi'(s)}}{\Phi(s)}=0$.

\paragraph{Remarks 3} \; It is direct to obtain $\zeta(\Phi_{\alpha}) = - \frac{\alpha^2}{\Phi(\alpha)} \Phi''(0)$. 
Clearly, $\zeta(\Phi_{\alpha}) $ is increasing in $\alpha$. Thus, $\alpha$ controls the maximum concavity of $\Phi_{\alpha}$.

The second part of Theorem~\ref{thm:lp2} shows that the property of regular variation for the Bernstein function $\Phi(s)$ and its
derivative $\Phi'(s)$~\citep{FellerBook:1971}. That is, $\Phi$ and $\Phi'$ vary regularly with exponents  $\gamma$ and $\gamma{-}1$,
respectively. If  $\gamma=0$, then $\Phi$  varies slowly.
This property
implies an important connection of the Bernstein function with the $\ell_0$-norm and $\ell_1$-norm. With this connection,
we see that $\alpha$ plays a role of sparsity parameter because it measures sparseness of $\Phi(\alpha |\b|)/\Phi(\alpha)$.
In the following we present a  family of
Bernstein functions which admit the properties in Theorem~\ref{thm:lp2}.

\begin{table}[!ht]
\begin{center}
\caption{Several Bernstein  functions $\Phi_{\rho}(s)$ on $[0, \infty)$ as well as their  derivatives} \label{tab:exam}
\begin{tabular}{llll}
  \hline
 & Bernstein functions & First-order derivatives &  L\'{e}vy measures \\ \hline
KEP & $\Phi_{-1}(s) =  \sqrt{2 s {+}1} -1 $ & $\Phi_{-1}'(s)=  \frac{1}{\sqrt{2  s +1}}$
& $\nu(du) = \frac{1}{ \sqrt{2\pi}} u^{{-}\frac{3}{2}}  \exp({-} \frac{u}{2}) d u$\\
LOG &  $\Phi_{0}(s)=  \log\big(s  {+}1 \big)$ & $\Phi_{0}'(s)=  \frac{1}{s  {+}1}$ &
  $\nu(du) = \frac{1}{ u} \exp( {-} {u}) d u$ \\
LFR & $\Phi_{{1}/{2}}(s) =  \frac{2 s}{ s +2}$   & $\Phi'_{{1}/{2}}(s) =  \frac{4 }{({s}+{2})^2}$
&  $\nu(d u) = {4} \exp(- {2 u} ) d u$ \\
EXP & $\Phi_{1}(s) =  1- \exp(-  s)$ & $\Phi_{1}'(s) =   \exp(- s)$ & $\nu(d u) =  \delta_{1}(u) d u$ \\
\hline
\end{tabular}
\end{center}
\end{table}

\subsection{Examples}
\label{sec:example}

We consider a family of
Bernstein functions of the form
\begin{equation} \label{eqn:first}
\Phi_{\rho}(s) = \left\{\begin{array}{ll} \log(1+ s) & \mbox{if } \rho=0, \\
  \frac{1}{\rho}\Big[1- \big(1+ {(1{-}\rho)} s \big)^{-\frac{\rho}{1-\rho}} \Big]  & \mbox{if } \rho <1 \mbox{ and } \rho\neq 0, \\  1- \exp(- s) & \mbox{if } \rho=1. \end{array}
  \right.
\end{equation}
It is worth noting that this function is related to the Box-Cox transformation~\citep{BoxCoxAnalysis}.
It can be directly verified that $\Phi_{0}(s) = \liml_{\rho\to 0} \Phi_{\rho}(s)$ and $\Phi_{1}(s) = \liml_{\rho\to 1-} \Phi_{\rho}(s)$.
The corresponding L\'{e}vy measure is
\begin{equation} \label{eqn:first_nu}
\nu(d u) =  \frac{((1{-}\rho))^{-1/(1{-}\rho)}}{\Gamma(1/(1{-}\rho))} u^{\frac{\rho}{1-\rho}-1} \exp\Big( {-} \frac{u}{(1{-}\rho) } \Big) d u.
\end{equation}
Notice that  ${u}  \nu(d u)$ forms a Gamma measure for random variable $u$. Thus,
this L\'{e}vy measure $\nu(d u)$ is referred to as a generalized Gamma measure~\citep{Brix:1999}.
This family of the Bernstein functions were studied by \citet{Aalen:1992} for survival analysis.
We here show that they can be also used for sparsity modeling.

It is easily seen that the Bernstein functions $\Phi_{\rho}$ for $\rho \leq  1$ satisfy the conditions:
$\Phi(0)=0$,
$\Phi'(0) =1$ and $(-1)^{k}\Phi^{(k+1)}(0)< \infty$ for $k \in \NB$, in Theorem \ref{thm:lp2} and Lemma~\ref{lem:lapexp} (see the appendix). Thus,
$\Phi_{\rho}$ for  $\rho  \leq  1$ have the properties given in Theorem \ref{thm:lp2} and Lemma~\ref{lem:lapexp}.
These properties show that when letting $s=|b|$, the Bernstein functions $\Phi(|b|)$ form nonconvex penalty functions.

The derivative of $\Phi_{\rho}(s)$ is defined by
\begin{equation} \label{eqn:deriv}
\Phi'_{\rho}(s) = \left\{\begin{array}{ll} \frac{1}{1 {+} s} & \mbox{if } \rho=0, \\
\big(1+ {(1{-}\rho)} s \big)^{-\frac{1}{1-\rho}}  & \mbox{if } \rho <1 \mbox{ and } \rho\neq 0, \\  \exp(- s) & \mbox{if } \rho=1. \end{array}
  \right.
\end{equation}
It  is also directly verified that $\Phi'_{0}(s) = \liml_{\rho\to 0} \Phi'_{\rho}(s)$ and $\Phi'_{1}(s) = \liml_{\rho\to 1-} \Phi'_{\rho}(s)$.
When $\rho \in [0, 1]$, we have $\liml_{s \to \infty} \frac{s \Phi'_{\rho}(s)}{\Phi_{\rho}(s)} =0$ (or $\liml_{s\rightarrow \infty} \frac{\Phi_{\rho}(s)}{\log(s)}<\infty$).
When  $\rho < 0$, we then have $\liml_{s \to \infty} \frac{s \Phi'_{\rho}(s)}{\Phi_{\rho}(s)} = \frac{\rho}{\rho{-}1} \in (0, 1)$.

\begin{proposition}\label{pro:33} Let $\Phi_{\rho}(s)$ on $[0, \infty)$ be defined in (\ref{eqn:first}) and $\Phi_{\rho, \alpha} (s)=\frac{\Phi_{\rho}(\alpha s)}{\Phi_{\rho}(\alpha )}  $ for $\alpha>0$. Then
\begin{enumerate}
\item[\emph{(a)}] If $-\infty<\rho_1< \rho_2\leq 1$ then $\Phi'_{\rho_1}(s)\geq \Phi'_{\rho_2}(s)$,  $\Phi_{\rho_1}(s)\geq \Phi_{\rho_2}(s)$, and $\zeta(\Phi_{\rho_1, \alpha}) \leq \zeta(\Phi_{\rho_2, \alpha} ) $;
\item[\emph{(b)}] $\liml_{\alpha \to \infty} \frac{ \Phi'_{\rho}( \alpha)}{\alpha^{\gamma-1}} = (1-\gamma)^{1-\gamma}$ where $\gamma=0$ if $\rho \in (0, 1]$ and $\gamma=\frac{\rho}{\rho{-}1}$ if $\rho \in (-\infty, 0]$; and
\[
\liml_{\alpha \to \infty}  \frac{\Phi_{\rho}(\alpha s)}{\Phi_{\rho}(\alpha )} = \left\{\begin{array}{ll} 1 & \mbox{if } \rho \in [0, 1], \\
s^{\frac{\rho}{\rho{-}1}} & \mbox{if } \rho \in (-\infty, 0). \end{array} \right.
\]
\end{enumerate}
\end{proposition}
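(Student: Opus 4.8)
The plan is to route everything through the two explicit closed forms: the derivative $\Phi'_{\rho}(s)=(1+(1-\rho)s)^{-1/(1-\rho)}$, which extends continuously to $\rho=0$ and $\rho=1$ (recovering $1/(1+s)$ and $e^{-s}$), and its antiderivative $\Phi_{\rho}$ normalized by $\Phi_{\rho}(0)=0$. For the first ordering in (a), I would fix $s>0$ and substitute $t=1-\rho>0$, so that $\rho\mapsto\Phi'_{\rho}(s)$ becomes $g(t)=(1+ts)^{-1/t}$ and the inequality $\rho_1<\rho_2$ translates into $t_1>t_2$. It then suffices to show $g$ is increasing in $t$. Writing $\log g(t)=-t^{-1}\log(1+ts)$,
\[
\frac{d}{dt}\log g(t)=\frac{1}{t^2}\Big[\log(1+ts)-\frac{ts}{1+ts}\Big],
\]
and the bracket is positive because $h(x)=\log(1+x)-x/(1+x)$ has $h(0)=0$ and $h'(x)=x/(1+x)^2>0$ for $x>0$. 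Hence $g'(t)>0$, giving $\Phi'_{\rho_1}(s)\ge\Phi'_{\rho_2}(s)$ for all $s\ge0$, with the boundary case $\rho_2=1$ (i.e.\ $t_2\to0^+$, $g\to e^{-s}$) handled by continuity.

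The other two orderings in (a) follow quickly. Since $\Phi_{\rho}(s)=\int_0^s\Phi'_{\rho}(u)\,du$ with $\Phi_{\rho}(0)=0$, integrating the derivative inequality over $[0,s]$ yields $\Phi_{\rho_1}(s)\ge\Phi_{\rho_2}(s)$. For the concavity comparison I would differentiate once more, $\Phi''_{\rho}(s)=-(1+(1-\rho)s)^{-1/(1-\rho)-1}$, so that $\Phi''_{\rho}(0)=-1$ \emph{independently of} $\rho$. Because $\Phi_{\rho,\alpha}$ is exactly the rescaling $\Phi_{\rho}(\alpha\,\cdot)/\Phi_{\rho}(\alpha)$, the identity from Remark~3 applies verbatim and gives $\zeta(\Phi_{\rho,\alpha})=-\alpha^2\Phi''_{\rho}(0)/\Phi_{\rho}(\alpha)=\alpha^2/\Phi_{\rho}(\alpha)$. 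The ordering $\Phi_{\rho_1}(\alpha)\ge\Phi_{\rho_2}(\alpha)$ just established then inverts to $\zeta(\Phi_{\rho_1,\alpha})\le\zeta(\Phi_{\rho_2,\alpha})$.

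For (b) I would compute both limits directly from the closed forms, though Theorem~\ref{thm:lp2} offers an alternative route for the second. For the first limit, dividing $\Phi'_{\rho}(\alpha)=(1+(1-\rho)\alpha)^{-1/(1-\rho)}$ by $\alpha^{-1/(1-\rho)}$ gives $(\alpha^{-1}+(1-\rho))^{-1/(1-\rho)}\to(1-\rho)^{-1/(1-\rho)}$; the remaining bookkeeping is to verify that $-1/(1-\rho)$ is the correct exponent $\gamma-1$ and that $(1-\rho)^{-1/(1-\rho)}=(1-\gamma)^{1-\gamma}$, both of which collapse to the single identity $1-\gamma=1/(1-\rho)$. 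For the second limit, when $\rho<0$ the leading term of $\Phi_{\rho}(s)$ is a positive constant times $s^{-\rho/(1-\rho)}$, so $\Phi_{\rho}(\alpha s)/\Phi_{\rho}(\alpha)\to s^{\rho/(\rho-1)}$; when $\rho\in(0,1]$ the function $\Phi_{\rho}$ is bounded (tending to $1/\rho$, and to $1$ at $\rho=1$), and for $\rho=0$ it equals $\log(1+s)$, so in every case $\Phi_{\rho}(\alpha s)/\Phi_{\rho}(\alpha)\to1$.

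The one place demanding genuine care, which I expect to be the main obstacle, is the bookkeeping of the index $\gamma$ across the three regimes $\rho<0$, $\rho\in(0,1)$, and $\rho\in\{0,1\}$. The subtlety is that the index controlling the \emph{first} limit is the index of regular variation of $\Phi'$, namely $\gamma-1=-1/(1-\rho)$, whereas the index controlling the \emph{second} limit is that of $\Phi$ itself; these coincide only when $\gamma\in(0,1)$, i.e.\ when $\rho<0$. For $\rho\in(0,1)$ the function $\Phi_{\rho}$ is bounded, so its regular-variation index is $0$ while $\Phi'_{\rho}$ still decays like the genuine power $\alpha^{-1/(1-\rho)}$ with $1/(1-\rho)>1$ — precisely the phenomenon noted in Remark~1 and Lemma~\ref{lem:lapexp}. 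I would therefore record the value of $\gamma$ separately for the $\Phi'$-asymptotics and for the $\Phi$-asymptotics, and dispose of $\rho=1$ (where $\Phi'_1=e^{-\alpha}$ decays faster than any power, so no finite nonzero power-law constant exists) as an explicit exceptional case checked by hand.
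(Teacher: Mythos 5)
Your proof of part (a) is correct and follows essentially the same route as the paper's: the paper substitutes $\omega=1/(1-\rho)$ and writes $\Phi'_{\rho}(s)=(1+s/\omega)^{-\omega}$, citing as well known that $\omega\mapsto(1+s/\omega)^{\omega}$ is increasing, and your substitution $t=1-\rho$ is the same change of variable up to inversion; the difference is that you actually supply the calculus behind the monotonicity (positivity of $\log(1+x)-x/(1+x)$), which the paper leaves implicit. The integration step and the reduction of the $\zeta$ comparison to the identity of Remark~3 via $\Phi''_{\rho}(0)=-1$ are exactly what the paper intends. For part (b) the paper omits the proof entirely, so your direct computation is the only argument on record; it is correct for $\rho\le 0$ and for the second displayed limit in all regimes (for $s>0$). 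More importantly, your closing caveat is not mere bookkeeping: as literally stated, the first limit in (b) fails for $\rho\in(0,1]$. With $\gamma=0$ the claim reads $\lim_{\alpha\to\infty}\alpha\,\Phi'_{\rho}(\alpha)=1$, but for $\rho\in(0,1)$ one has $\Phi'_{\rho}(\alpha)\asymp\alpha^{-1/(1-\rho)}$ with $1/(1-\rho)>1$, so the limit is $0$ (for LFR, $4\alpha/(\alpha+2)^2\to 0$), and for $\rho=1$ the decay is exponential. The identity $\gamma-1=-1/(1-\rho)$ holds only with $\gamma=\rho/(\rho-1)$, the regular-variation index of $\Phi'_{\rho}$, which agrees with the index of $\Phi_{\rho}$ only when $\rho\le 0$. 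Your proposed repair --- tracking the two indices separately and treating $\rho=1$ by hand --- is the correct one, and it identifies an imprecision in the proposition that the paper's omitted proof glosses over.
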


Proposition~\ref{pro:33}-(b) shows the property of regular variation for $\Phi_{\rho}$; that is, $\Phi_{\rho}$ varies slowly when $0 \leq \rho \leq 1$, while it
varies regularly with exponent $\rho/(\rho{-}1)$ when $\rho < 0$.
Thus, $\frac{\Phi_{\rho}(\alpha |b|)}{\Phi_{\rho}(\alpha )}$ for $\rho < 0$ approaches to the $\ell_{\rho/(\rho{-}1)}$-norm $\|b\|_{\rho/(\rho{-}1)}$
as $\alpha \to \infty$.

We list four special Bernstein functions in Table~\ref{tab:exam} by taking different $\rho$. Specifically,
these  penalties are  the kinetic energy plus (KEP) function, nonconvex \emph{log-penalty} (LOG),
nonconvex \emph{exponential-penalty} (EXP), and \emph{linear-fractional} (LFR) function, respectively. Figure~\ref{fig:berns} depicts these functions and their derivatives.
In Table~\ref{tab:exam} we also give the L\'{e}vy measures  corresponding to these functions.
Clearly, KEP gets a continuum of penalties from $\ell_{1/2}$ to the $\ell_1$, as varying $\alpha$
from $\infty$ to $0$~\citep{ZhangTR:2013}. But the LOG, EXP and LFR functions get the entire continuum of functions from $\ell_{0}$ to the $\ell_1$.

The LOG, EXP and LFR functions  have been applied in the literature~\citep{BradleyICML:1998,GaoAAAI:2011,WestonJMLR:2003,GemanPAMI:1992,NikolovaSIAM:2005,LvFan:2009}. \citet{LvFan:2009} also called LFR a smooth integration of counting and absolute deviation (SICA) penalty.  
In image processing and computer vision, these functions are usually also called \emph{potential functions}.
However, to the best of our  knowledge, there is no work to establish their connection with Bernstein functions.

\begin{figure}[!ht]
\centering
\subfigure[$\Phi_{\rho}(s)$ ]{\includegraphics[width=75mm,height=50mm]{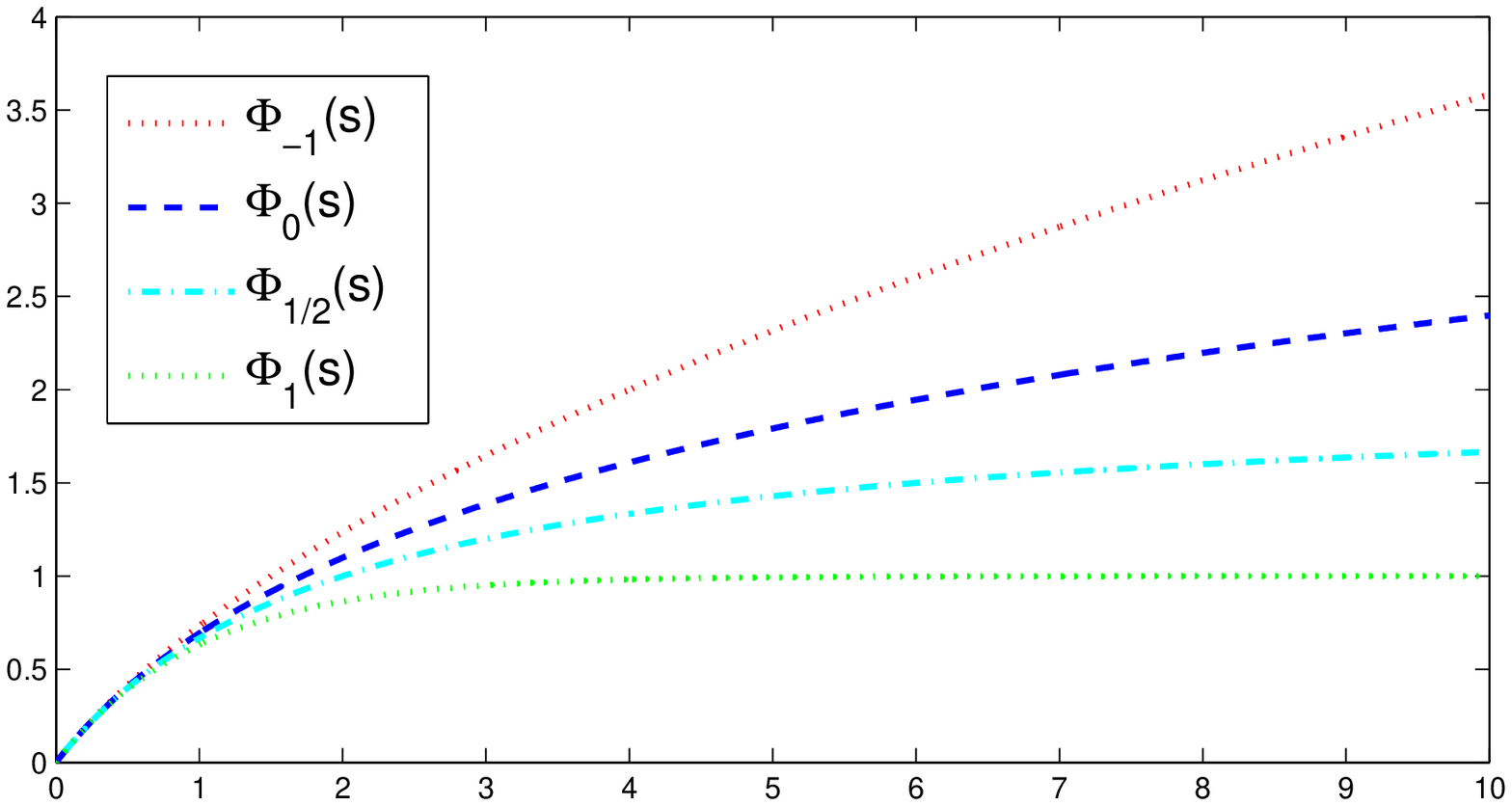}} \subfigure[$\Phi'_{\rho}(s)$]{\includegraphics[width=75mm,height=50mm]{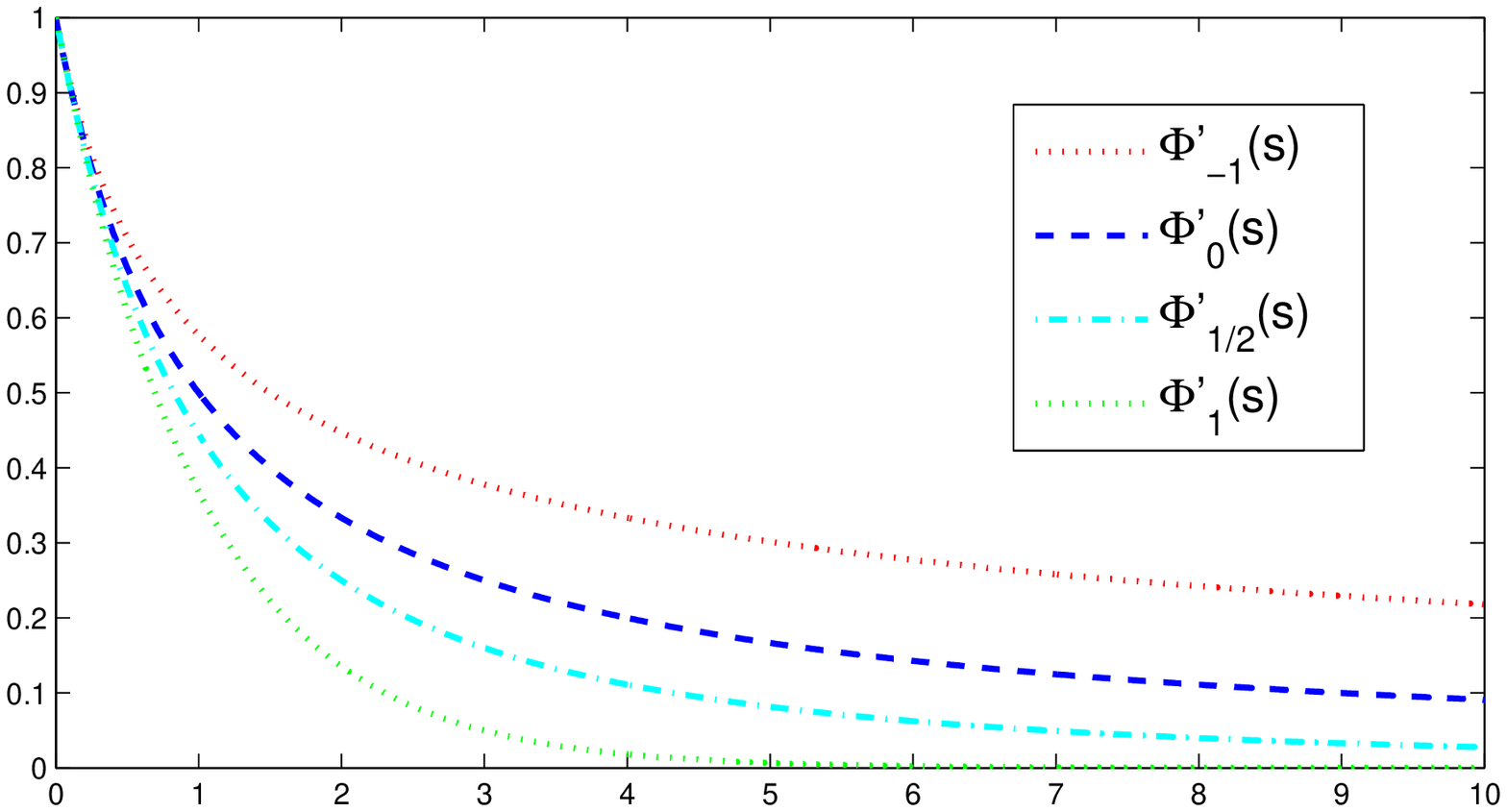}} \\
\caption{(a) The Bernstein functions $\Phi_{\rho}$ for $\rho=-1$, $\rho=0$, $\rho=\frac{1}{2}$ and $\rho=1$  corresponding to KEP, LOG, LFR and EXP.  (b) The corresponding derivatives $\Phi'_{\rho}$.}
\label{fig:berns}
\end{figure}

\begin{figure}[!ht]
\centering
 \subfigure[]{\includegraphics[width=100mm,height=70mm]{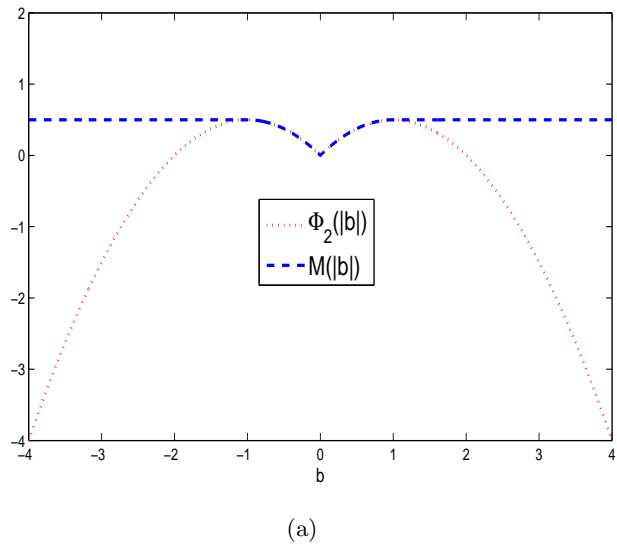}} \\
\caption{The Bernstein function $\Phi_{2}(|b|)$ and the MCP function $M(|b|)$.}
\label{fig:berns2}
\end{figure}

Finally, we note that the MCP function can be regarded as a capped version of  $\Phi_{2}$ (i.e.,  $\rho=2$).
Clearly,  $\Phi_{2}(s)$ is well-defined for $s\geq 0$ but no longer Bernstein, because $\Phi_{2}(s)$ is negative when $s> 2$.
Moreover,
it is decreasing when $s\geq 1$ (see Figure~\ref{fig:berns2}).  To make a concave penalty function from $\Phi_{2}$, we truncate
$\Phi_{2}(s)$ into $1/2$ whenever $s\geq 1$,  yielding the MCP function given in (\ref{eqn:mcp}).

\section{Sparse Estimation Based on Bernstein Penalty Functions}
\label{sec:sest}

We now study  mathematical  properties of  the sparse estimators based on Bernstein penalty functions.
These properties show that Bernstein penalty functions are suitable for use of a coordinate descent algorithm.

\subsection{Thresholding Operators}
\label{sec:threshold}

Let $\Phi$ be a Bernstein penalty function.
We  define a univariate penalized least squares problem as follows.
\begin{equation} \label{eqn:general}
J_{1}(b)\triangleq \frac{1}{2} (z- b)^2 +  {\lambda} \Phi(|b|),
\end{equation}
where $z=\x^T\y$. It has been established by \citet{Fan01}  that a good penalty should result in an estimator with three properties.
(a) ``Unbiasedness:" it is nearly unbiased when the true unknown parameter takes a large value in magnitude; (b) ``Sparsity:" there is a thresholding operator, which
automatically sets small estimated coefficients to zero; (c) ``Continuity:" it is continuous in $z$, which can avoid instability
in model computation and prediction.

It suffices for
the estimator obtained from (\ref{eqn:general}) to be nearly unbiased that $\Phi'(|b|) \rightarrow 0$ as $|b|\rightarrow \infty$.
The Bernstein penalty function satisfies the conditions
$\Phi(0+)=0$ and
$\liml_{s \to \infty} \Phi'(s) = 0$, so it can result in an unbiased sparse estimator.

\begin{theorem} \label{thm:sparsty} Let $\Phi$ be a nonzero Bernstein function  on $[0, \infty)$ such that
$\Phi(0)= 0$  and $\liml_{s \to \infty} \frac{\Phi(s)}{s}=\liml_{s \to \infty} \Phi'(s) = 0$.
Consider the penalized least squares problem in (\ref{eqn:general}).
\begin{enumerate}
\item[\emph{(i)}] If $\lambda \leq -\frac{1}{\Phi{''}(0)}$, then the resulting estimator  is defined as
\[
\hat{b} = S(z, \lambda) \triangleq  \left\{ \begin{array}{ll}
{{\sgn}(z)} \kappa(|z|) & \textrm{ if } |z| >
{\lambda} \Phi{'}(0), \\
0 & \textrm{ if } |z| \leq {\lambda}\Phi{'}(0),
\end{array} \right.
\]
where $\kappa(|z|)\in (0, |z|)$ is the unique positive root of
$b {+} {\lambda} \Phi'(b) {-} |z|=0$ in $b$.
\item[\emph{(ii)}] If $\lambda > -\frac{1}{\Phi{''}(0)}$, then the resulting estimator  is defined as
\[
\hat{b} = S(z, \lambda) \triangleq  \left\{ \begin{array}{ll}
{{\sgn}(z)} \kappa(|z|) & \textrm{ if } |z| >
s^* + {\lambda} \Phi'(s^*), \\
0 & \textrm{ if } |z|\leq s^* + {\lambda} \Phi'(s^*),
\end{array} \right.
\]
where $s^*>0$ is the unique root of $1 {+} {\lambda} \Phi{''}(s)=0$ and $\kappa(|z|)$
is the unique root of $b {+} {\lambda} \Phi'(b) {-} |z|=0$ on $(s^*, |z|)$.
\end{enumerate}
\end{theorem}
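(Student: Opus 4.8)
The plan is to reduce everything to a one-dimensional analysis of the derivative of $J_1$ on the half-line and to exploit the complete monotonicity of $\Phi'$. First I would use the symmetry of $J_1(b)=\frac12(z-b)^2+\lambda\Phi(|b|)$ under $(b,z)\mapsto(-b,-z)$ to assume $z=|z|\geq 0$; then for $z\geq 0$ any negative candidate $b$ satisfies $J_1(b)>J_1(|b|)$, so the minimiser is nonnegative and it suffices to minimise $h(b)\triangleq\frac12(z-b)^2+\lambda\Phi(b)$ over $b\geq 0$, the factor $\sgn(z)$ then following. On $(0,\infty)$ I record $h'(b)=g(b)-z$ with $g(b)\triangleq b+\lambda\Phi'(b)$ and $h''(b)=1+\lambda\Phi''(b)$. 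The Bernstein hypotheses supply the facts that drive the argument: $\Phi'\geq 0$ is completely monotone, so $\Phi''\leq 0$ and $\Phi'''\geq 0$; moreover $\Phi''(0)<0$ (established before Theorem~\ref{thm:lp2}) and $\liml_{b\to\infty}\Phi''(b)=0$ (from $\liml_{s\to\infty}\Phi'(s)=0$ together with convexity of $\Phi'$). Since a Bernstein function is real-analytic on $(0,\infty)$, $\Phi''$ cannot be flat on a subinterval without being globally constant (which is excluded), so $\Phi''$ is \emph{strictly} increasing, rising from $\Phi''(0)<0$ to $0$; equivalently $h''$ is strictly increasing from $h''(0)=1+\lambda\Phi''(0)$ up to $\liml_{b\to\infty}h''(b)=1$.

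For part (i), the hypothesis $\lambda\leq-1/\Phi''(0)$ is exactly $h''(0)=1+\lambda\Phi''(0)\geq 0$, so $h''\geq 0$ and $h$ is convex on $[0,\infty)$; equivalently $g$ is strictly increasing with $g(0)=\lambda\Phi'(0)$, $g(b)>b$, and $g(b)\to\infty$. I would then split on the sign of the one-sided derivative at the origin. If $z\leq\lambda\Phi'(0)$, the computation $\partial J_1(0)=[-z-\lambda\Phi'(0),\,-z+\lambda\Phi'(0)]\ni 0$ (using $\partial(\lambda\Phi(|b|))=\lambda\Phi'(|b|)\,\partial|b|$ from Theorem~\ref{thm:lapexp00}(c)) together with convexity identifies $\hat b=0$. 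If $z>\lambda\Phi'(0)$ then $h'(0^+)<0$ and, $g$ being a strictly increasing bijection of $[0,\infty)$ onto $[\lambda\Phi'(0),\infty)$, the equation $g(b)=z$ has a unique root $\kappa(z)$, the global minimiser, with $\kappa(z)\in(0,z)$ because $g(b)>b$. This is precisely $S(z,\lambda)$ in case (i).

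For part (ii), $\lambda>-1/\Phi''(0)$ means $h''(0)<0<\liml_{b\to\infty}h''(b)=1$, so by the intermediate value theorem and the strict monotonicity of $h''$ there is a unique $s^*>0$ with $1+\lambda\Phi''(s^*)=0$. Hence $h$ is strictly concave on $[0,s^*]$ and strictly convex on $[s^*,\infty)$, and $g$ decreases on $[0,s^*]$ and increases on $[s^*,\infty)$, attaining its global minimum $g(s^*)=s^*+\lambda\Phi'(s^*)$. If $z\leq g(s^*)$ then $h'(b)=g(b)-z\geq 0$ everywhere, $h$ is nondecreasing, and $\hat b=0$. If $z>g(s^*)$, then on the convex branch $[s^*,\infty)$ the strictly increasing $g$ meets $z$ at a unique $\kappa(z)\in(s^*,z)$, which is a local minimiser (any interior stationary point in $(0,s^*)$ being a local maximum by concavity); this is the value returned by the operator, again with $\kappa(z)<z$ since $g(b)>b$. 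This yields $S(z,\lambda)$ in case (ii).

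The main obstacle is the transition in case (ii). The existence and uniqueness of $s^*$ and of $\kappa(z)$ are clean once I lean on analyticity to rule out flat stretches of $g$. The genuinely delicate point is that for $z$ just above the threshold $g(s^*)$ the objective has two competing local minima, $b=0$ and $b=\kappa(z)$, and $\kappa(z)$ need not be the global minimiser: comparing $h(\kappa(z))$ with $h(0)=\frac12 z^2$ at $z=g(s^*)$ reduces to the sign of $-\tfrac12 (s^*)^2+\lambda\big(\Phi(s^*)-s^*\Phi'(s^*)\big)$, which is not automatically negative (one can check this already for $\Phi(s)=1-\exp(-s)$). I would therefore make explicit that the thresholding operator $S(\cdot,\lambda)$ is \emph{defined} by selecting the nonzero stationary point on the convex branch whenever it exists, which produces the hard-thresholding-type jump of size $s^*$ at $g(s^*)$ and underlies the nesting discussion of Section~\ref{sec:threshold}; clearly flagging this local-versus-global selection, rather than silently claiming global optimality, is the crux of a correct argument.
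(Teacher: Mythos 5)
Your proof follows the same route as the paper's: reduce to the half-line by symmetry, study $g(b)=b+\lambda\Phi'(b)$ through $g'(b)=1+\lambda\Phi''(b)$, split on the sign of $1+\lambda\Phi''(0)$, and in case (ii) obtain the unique minimizer $s^*$ of $g$ and the unique root $\kappa(|z|)$ of $g(b)=|z|$ on the convex branch $(s^*,|z|)$, discarding the root in $(0,s^*)$ as a local maximum of $J_1$. Case (i) is the same in both arguments (the paper checks $h(0)<0<h(|z|)$ and monotonicity of $h$ rather than invoking the subdifferential at $0$, which is cosmetic), and your appeal to analyticity to get strict monotonicity of $\Phi''$ only makes explicit something the paper assumes silently (it also follows from $\Phi'''(s)=\int u^3 e^{-su}\nu(du)>0$ for a nonzero L\'{e}vy measure).

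The issue you flag in case (ii) is not a weakness of your write-up but a genuine gap in the paper's own proof and, read as a statement about the global minimizer of $J_1$, in the theorem itself. The paper verifies only first- and second-order conditions at $\kappa(|z|)$ and never compares $J_1(\kappa(|z|))$ with $J_1(0)=\tfrac12 z^2$. Your computation is right: at the threshold $|z|=s^*+\lambda\Phi'(s^*)$ one has $\kappa(|z|)=s^*$ and
\[
J_1(s^*)-J_1(0)=-\tfrac12 (s^*)^2+\lambda\big(\Phi(s^*)-s^*\Phi'(s^*)\big),
\]
which for $\Phi(s)=1-e^{-s}$ and $\lambda>1$ (so $s^*=\log\lambda$) equals $\lambda-1-\log\lambda-\tfrac12(\log\lambda)^2>0$. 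Hence for $|z|$ in a right-neighborhood of $s^*+\lambda\Phi'(s^*)$ the global minimizer is still $0$, not $\kappa(|z|)$. The exact zero/nonzero threshold for the global minimizer is $t^*=\inf_{b>0}\{\tfrac{b}{2}+\tfrac{\lambda}{b}\Phi(b)\}=\inf_{b>0}\tfrac1b\int_0^b g(t)\,dt$, which always satisfies $t^*\geq g(s^*)$, reduces to $g(0)=\lambda\Phi'(0)$ in case (i) (so case (i) is unaffected), and is strictly larger than $g(s^*)$ in case (ii). So either the operator in (ii) must be understood, as you propose, as selecting the stationary point on the convex branch (only a local minimizer near the stated threshold), or the threshold must be replaced by $t^*$, at which the two local minima tie and the estimator jumps. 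Making this local-versus-global selection explicit is the one substantive point on which your argument improves on the paper's.
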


As we see earlier, we always have $\Phi'(0)>0$ and $\Phi{''}(0)<0$.
It is worth noting that when $\lambda \leq -\frac{1}{\Phi{''}(0)}$ the function $h(b)\triangleq b+ {\lambda} \Phi'(b) - |z|$
is increasing on $(0, |z|)$ and that when $\lambda > -\frac{1}{\Phi{''}(0)}$ it
is also increasing on $(s^*, |z|)$. Thus, we can employ the bisection method
to find the corresponding root $\kappa(|z|)$. 
We will see that an analytic solution for $\kappa(|z|)$
is available when $\Phi(s)$ is either of KEP, LOG and LFR. Therefore,
a coordinate descent algorithm is especially appropriate for
Bernstein penalty functions, which will be presented in Section~\ref{sec:cda}.

It suffices for the resulting estimator to be  sparse  that the minimum of
the function $|b|+ \lambda \Phi'(|b|)$ is positive. Moreover, a sufficient and necessary condition for ``continuity" is that
the minimum of
$|b|+ \lambda \Phi'(|b|)$ is attained at $0$. In our case, it follows from the proof of Theorem \ref{thm:sparsty}
that when $\lambda \leq -\frac{1}{\Phi{''}(0)}$, $|b|+ \lambda \Phi'(|b|)$ attains its minimum value ${\lambda}\Phi'(0)$ at $s^{*}=0$. Thus,
the resulting estimator  is sparse and continuous when $\lambda \leq -\frac{1}{\Phi{''}(0)}$.
In fact, the continuity can be also concluded directly from Theorem~\ref{thm:sparsty}-(i).
Specifically, when  $\lambda \leq -\frac{1}{\Phi{''}(0)}$,
we have $\kappa(\lambda \Phi'(0)) =0$ because  $0$ is the unique root of equation $b {+} {\lambda} \Phi'(b) {-} {\lambda} \Phi'(0)=0$.

Recall that if $\Phi(s)=s^{q}$ with $q \in (0, 1)$, we have $\liml_{s \to 0} \Phi'(s)=+\infty$  and
$\liml_{s \to 0} \Phi{''}(s)=-\infty$. This implies that $\lambda \leq -\frac{1}{\Phi{''}(0)}$
does  not hold. In other words, this  penalty  cannot result in a continuous solution.

In this paper we are especially concerned with the Bernstein penalty functions which satisfy the conditions
in Theorem~\ref{thm:lp2}.
In this case, since $-\infty<\Phi{''}(0)<0$ and $0<\Phi'(0)<\infty$,
such Bernstein penalties are able to result in a continuous sparse
solution. Consider the regular variation property of $\Phi(s)$ given in Theorem~\ref{thm:lp2}. We
let  $P(b) = {\Phi(\alpha |b|)}$ and $\lambda=\frac{\eta}{\Phi(\alpha)}$
where  $\eta$ and $\alpha$ are positive constants.
We now denote the thresholding operator  $S(z, \lambda)$ in Theorem~\ref{thm:sparsty} by $S_{\alpha}(z, \eta)$. As a direct corollary of Theorem~\ref{thm:sparsty}, we particularly have the following results.

\begin{corollary} \label{cor:threshold} Assume $\Phi'(0) =1$ and $\Phi{''}(0)>-\infty$. Let $P(b) = {\Phi(\alpha |b|)}$ and $\lambda=\frac{\eta}{\Phi(\alpha)}$ where $\alpha>0$ and $\eta>0$,
and let $S_{\alpha}(z, \eta)$ be the thresholding operator defined in Theorem~\ref{thm:sparsty}.
\begin{enumerate}
\item[\emph{(i)}] If $\eta \leq -\frac{\Phi(\alpha)}{\alpha^2 \Phi{''}(0)}$, then the resulting estimator  is defined as
\[
\hat{b} = S_{\alpha}(z, \eta) \triangleq  \left\{ \begin{array}{ll}
{{\sgn}(z)} \kappa(|z|) & \textrm{ if } |z| >
\frac{\alpha}{\Phi(\alpha)} \eta, \\
0 & \textrm{ if } |z| \leq \frac{\alpha}{\Phi(\alpha)} \eta,
\end{array} \right.
\]
where $\kappa(|z|)\in (0, |z|)$ is the unique positive root of
$b+ \frac{\eta \alpha}{\Phi(\alpha)} \Phi'(\alpha b) - |z|=0$ w.r.t.  $b$.
\item[\emph{(ii)}] If $\eta > - \frac{\Phi(\alpha)}{\alpha^2 \Phi{''}(0)}$, then the resulting estimator  is defined as
\[
\hat{b} = S_{\alpha}(z, \eta)  \triangleq  \left\{ \begin{array}{ll}
{{\sgn}(z)} \kappa(|z|) & \textrm{ if } |z| >
s^* + \frac{\alpha \Phi'(\alpha s^*)}{\Phi(\alpha)}  \eta, \\
0 & \textrm{ if } |z|\leq s^* + \frac{\alpha \Phi'(\alpha s^*)}{\Phi(\alpha)}  \eta,
\end{array} \right.
\]
where $s^*>0$ is the unique root of $1+ \frac{\eta  \alpha^2}{\Phi(\alpha)} \Phi{''}(\alpha s)=0$ and $\kappa(|z|)$ is the unique root of the equation $b+ \frac{\eta \alpha}{\Phi(\alpha)}   \Phi'(\alpha b) - |z|=0$ on $(s^*, |z|)$.
\end{enumerate}
\end{corollary}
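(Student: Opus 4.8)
The plan is to recognize that Corollary~\ref{cor:threshold} is nothing more than Theorem~\ref{thm:sparsty} applied to the rescaled penalty. Introduce the function $\Phi_{\alpha}(s)\triangleq\frac{\Phi(\alpha s)}{\Phi(\alpha)}$, exactly the one appearing in Theorem~\ref{thm:lp2}. Since $P(b)=\Phi(\alpha|b|)$ and $\lambda=\frac{\eta}{\Phi(\alpha)}$, the objective $\frac{1}{2}(z-b)^2+\lambda P(b)$ can be rewritten as
\[
\frac{1}{2}(z-b)^2+\eta\,\Phi_{\alpha}(|b|),
\]
which is precisely the univariate problem (\ref{eqn:general}) with the Bernstein penalty $\Phi$ replaced by $\Phi_{\alpha}$ and the tuning parameter $\lambda$ replaced by $\eta$. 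Thus $S_{\alpha}(z,\eta)$ is literally the operator $S(z,\eta)$ of Theorem~\ref{thm:sparsty} built from $\Phi_{\alpha}$, and the whole task reduces to checking the hypotheses for $\Phi_{\alpha}$ and translating the formulas back in terms of $\Phi$ and $\alpha$.

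First I would verify that $\Phi_{\alpha}$ satisfies all the assumptions of Theorem~\ref{thm:sparsty}. Because $\alpha>0$ and $\Phi(\alpha)>0$, the chain rule gives $(-1)^k\Phi_{\alpha}^{(k)}(s)=\frac{\alpha^{k}}{\Phi(\alpha)}(-1)^k\Phi^{(k)}(\alpha s)\le 0$ for $k\ge1$, so $\Phi_{\alpha}$ is again a nonzero Bernstein function; moreover $\Phi_{\alpha}(0)=0$, and the limits $\liml_{s\to\infty}\frac{\Phi_{\alpha}(s)}{s}=0$ and $\liml_{s\to\infty}\Phi_{\alpha}'(s)=0$ follow immediately from the corresponding properties of $\Phi$. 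The derivatives I will need are
\[
\Phi_{\alpha}'(s)=\frac{\alpha}{\Phi(\alpha)}\Phi'(\alpha s),\qquad \Phi_{\alpha}''(s)=\frac{\alpha^2}{\Phi(\alpha)}\Phi''(\alpha s),
\]
so that, using $\Phi'(0)=1$ and the finiteness and negativity of $\Phi''(0)$, one has $\Phi_{\alpha}'(0)=\frac{\alpha}{\Phi(\alpha)}>0$ and $\Phi_{\alpha}''(0)=\frac{\alpha^2}{\Phi(\alpha)}\Phi''(0)\in(-\infty,0)$. These are exactly the quantities that drive Theorem~\ref{thm:sparsty}.

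It then remains to apply Theorem~\ref{thm:sparsty} with $(\Phi_{\alpha},\eta)$ in place of $(\Phi,\lambda)$ and read off each piece. The dichotomy $\lambda\le-\frac{1}{\Phi''(0)}$ versus $\lambda>-\frac{1}{\Phi''(0)}$ becomes $\eta\le-\frac{1}{\Phi_{\alpha}''(0)}=-\frac{\Phi(\alpha)}{\alpha^2\Phi''(0)}$ versus its complement, matching the two cases of the corollary. In case (i) the threshold $\lambda\Phi'(0)$ becomes $\eta\,\Phi_{\alpha}'(0)=\frac{\alpha}{\Phi(\alpha)}\eta$, and the defining equation $b+\lambda\Phi'(b)-|z|=0$ becomes $b+\eta\,\Phi_{\alpha}'(b)-|z|=0$, i.e.\ $b+\frac{\eta\alpha}{\Phi(\alpha)}\Phi'(\alpha b)-|z|=0$. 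In case (ii) the equation $1+\lambda\Phi''(s)=0$ determining $s^*$ turns into $1+\frac{\eta\alpha^2}{\Phi(\alpha)}\Phi''(\alpha s)=0$, the threshold $s^*+\lambda\Phi'(s^*)$ into $s^*+\frac{\alpha\Phi'(\alpha s^*)}{\Phi(\alpha)}\eta$, and the root equation transforms as before. There is no genuine obstacle here: the proof is a pure reparametrization, and the only thing that requires care is the consistent bookkeeping of the chain-rule factors $\alpha$ and $\alpha^2$ together with the normalizing constant $\Phi(\alpha)$ when passing between the $(\Phi,\lambda)$ and $(\Phi_{\alpha},\eta)$ formulations.
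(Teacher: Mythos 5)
Your proposal is correct and follows exactly the route the paper intends: the paper presents this result as a ``direct corollary'' of Theorem~\ref{thm:sparsty} obtained by the reparametrization $\Phi\mapsto\Phi_{\alpha}=\Phi(\alpha\,\cdot)/\Phi(\alpha)$, $\lambda\mapsto\eta$, and offers no further proof. Your verification that $\Phi_{\alpha}$ inherits the hypotheses of Theorem~\ref{thm:sparsty} and your chain-rule bookkeeping of the factors $\alpha$, $\alpha^2$ and $\Phi(\alpha)$ are exactly the (omitted) details, and they are all correct.
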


\begin{proposition} \label{thm:nest} Assume  $\Phi'(0) =1$ and $\Phi{''}(0)>-\infty$.
Then
\begin{enumerate}
\item[\emph{(a)}]   $\frac{\alpha}{\Phi(\alpha)}$ is increasing  and $\frac{1}{\Phi(\alpha)}$ is decreasing, both in $\alpha$ on $(0, \infty)$. Moreover, $\frac{\alpha}{\Phi(\alpha)}> 1$, $\liml_{\alpha \to 0+} \frac{\alpha }{\Phi(\alpha)}=1$ and $\liml_{\alpha \to \infty} \frac{\alpha }{\Phi(\alpha)} = \infty$.
\item[\emph{(b)}] The root $\kappa(|z|)$ is strictly   increasing w.r.t.\ $|z|$.
\item[\emph{(c)}] When $\lambda< - 1/\Phi{''}(0)$, the function $\frac{1}{2} b^2 + \lambda \Phi(|b|)$ is strictly convex in $b \in \RB$. 
\end{enumerate}
\end{proposition}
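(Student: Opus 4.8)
The plan is to dispatch the three parts separately, relying throughout on the elementary calculus properties of a Bernstein function $\Phi$ established in Section~\ref{sec:lapexp}: $\Phi$ is $C^\infty$, increasing and concave with $\Phi(0)=0$, and under the present hypotheses $\Phi'(0)=1$ and $\Phi''(0)\in(-\infty,0)$. I will also use repeatedly that $\Phi''$ is nondecreasing, since for a Bernstein function $\Phi'''\geq 0$; in particular $\Phi''(s)\geq\Phi''(0)$ for all $s\geq 0$. Recall too (Remarks~1) that $\Phi'$ is completely monotone and a Laplace transform with $\Phi'(0)=1$, hence $\Phi'(s)>0$ for all $s$, so that $\Phi$ is strictly increasing.

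For part (a), strict monotonicity of $1/\Phi(\alpha)$ is immediate from $\Phi$ being strictly increasing. For $\alpha/\Phi(\alpha)$ I would differentiate, obtaining $\frac{d}{d\alpha}\frac{\alpha}{\Phi(\alpha)}=\frac{\Phi(\alpha)-\alpha\Phi'(\alpha)}{\Phi(\alpha)^2}$, and show the numerator is strictly positive. Writing $g(\alpha)=\Phi(\alpha)-\alpha\Phi'(\alpha)$, one has $g(0)=0$ and $g'(\alpha)=-\alpha\Phi''(\alpha)\geq 0$, with strict inequality for small $\alpha>0$ because $\Phi''(0)<0$ and $\Phi''$ is continuous; hence $g(\alpha)>0$ for $\alpha>0$ and $\alpha/\Phi(\alpha)$ is strictly increasing. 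The two limits follow from $\lim_{\alpha\to 0+}\Phi(\alpha)/\alpha=\Phi'(0)=1$ and from the standing assumption $\lim_{s\to\infty}\Phi(s)/s=0$; the bound $\frac{\alpha}{\Phi(\alpha)}>1$ is then immediate from strict monotonicity together with the limit $1$ at the origin.

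For part (b), I would regard $\kappa$ as defined implicitly by $H(b)\triangleq b+\lambda\Phi'(b)=|z|$ on the branch identified in Theorem~\ref{thm:sparsty}, where $H'(b)=1+\lambda\Phi''(b)>0$; this positivity is exactly what the remark following Theorem~\ref{thm:sparsty} records ($H$ increasing on $(0,|z|)$ in case~(i) and on $(s^*,|z|)$ in case~(ii)). The implicit function theorem then gives $\kappa'(|z|)=1/H'(\kappa(|z|))=1/\bigl(1+\lambda\Phi''(\kappa)\bigr)>0$, so $\kappa$ is strictly increasing in $|z|$.

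For part (c), set $q(b)\triangleq\frac12 b^2+\lambda\Phi(|b|)$. For $b\neq 0$ it is twice differentiable with $q''(b)=1+\lambda\Phi''(|b|)$, and since $\Phi''(|b|)\geq\Phi''(0)$ while $\lambda<-1/\Phi''(0)$ is equivalent to $1+\lambda\Phi''(0)>0$ (using $\Phi''(0)<0$), we get $q''(b)\geq 1+\lambda\Phi''(0)>0$; thus $q$ is strictly convex on each of $(0,\infty)$ and $(-\infty,0)$. The remaining issue is the origin: the one-sided derivatives are $q'(0+)=\lambda\Phi'(0)=\lambda$ and $q'(0-)=-\lambda$, so the subderivative jumps upward at $0$. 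The main obstacle I anticipate is precisely this gluing at the kink — one must argue that strict convexity on each open half-line together with the upward jump of the one-sided derivatives at $0$ upgrades to \emph{global} strict convexity on $\RB$ (for instance by checking the strict convexity inequality directly for pairs $x<0<y$), rather than yielding only convexity.
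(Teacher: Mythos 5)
Your proposal is correct, and on part (a) it follows essentially the same route as the paper: both compute $\bigl[\alpha/\Phi(\alpha)\bigr]'=\frac{\Phi(\alpha)-\alpha\Phi'(\alpha)}{\Phi(\alpha)^2}$ and show the numerator is nonnegative via $[\Phi(\alpha)-\alpha\Phi'(\alpha)]'=-\alpha\Phi''(\alpha)\geq 0$ (you state this derivative correctly; the paper's appendix drops the factor $\alpha$, a harmless typo), and both get the limits from $\Phi'(0)=1$ and $\Phi'(\infty)=0$ (equivalently $\Phi(s)/s\to 0$). The one genuine difference in (a) is the bound $\alpha/\Phi(\alpha)>1$: the paper gets it directly from the L\'{e}vy--Khintchine representation, $\Phi(\alpha)=\int_0^\infty(1-e^{-\alpha u})\,\nu(du)<\alpha\int_0^\infty u\,\nu(du)=\alpha\Phi'(0)=\alpha$, whereas you deduce it from strict monotonicity plus the limit $1$ at the origin; both are valid, the paper's being a one-line pointwise inequality and yours requiring the strictness of $g(\alpha)=\Phi(\alpha)-\alpha\Phi'(\alpha)>0$, which you do establish. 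For parts (b) and (c) the paper's appendix gives no explicit argument (it implicitly relies on the monotonicity of $h(b)=b+\lambda\Phi'(b)-|z|$ already recorded after Theorem~\ref{thm:sparsty}), so your implicit-function-theorem computation $\kappa'(|z|)=1/\bigl(1+\lambda\Phi''(\kappa)\bigr)>0$ and your second-derivative bound $q''(b)\geq 1+\lambda\Phi''(0)>0$ are welcome additions. The only soft spot is the gluing at the kink in (c), which you correctly identify: strict convexity on each half-line together with $q'(0-)=-\lambda<\lambda=q'(0+)$ does upgrade to global strict convexity (global convexity is the standard one-sided-derivative criterion, and equality in the convexity inequality for $x<0<y$ would force affineness on a subinterval of a half-line, contradicting strict convexity there), so the gap you flag closes by exactly the argument you sketch.
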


The Bernstein function $\Phi_{\rho}$ given in (\ref{eqn:first}) satisfies the conditions in Corollary~\ref{cor:threshold} and Proposition~\ref{thm:nest}. Proposition~\ref{thm:nest}-(a) and (c) implies that $\Phi(|b|)$ satisfies Assumption~1 made in 
\citet{LohWainwright:2013}.  
Recall that $\alpha$ controls sparseness of $\Phi(\alpha|b|)/\Phi(\alpha)$ as varying $\alpha$ from 0 to $\infty$.
It follows from Proposition~\ref{thm:nest} that $|z|\geq \eta$ due to $|z|\geq \frac{\eta \alpha}{\Phi(\alpha)}$. This implies that
the Bernstein function $\Phi(\alpha|b|)/\Phi(\alpha)$
has stronger sparseness  than the $\ell_1$-norm when $\eta \leq -\frac{\Phi(\alpha)}{\alpha^2 \Phi{''}(0)}$. Moreover, for a fixed $\eta$,
there is a strict nesting of
the shrinkage thresholding $\frac{\eta \alpha}{\Phi(\alpha)}$  as $\alpha$ increases.   Thus,
the Bernstein penalty to some extent satisfies the nesting property, a
desirable property for thresholding functions pointed out by~\citet{MazumderSparsenet:11}.

As we stated earlier,
when $\rho \in[0, 1]$ $\Phi_{\rho}$ gives a smooth homotopy between the $\ell_0$-norm and the $\ell_1$-norm. 
We now explore a connection of the thresholding operator  $S_{\alpha}(z, \eta)$ with the soft thresholding operator
based on the lasso
and the hard thresholding operator based on the $\ell_0$-norm.

\begin{theorem} \label{thm:0-1}
Let  $S_{\alpha}(z, \eta)$ be the thresholding operator defined in Corollary~\ref{cor:threshold}. Then
\[
\lim_{\alpha \to 0+}  S_{\alpha}(z, \eta) = \left\{ \begin{array}{ll}
{\sgn}(z) (|z| - \eta) & \textrm{ if } |z| >
{\eta}, \\
0 & \textrm{ if } |z| \leq {\eta}.
\end{array} \right.
\]
Furthermore, if $\liml_{\alpha \to \infty}  \frac{\alpha \Phi'(\alpha)} {\Phi(\alpha)} =0$  or
$\liml_{\alpha \to \infty} \frac{\Phi(\alpha)}{\log(\alpha)} < \infty$,
then
\[
\lim_{\alpha \to \infty}  S_{\alpha}(z, \eta) = \left\{ \begin{array}{ll}
z & \textrm{ if } |z| >
0, \\
0 & \textrm{ if } |z| \leq 0.
\end{array} \right.
\]
\end{theorem}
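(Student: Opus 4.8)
The plan is to treat the two one-sided limits separately, in each case first identifying which branch of Corollary~\ref{cor:threshold} is eventually in force and locating the shrinkage threshold, and then passing to the limit in the defining equation $b + \eta\frac{\alpha}{\Phi(\alpha)}\Phi'(\alpha b) - |z| = 0$ for the nonzero root $\kappa(|z|)$. Throughout I would use that $\Phi'$ is continuous with $\Phi'(0)=1$, that $0 > \Phi''(0) > -\infty$, and the regular-variation and monotonicity facts already established in Theorem~\ref{thm:lp2} and Proposition~\ref{thm:nest}. It is convenient to record that in both branches the threshold is the single quantity $T(\alpha) \triangleq \min_{b\ge 0}\big[b + \eta\frac{\alpha}{\Phi(\alpha)}\Phi'(\alpha b)\big]$, attained at $b=0$ in branch (i) (giving $\frac{\alpha}{\Phi(\alpha)}\eta$) and at the interior minimizer $b=s^*$ in branch (ii).

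For $\alpha \to 0+$, I would first note that since $\Phi(\alpha) = \alpha + \tfrac12\Phi''(0)\alpha^2 + o(\alpha^2)$, we have $-\Phi(\alpha)/(\alpha^2\Phi''(0)) \to +\infty$, so for every fixed $\eta>0$ the condition of branch (i) holds once $\alpha$ is small; hence $T(\alpha)=\frac{\alpha}{\Phi(\alpha)}\eta \to \eta$ by Proposition~\ref{thm:nest}(a). For $|z|\le\eta$ this gives $S_\alpha(z,\eta)=0$ eventually. For $|z|>\eta$ I would pass to the limit in $b + \eta\frac{\alpha}{\Phi(\alpha)}\Phi'(\alpha b) = |z|$: since $\frac{\alpha}{\Phi(\alpha)}\to 1$ and $\Phi'(\alpha b)\to\Phi'(0)=1$ uniformly on compacts, the left-hand side converges uniformly, and monotonically in $b$, to $b+\eta$, so its root $\kappa$ converges to the root $|z|-\eta$ of $b+\eta=|z|$. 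Thus $S_\alpha(z,\eta)=\sgn(z)\kappa \to \sgn(z)(|z|-\eta)$, which is exactly the soft-thresholding rule.

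For $\alpha \to \infty$ under $\gamma=0$, the crux is a single auxiliary limit: for each fixed $b_0>0$,
\[
\frac{\alpha\,\Phi'(\alpha b_0)}{\Phi(\alpha)} = \frac{1}{b_0}\cdot\frac{(\alpha b_0)\Phi'(\alpha b_0)}{\Phi(\alpha b_0)}\cdot\frac{\Phi(\alpha b_0)}{\Phi(\alpha)} \longrightarrow \frac{1}{b_0}\cdot\gamma\cdot b_0^{\gamma}=0,
\]
where the middle factor tends to $\gamma=0$ by the definition of $\gamma$ and the last factor tends to $b_0^{\gamma}=1$ by the slow variation in Theorem~\ref{thm:lp2}; the stated hypotheses $\lim\frac{\alpha\Phi'(\alpha)}{\Phi(\alpha)}=0$ or $\lim\frac{\Phi(\alpha)}{\log\alpha}<\infty$ are precisely the conditions guaranteeing $\gamma=0$ and slow variation. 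With this in hand I would squeeze $0 \le T(\alpha) \le b_0 + \eta\frac{\alpha\Phi'(\alpha b_0)}{\Phi(\alpha)}$, let $\alpha\to\infty$ and then $b_0\to 0$, to conclude $T(\alpha)\to 0$; moreover $-\Phi(\alpha)/(\alpha^2\Phi''(0))\to 0$ (a fortiori from the standing assumption $\lim_{s\to\infty}\Phi(s)/s=0$), so branch (ii) is eventually in force, and $0 \le s^* \le T(\alpha)$ then forces $s^*\to 0$.

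It remains to show $\kappa(|z|)\to|z|$ for $z\ne 0$ (the case $z=0$ is immediate, as $S_\alpha=0$ there). The apparent danger is that $\kappa$ might collapse to $0$ rather than rise to $|z|$, so I would argue via the monotonicity of $b\mapsto b+\eta\frac{\alpha}{\Phi(\alpha)}\Phi'(\alpha b)$ on $(s^*,|z|)$: for any fixed $b_1\in(0,|z|)$ the value at $b_1$ tends to $b_1<|z|$ by the auxiliary limit, while $s^*\to0$ places $b_1$ inside $(s^*,|z|)$ for large $\alpha$; since the map is increasing and equals $|z|$ at $\kappa$, this yields $b_1<\kappa<|z|$, hence $\liminf_\alpha\kappa\ge b_1$ for every $b_1<|z|$ and therefore $\kappa\to|z|$. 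Consequently $S_\alpha(z,\eta)\to\sgn(z)|z|=z$, the hard-thresholding rule. I expect the main obstacle to be exactly this last step together with the auxiliary limit: ruling out $\kappa\to0$ requires combining $s^*\to0$, the monotonicity of the defining map, and the regular-variation estimate, rather than a naive pointwise passage to the limit.
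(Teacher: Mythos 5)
Your proposal is correct, and for the $\alpha \to 0+$ limit it follows essentially the paper's own route (identify that branch (i) is eventually in force, note $\frac{\alpha}{\Phi(\alpha)} \to 1$ and $\Phi'(\alpha b) \to \Phi'(0)=1$, and pass to the limit in the defining equation to recover the soft threshold). For the $\alpha \to \infty$ limit, however, your argument is genuinely different from, and tighter than, the paper's. The paper writes $s^* = \frac{1}{\alpha}(\Phi'')^{-1}\bigl(-\Phi(\alpha)/(\eta\alpha^2)\bigr)$, shows $s^* \to 0$ by a L'H\^opital-type manipulation followed by a contradiction argument on the hypothesis $\lim s^* = c \in (0,|z|]$, and then concludes $\kappa(|z|) \to |z|$ by passing to the limit in $\kappa + \frac{\eta\alpha}{\Phi(\alpha)}\Phi'(\alpha\kappa) = |z|$ --- a step that tacitly needs $\frac{\eta\alpha}{\Phi(\alpha)}\Phi'(\alpha\kappa) \to 0$, which is not obvious unless one already knows $\kappa$ stays away from the origin; this is exactly the circularity you flag. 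Your route avoids both issues: the factorization $\frac{\alpha\Phi'(\alpha b_0)}{\Phi(\alpha)} = \frac{1}{b_0}\cdot\frac{(\alpha b_0)\Phi'(\alpha b_0)}{\Phi(\alpha b_0)}\cdot\frac{\Phi(\alpha b_0)}{\Phi(\alpha)}$ reduces everything to $\gamma=0$ plus slow variation (both supplied by Theorem~\ref{thm:lp2} and Lemmas~\ref{lem:01}--\ref{lem:03} under the stated hypotheses); the sandwich $0 \le s^* \le T(\alpha) \le b_0 + \eta\frac{\alpha\Phi'(\alpha b_0)}{\Phi(\alpha)}$ gives $s^* \to 0$ without inverting $\Phi''$; and the monotonicity of $b \mapsto b + \eta\frac{\alpha}{\Phi(\alpha)}\Phi'(\alpha b)$ on $(s^*,|z|)$ pins $\kappa$ in $(b_1,|z|)$ for every fixed $b_1 < |z|$, which rigorously rules out $\kappa$ collapsing to $0$. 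What the paper's approach buys is an explicit asymptotic description of $s^*$ itself; what yours buys is a self-contained squeeze argument that closes the gap in the final step of the published proof.
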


In the limiting case of $\alpha \to 0$, Theorem~\ref{thm:0-1} shows that the thresholding function $S_{\alpha}(z, \eta)$
approaches the soft thresholding function
$\sgn(z) (|z|-\eta)_{+}$. However,
as $\alpha \to \infty$, the limiting  solution does not fully agree with the hard thresholding function, which is defined as
$z \mathrm{I} (|z| \geq \sqrt{2 \eta})$.

Let us return the concrete Bernstein functions in Table~\ref{tab:exam}.
We are especially
interested in the KEP, LOG and LFR functions, because
there are analytic solutions for $\kappa(|z|)$ based on them.
Corresponding to  LOG and LFR,  $\kappa(|z|)$ are respectively
\begin{equation} \label{eqn:logan}
\kappa(|z|) =\frac{\alpha|z|-1 + \sqrt{(1+\alpha |z|)^2- {4 \lambda \alpha^2} }}{2 \alpha}
\end{equation}
  and
\begin{equation} \label{eqn:lfran}
\kappa(|z|) = \frac{2(\alpha |z| {+} 2)}{3 {\alpha}}  \cos\Big[\frac{1}{3}
\arccos\big( 1{-}{\lambda \alpha^2 } (\frac{3}{\alpha|z| {+} 2})^{3}  \big) \Big] {+} \frac{\alpha |z| {+} 2}{3 {\alpha}} {-}
\frac{2}{\alpha}.
\end{equation}
The derivation can be obtained by using direct algebraic computations.
We here omit the derivation details.
As for KEP, $\kappa(|z|)$ was derived by
\citet{ZhangTR:2013}.  That is,
\[
\kappa(|z|) = \frac{4(2\alpha |z| {+} 1)}{3}\cos^2\Big[\frac{1}{3 \alpha}
\arccos\big( {-} { \lambda \alpha^2}  (\frac{3}{2\alpha|z| {+} 1})^{\frac{3}{2}}  \big) \Big] {-} \frac{1}{\alpha}.
\]

\subsection{Coordinate Descent Algorithm for the Penalized Regression Problem}
\label{sec:cda}

Based on the discussion in the previous subsection, the Bernstein penalty function is suitable
for the coordinate descent algorithm under the regression setting where $L(\b; \X, \y) = \frac{1}{2}\|\y -\X \b \|_2^2$. Specifically,
we give the coordinate descent procedure in Algorithm~\ref{alg:coord}.
If the LOG and LFR functions are used, the corresponding  thresholding  operators have the analytic forms in (\ref{eqn:logan})
and (\ref{eqn:lfran}).
Otherwise, we employ the bisection method for finding the root $\kappa(|z|)$.
The method is also very efficient.

When $\lambda \leq -\frac{1}{\alpha^2 \Phi{''}(0)}$ (or $\lambda > -\frac{1}{\alpha^2\Phi{''}(0)}$), we can obtain
that  $|\hat{b}|\leq |z|$ always holds. The
objective function $J_{1}(b)$ in (\ref{eqn:general}) is strictly convex in $b$ whenever $\lambda \leq -\frac{1}{\alpha^2 \Phi{''}(0)}$.
Moreover, according to Theorem~\ref{thm:nest}, the estimator
$\hat{b}$ in both the cases is strictly increasing w.r.t.\ $|z|$.
As we see, $P(b)\triangleq  \Phi(\alpha |b|)$ satisfies $P(b)=P(-b)$. Moreover, $P'(b)$ is positive and uniformly
bounded  on $[0, \infty)$, and $\inf_{b} \;  \lambda P{''}(b) >-1$ on $[0, \infty)$ when $\lambda < - \frac{1}{\alpha^2 \Phi{''}(0)}$. Thus,
the algorithm shares the same  convergence property as in \citet{MazumderSparsenet:11} (see Theorem 4 therein).
It is worth noting that Theorem 4 of \citet{MazumderSparsenet:11} requires the second-order derivative $P{''}(b)$ on $[0, \infty)$ to exist.
However,  for the MCP function $M$ defined in (\ref{eqn:mcp}) its second-order derivative at $|b|=1$ does not exist.
In contrast,
our used Bernstein penalty function meets this requirement.
In Section~\ref{sec:convergence} we will give a global  convergence analysis based on the Kurdyka-\L{}ojasiewicz property.

\begin{algorithm}[!ht]
   \caption{Coordinate descent algorithm for the penalized regression problem}
   \label{alg:coord}
\begin{algorithmic}
   \STATE {\bfseries Input:} $\{\x_i, y_i\}_{i=1}^n$ where each column of $\X=[\x_i, \ldots, \x_n]^T$ is standardized
   to have mean 0 and length 1,
   a grid of increasing values $\Lambda=\{\eta_1, \ldots, \eta_L\}$, a grid of decreasing values $\Gamma=\{\alpha_1, \ldots, \alpha_K\}$
   where $\alpha_K$ indexes the Lasso penalty. Set $\hat{\b}_{\alpha_K, \eta_{L+1}}=0$.
   \FOR{each value of $l \in \{L, L-1, \ldots, 1\}$ }
   \STATE Initialize  $\tilde{\b} = \hat{\b}_{\alpha_K, \eta_{l+1}}$;
    \FOR{each value of $k \in \{K, K-1, \ldots, 1\}$ }
    \IF{$\eta_l \leq - \frac{\Phi(\alpha_k)}{\alpha_k^2 \Phi{''}(0)}$ }
     \STATE Cycle through the following one-at-a-time updates
      \[\tilde{b}_j = S_{\alpha_{k}}\Big(\sum_{i=1}^n(y_i- z_{i}^j)x_{ij}, \eta_l\Big), \quad j=1, \ldots, p
       \]
      where $z_i^j=\sum_{k\neq j} x_{ik} \tilde{b}_k$, until the updates converge to $\b^{\ast}$;
   \STATE $\hat{\b}_{\alpha_k, \eta_l} \leftarrow \b^{\ast}$.
   \ENDIF
   \ENDFOR
   \STATE Increment $k$;
   \ENDFOR
   \STATE Decrement $l$;
   \STATE {\bfseries Output:} Return the two-dimensional solution $\hat{\b}_{\alpha, \eta}$ for $(\alpha, \eta) \in \Lambda{\times}\Gamma$.
\end{algorithmic}
\end{algorithm}

\subsection{Extension to Classification and Robust Regression  Problems}

We now consider the classification problem in which the loss function is defined as $L(\b;  \X, \y)= \sum_{i=1}^n \log(1+ \exp(-y_i \b^T \x_i))$ and the penalty function is still defined as  Bernstein function $\Phi(\alpha |b|)$.
\citet{BrehenyAAS:2010} suggested that the corresponding minimization problem is approached by first obtaining a quadratic approximation to the loss function $L(\b;  \X, \y)$ based on a Taylor series expansion about the current iterative value of  $\b$.
That is,
\begin{align*}
\b^{(t+1)} = & \argmin_{\b} \Big\{L(\b^{(t)};  \X, \y) + \langle \nabla L(\b^{(t)};  \X, \y), \b -\b^{(t)} \rangle  \\
& \quad \quad + \frac{1}{2}(\b -\b^{(t)})^T \nabla^2
L(\b^{(t)};  \X, \y) (\b -\b^{(t)}) + \lambda_n \sum_{j=1}^p \Phi(\alpha |b_j|)  \Big\}.
\end{align*}

Alternatively, we resort to a proximal alternating linearized minimization (PALM) procedure to
solve the minimization problem.
Specifically,  the procedure chooses variables $b_1, \cdots, b_p$ in cyclic order at each time. Let $L_j^{(t)}(b_j)=L(\b_{-j}^{(t)}; \X, \y)$
where $\b_{-j}^{(t)}=(b_1^{(t{+}1)}, \ldots, b_{j-1}^{(t{+}1)}, b_j, b_{j+1}^{(t)}, \ldots, b_{p}^{(t)})^T$.
When optimizing variable $b_j$ with the rest variables fixed, we use a linear approximation of $L_j$ with a proximal regularization term.
That is,
\begin{equation}\label{eq:app}
 b_j^{(t+1)}=\argmin_{b_j} \Big\{L^{(t)}_j(b_j^{(t)}) +  \nabla_j L_j^{(t)}(b_j^{(t)}) (b_j - b_j^{(t)}) +
 \frac{\nu_j^{(t)}}{2} (b_j - b_j^{(t)})^2 + \lambda_n  \Phi(\alpha |b_j|)\Big\}.
\end{equation}
Typically, the optimal solution can be represented as $b^{(t+1)}_{j}=\mathsf{Prox}_{\nu_j^{(t)}}^{P}\big(b_j^{(t)}-\frac{1}{\nu_j^{(t)}}\nabla L_j^{(t)}(b_j^{(t)}) \big)$.
The proximal operator is defined as
\[
   \mathsf{Prox}_{\nu}^{g}(u) \triangleq \argmin_{x} \Big\{\frac{\nu}{2}\|x-u\|^2+g(x)\Big\}.
\]
We summary the whole procedure in Algorithm~\ref{alg:palm}.

Algorithm~\ref{alg:palm} can be also used to solve a  robust regression problem regularized by the Bernstein function.  
The loss function is then defined by
\[
L(\b;  \X, \y)= \sum_{i=1}^n =  L_{\delta}(y_i- \b^T \x_i), 
\] 
where $L_{\delta}(y_i- \b^T \x_i)$ is the Huber loss  as  in \eqref{eqn:huber}.  

\begin{algorithm}[!ht]
   \caption{PALM  for the penalized classification problem}
   \label{alg:palm}
\begin{algorithmic}
   \STATE {\bfseries Initialization:} set the initial value $\b_{(0)}$.
    \FOR{$t =0, 1, \ldots$}
     \FOR{$j =1, 2, \ldots,  p$}
    \STATE
    \[
    b^{(t+1)}_{j} = \mathsf{Prox}_{\nu_j^{(t)}}^{P} \big(b_j^{(t)}-\frac{1}{\nu_j^{(t)}}\nabla L_j^{(t)}(b_j^{(t)}) \big).
    \]
      \ENDFOR
   \IF{ stopping criterion is met}
      \STATE  Return  $\b^{(t)}$.
    \ENDIF
 \ENDFOR
\end{algorithmic}
\end{algorithm}

\section{Convergence Analysis}
\label{sec:convergence}

In this section we present the global convergence analysis of the previous coordinate descent  and  PALM procedures in Algorithms~\ref{alg:coord} and \ref{alg:palm}.
In particular, we consider the following optimization problem
\[
\min_{\b} \; F(\b) \triangleq L(\b; \X, \y) + \lambda_n \sum_{j=1}^p P(b_j),
\]
where  $P(b_j)= \Phi(\alpha |b_j|)$. We further make the following assumptions:

\begin{description}
\item[Assumption 1] In Algorithm~\ref{alg:palm}, assume that $0< m_0 < \nu_j^{(t)} < M_0< \infty$ for every $j$ and $t$.
\item[Assumption 2] $L_j^{(t)}$ is strongly convex with modulus $0<-\lambda_n \alpha^2 \Phi''(0) <\gamma_j^{(t)}< \gamma_M < \infty$, namely,
\[
L_j^{(t)}(\u) - L_j^{(t)}(\v) \geq  \langle \nabla L_j^{(t)}(\v), \u-\v \rangle + \frac{\gamma_j^{(t)}}{2} \|\u-\v\|^2;
\]
and $\nabla L_j^{(t)}$ is Lipschitz continuous. 
\end{description}
Notice that both the logistic loss function and the least squares loss function meet Assumption~2~\citep{XuYin}.
The Huber loss function also salsifies Assumption~2. Thus, the following convergence analysis applies to the case that the loss function is defined as  the Huber loss function in \eqref{eqn:huber}.

Our convergence analysis mainly  includes three steps.  First,   we show  the sequences $\{F(\b^{(t)}): t \in \NB \}$ generated by the algorithms have the sufficient decrease property,  and hence establish  the square summable result $\sum_{t=0}^{\infty}\|\b^{(t+1)}- \b^{(t)} \|^2< + \infty$  in Theorem~\ref{thm:decrease}. 
Second,  based on the fact that the Bernstein penalty function has the  Kurdyka-\L{}ojasiewicz property,   in Theorem~\ref{thm:global} we improve the result to   $\sum_{t=0}^{\infty}\|\b^{(t+1)}- \b^{(t)} \| < + \infty$, which implies that the generated sequence $\{\b^{(t)}: t \in \NB \}$ is a Cauchy sequence.  Consequently,  it converges to a critical point of the objective function $F$. Third, noting that the Bernstein penalty function satisfies Condition~1 of \cite{LvFan:2009}, as a direct corollary of Theorem~1 of  \cite{LvFan:2009,FanLv:2011}, we prove that  $\{\b^{(t)}: t \in \NB \}$ converges to a strict local minimizer of $F$ under certain regularity conditions.

\begin{theorem} \label{thm:decrease}
Suppose Assumption 1  holds for Algorithm~\ref{alg:palm}, or Assumption 2 holds for Algorithm~\ref{alg:coord}.
Let the sequence $\left\{\b^{(t)}: {t \in \mathbb N} \right\}$ be generated by Algorithm~\ref{alg:coord} or Algorithm~\ref{alg:palm}. Then we have the following properties:
\begin{enumerate}
\item[\emph{(i)}] \emph{\bf [Sufficient decrease property]} The generated sequence $\left\{F(\b^{(t)}): {t \in \mathbb N}\right\}$ is nonincreasing; particularly,
\[
F(\b^{(t)})- F(\b^{(t+1)}) \geq \frac{C_0}{2}\|\b^{(t)} - \b^{(t+1)}\|^2, \quad \forall \; t \geq 0,
    \]
where $C_0$ is some positive constant.
\item[\emph{(ii)}] \emph{\bf [Square summable  property]}
\[\sum_{t=0}^{\infty}\|\b^{(t+1)}- \b^{(t)} \|^2< + \infty, \]
which implies $\lim_{t \to\infty}\|\b^{(t+1)}- \b^{(t)}\|=0$.
\item[\emph{(iii)}]  \emph{\bf [Subgradient lower bound for the iterative gap]} There exists a positive constant $C_1$ such that for $\w^{(t+1)} \in \partial F(\b^{(t+1)})$, 
\begin{equation}  \label{eqn:subinequality}
\|\w^{(t+1)}\| \leq C_1 \| \b^{(t+1)} - \b^{(t)} \|, \; \forall \; t=0, 1, \ldots
\end{equation}
\end{enumerate}
\end{theorem}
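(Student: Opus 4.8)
The plan is to verify the three properties through the standard template for Kurdyka--\L{}ojasiewicz-based convergence, treating Algorithm~\ref{alg:coord} (under Assumption~2) and Algorithm~\ref{alg:palm} (under Assumption~1) in parallel, since both are Gauss--Seidel (cyclic, one-coordinate-at-a-time) schemes whose per-coordinate subproblems differ only in whether the smooth part is minimized exactly or through a linearized proximal surrogate. Throughout I write $F(\b) = L(\b; \X, \y) + \lambda_n\sum_{j=1}^p P(b_j)$ with $P(b_j) = \Phi(\alpha|b_j|)$, and I use repeatedly that $\Phi'$ is completely monotone, so $\Phi''$ is nondecreasing and hence $\Phi''(s) \geq \Phi''(0) > -\infty$ for all $s \geq 0$.

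For part (i) I would argue coordinatewise and then sum. For Algorithm~\ref{alg:coord}, the $j$-th subproblem minimizes $g_j(b) \triangleq L_j^{(t)}(b) + \lambda_n\Phi(\alpha|b|)$; I would show $g_j$ is strongly convex with modulus $\mu_j^{(t)} \triangleq \gamma_j^{(t)} + \lambda_n\alpha^2\Phi''(0)$, which is strictly positive by Assumption~2. Indeed $L_j^{(t)} - \tfrac{\gamma_j^{(t)}}{2}(\cdot)^2$ is convex, and $\Phi(\alpha|\cdot|) - \tfrac{\alpha^2\Phi''(0)}{2}(\cdot)^2$ is convex because its second derivative $\alpha^2[\Phi''(\alpha|b|) - \Phi''(0)] \geq 0$ off the origin and the kink at $0$ is upward (cf.\ Proposition~\ref{thm:nest}-(c)). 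Since $b_j^{(t+1)}$ is the exact minimizer, $0\in\partial g_j(b_j^{(t+1)})$, so strong convexity gives $g_j(b_j^{(t)}) - g_j(b_j^{(t+1)}) \geq \tfrac{\mu_j^{(t)}}{2}(b_j^{(t+1)} - b_j^{(t)})^2$. For Algorithm~\ref{alg:palm}, I would instead combine the proximal-minimization inequality at $b_j^{(t+1)}$ (compared against the feasible point $b_j^{(t)}$) with the descent lemma for the Lipschitz gradient of $L_j^{(t)}$, obtaining a decrease of the form $\tfrac{1}{2}(\nu_j^{(t)} - L_{\nabla})(b_j^{(t+1)} - b_j^{(t)})^2$, which is a positive multiple of the squared increment once $m_0$ exceeds the gradient Lipschitz modulus. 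In either case, summing the per-coordinate decreases over a full sweep $j = 1, \dots, p$ and using separability of the penalty yields $F(\b^{(t)}) - F(\b^{(t+1)}) \geq \tfrac{C_0}{2}\|\b^{(t)} - \b^{(t+1)}\|^2$ with $C_0 = \inf_{j,t}\mu_j^{(t)} > 0$ (resp.\ $C_0 = m_0 - L_{\nabla}$), which also gives monotonicity.

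Part (ii) is then immediate: telescoping the inequality of part (i) over $t = 0, \dots, T$ gives $\tfrac{C_0}{2}\sum_{t=0}^{T}\|\b^{(t)} - \b^{(t+1)}\|^2 \leq F(\b^{(0)}) - F(\b^{(T+1)})$, and since $F$ is bounded below (the loss is bounded below and the penalty is nonnegative), letting $T\to\infty$ yields the square-summable bound and hence $\|\b^{(t+1)} - \b^{(t)}\|\to 0$.

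Part (iii) is where I expect the real work. I would construct an explicit $\w^{(t+1)}\in\partial F(\b^{(t+1)})$ from the first-order optimality conditions of the coordinate subproblems and then bound each component by the increment. By the sum rule for limiting subdifferentials of a $C^1$ function plus a separable continuous penalty (Theorem~\ref{thm:lapexp00}-(c) identifies $\partial P$), any selection $w_j^{(t+1)} = \nabla_j L(\b^{(t+1)}) + \lambda_n\xi_j$ with $\xi_j\in\partial P(b_j^{(t+1)})$ lies in $\partial F(\b^{(t+1)})$. The subproblem optimality condition pins down $\xi_j$: for Algorithm~\ref{alg:palm} it gives $\lambda_n\xi_j = -\nabla_j L_j^{(t)}(b_j^{(t)}) - \nu_j^{(t)}(b_j^{(t+1)} - b_j^{(t)})$, so that $w_j^{(t+1)} = [\nabla_j L(\b^{(t+1)}) - \nabla_j L_j^{(t)}(b_j^{(t)})] - \nu_j^{(t)}(b_j^{(t+1)} - b_j^{(t)})$ (for Algorithm~\ref{alg:coord} the analogous identity has $\nabla_j L_j^{(t)}$ evaluated at $b_j^{(t+1)}$ and no proximal term). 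The delicate point is that $\nabla_j L_j^{(t)}$ is read off at the partially updated iterate $\b_{-j}^{(t)}$, which agrees with $\b^{(t+1)}$ except in the not-yet-updated coordinates; hence Lipschitz continuity of $\nabla L$ bounds the bracketed difference by $L_{\nabla}\|\b^{(t+1)} - \b^{(t)}\|$, and the proximal term is controlled by $M_0|b_j^{(t+1)} - b_j^{(t)}|$ via Assumption~1. Summing over $j$ (absorbing the factor $p$) produces $\|\w^{(t+1)}\|\leq C_1\|\b^{(t+1)} - \b^{(t)}\|$. The main obstacle is precisely this Gauss--Seidel bookkeeping: assembling a legitimate element of the limiting subdifferential at the \emph{fully} updated point out of optimality conditions whose gradients are evaluated at \emph{intermediate} points, and verifying that the nonsmoothness of $\Phi(\alpha|\cdot|)$ at the origin is handled correctly by the sum rule rather than spoiling the selection of $\xi_j$.
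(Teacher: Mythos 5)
Your proposal is correct and follows essentially the same route as the paper: part (i) isolates the same per-coordinate modulus $\gamma_j^{(t)} + \lambda_n\alpha^2\Phi''(0)$ (the paper reaches it via a Taylor expansion of $\Phi$ with Lagrange remainder combined with the optimality condition, which is equivalent to your strong-convexity-at-the-minimizer inequality), part (ii) is the same telescoping argument, and part (iii) constructs exactly the same subgradient selection $w_j^{(t+1)} = \nabla_j L(\b^{(t+1)}) - \nabla_j L(\b_j^{(t)})$ at the partially updated iterate and bounds it via Lipschitz continuity of the partial gradients. The only divergence is that the paper dispatches the PALM case entirely by citing Bolte et al., whereas you sketch the standard proximal descent inequality; your caveat that positivity of $\nu_j^{(t)} - L_{\nabla}$ must be ensured is apt, since Assumption~1 as literally stated does not guarantee $m_0$ exceeds the gradient Lipschitz modulus.
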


Notice that the function $F(\b)$ is coercive, which means that
$F(\b)\to\infty$ iff $\|\b\|\to \infty$. 
Then by Theorem~\ref{thm:decrease}-(i), with a bounded initial   $\b^{(0)}$  the sequence  $\left\{\b^{(t)}: t \in \NB \right\}$ is bounded. 
Hence, there exists a convergent subsequence $\left\{\b^{(t_k)}\right\}$ that converges to $\b^*$. The set of all stationary points which are started with a bounded $\b^{(0)}$ is denoted by $\MM(\b^{(0})$. That is,
\[
\MM(\b^{(0)})\triangleq \Big\{{\bar \b} \in \RB^p: \exists  ~t_k,~\left\{t_k\right\}_{k \in \NB}, ~such ~that ~\b^{(t_k)}\to {\bar \b} ~as ~k ~\to ~\infty\Big\}.
\]

\begin{lemma}[property of the limit points]\label{lem:critical}
Let  $\left\{\b^{(t)}: t \in \NB \right\}$ be generated by Algorithm~\ref{alg:coord} or Algorithm~\ref{alg:palm}.  Then we have
\begin{enumerate}
\renewcommand{\labelenumi}{(\theenumi)}
\item[\emph{(i)}]  $\MM(\b^{(0)})$  is not empty and $\MM(\b^{(0)})$ $\subseteq \mathrm{crit} F$;
\item[\emph{(ii)}]  \begin{equation}\label{eq:set}
\lim_{t \to \infty} {\mathrm{dist}}\big(\b^{(t)}, \MM(\b^{(0)})\big)=0. 
\end{equation}
\end{enumerate}
\end{lemma}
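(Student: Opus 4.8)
The plan is to run the by-now-standard cluster-point analysis for descent schemes that satisfy the three properties of Theorem~\ref{thm:decrease}. First I would record that the iterates are bounded: since $F$ is coercive and, by Theorem~\ref{thm:decrease}-(i), $\{F(\b^{(t)})\}$ is nonincreasing, every iterate lies in the sublevel set $\{\b : F(\b) \leq F(\b^{(0)})\}$, which is compact. Bolzano--Weierstrass then produces a convergent subsequence, so $\MM(\b^{(0)})$ is nonempty, giving the first half of (i).

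For the inclusion $\MM(\b^{(0)}) \subseteq \mathrm{crit}\,F$, fix $\bar{\b} \in \MM(\b^{(0)})$ together with a subsequence $\b^{(t_k)} \to \bar{\b}$. By Theorem~\ref{thm:decrease}-(ii) we have $\|\b^{(t+1)} - \b^{(t)}\| \to 0$, so the shifted iterates satisfy $\b^{(t_k+1)} \to \bar{\b}$ as well. By Theorem~\ref{thm:decrease}-(iii) there exist $\w^{(t_k+1)} \in \partial F(\b^{(t_k+1)})$ with $\|\w^{(t_k+1)}\| \leq C_1 \|\b^{(t_k+1)} - \b^{(t_k)}\| \to 0$, hence $\w^{(t_k+1)} \to \0$. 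Because the loss $L$ is continuous and, by Theorem~\ref{thm:lapexp00}-(b), each term $\Phi(\alpha|b_j|)$ is continuous, $F$ is continuous on $\RB^p$, so $F(\b^{(t_k+1)}) \to F(\bar{\b})$. Invoking the closedness of the graph of the limiting subdifferential in the sense of \citet{RockafellarWets:1998}, namely that $\x_k \to \x$, $F(\x_k) \to F(\x)$ and $\partial F(\x_k) \ni \u_k \to \u$ force $\u \in \partial F(\x)$, and applying it with $\u_k = \w^{(t_k+1)} \to \0$, we conclude $\0 \in \partial F(\bar{\b})$, i.e.\ $\bar{\b} \in \mathrm{crit}\,F$.

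For part (ii) I would argue by contradiction, using only boundedness and the definition of $\MM(\b^{(0)})$ as the set of cluster points. If $\dist(\b^{(t)}, \MM(\b^{(0)}))$ did not tend to $0$, there would exist $\epsilon > 0$ and a subsequence with $\dist(\b^{(t_k)}, \MM(\b^{(0)})) \geq \epsilon$. Boundedness yields a further convergent subsequence $\b^{(t_{k_j})} \to \b^{\star}$, and by definition $\b^{\star} \in \MM(\b^{(0)})$; but then $\dist(\b^{(t_{k_j})}, \MM(\b^{(0)})) \leq \|\b^{(t_{k_j})} - \b^{\star}\| \to 0$, contradicting the lower bound $\epsilon$.

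I expect the main obstacle to be the inclusion in (i), specifically the justification that $\0 \in \partial F(\bar{\b})$. The delicate point is that closedness of the graph of the limiting subdifferential is available only when the function values along the subsequence also converge to the limiting value; here that is supplied by the continuity of $F$ (continuity of the loss together with Theorem~\ref{thm:lapexp00}-(b)). One must also be careful to transfer the subgradient bound from the index $t_k$ to the shifted iterate $\b^{(t_k+1)}$, which is legitimate precisely because the successive gaps vanish by Theorem~\ref{thm:decrease}-(ii).
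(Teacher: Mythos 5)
Your proof is correct and follows essentially the same route as the paper's: boundedness from coercivity and the decrease property gives nonemptiness, the subgradient bound from Theorem~\ref{thm:decrease}-(iii) combined with vanishing successive gaps and closedness of the limiting subdifferential gives $\MM(\b^{(0)}) \subseteq \mathrm{crit}\,F$, and a standard subsequence argument gives (ii). You are in fact more careful than the paper on two points it leaves implicit — the explicit appeal to graph-closedness of $\partial F$ with convergence of function values, and the index shift from $t_k$ to $t_k+1$ — but these are refinements of the same argument, not a different one.
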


Lemma \ref{lem:critical} implies that   ${\MM}(\b^{(0)})$ is a subset of stationary or critical points of $F$ and $\{\b^{(t)}\}_{t \in\NB}$  approaches to one point of ${\MM}(\b^{(0)})$.   Our current concern is to prove $\liml_{t \to\infty}\b^{(t)}=\b^*$.  As in \cite{BolteMathProgram}, we know that $\MM(\b^{(0)})$ is compact and connected. Moreover, the objective function $F$ is finite and constant on $\MM(\w^{(0)})$. 

As we have mentioned previously, as a function of $b$,
$\Phi(|b|)$  is sub-analytic, which satisfies the Kurdyka-\L{}ojasiewicz property. Moreover, both the least squares loss and
the logistic loss function are real analytic. 
This implies that $F(\b)$ also satisfies the Kurdyka-\L{}ojasiewicz property.
Accordingly, combining Theorem~\ref{thm:decrease} and Lemma~\ref{lem:critical}, we have the global convergence property of Algorithm~\ref{alg:coord} and of Algorithm~\ref{alg:palm}  as follows.

\begin{theorem}\label{thm:global} Assume that $\Phi$ is a nonzero Bernstein function  on $[0, \infty)$ such that $\Phi(0)=0$ and $\Phi'(\infty)=0$.
Let the sequence $\left\{\b^{(t)}: {t \in \mathbb N} \right\}$ be generated by Algorithm~\ref{alg:coord} or Algorithm~\ref{alg:palm}.
Under the conditions in Theorem~\ref{thm:decrease}, then the following assertions hold.
\begin{enumerate}
\item[\emph{(i)}] The sequence $\left\{\b^{(t)}: {t \in \mathbb N}\right\}$ has finite length,
\begin{equation}\label{eq:gloabl}
\sum^\infty_{t=0}\Big\|\b^{(t+1)}-\b^{(t)} \Big\| <\infty.
\end{equation}
\item[\emph{(ii)}] The sequence $\left\{\b^{(t)}: {t \in \mathbb N} \right\}$ converges to a critical point $\b^*=(b^*_1, \ldots, b^*_p)^T$ of $F$.
\end{enumerate}
\end{theorem}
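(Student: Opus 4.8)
The plan is to apply the abstract Kurdyka-\L{}ojasiewicz convergence scheme of \citet{AttouchBotle,BolteMathProgram}, for which Theorem~\ref{thm:decrease} and Lemma~\ref{lem:critical} already supply every structural ingredient. Part~(i) of Theorem~\ref{thm:decrease} is the \emph{sufficient decrease} condition, part~(iii) is the \emph{relative-error} (subgradient lower bound) condition $\dist(0, \partial F(\b^{(t+1)})) \leq C_1\|\b^{(t+1)} - \b^{(t)}\|$, and Lemma~\ref{lem:critical} together with the coercivity of $F$ shows that the limit-point set $\MM(\b^{(0)})$ is nonempty, compact and connected, is contained in $\mathrm{crit}\, F$, and that $F$ takes a constant value $F^*$ on it. Since $\Phi(|b|)$ is subanalytic and both the least-squares and logistic losses are real analytic, $F$ satisfies the Kurdyka-\L{}ojasiewicz property of Definition~\ref{def:klpro}.

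First I would dispose of the trivial case: if $F(\b^{(t_0)}) = F^*$ for some finite $t_0$, then sufficient decrease forces $\b^{(t)} = \b^{(t_0)}$ for all $t \geq t_0$, so the sequence is eventually constant and (i) holds. Otherwise $F(\b^{(t)}) > F^*$ for all $t$, and I would invoke the \emph{uniformized} KL property on the compact connected set $\MM(\b^{(0)})$: there exist $\eta > 0$, $\varepsilon > 0$ and a single concave $\pi \in \Pi_{\eta}$ such that
\[
\pi'\big(F(\x) - F^*\big)\, \dist\big(0, \partial F(\x)\big) \geq 1
\]
for every $\x$ within distance $\varepsilon$ of $\MM(\b^{(0)})$ satisfying $F^* < F(\x) < F^* + \eta$. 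By Lemma~\ref{lem:critical}-(ii) and the monotone convergence $F(\b^{(t)}) \downarrow F^*$, all iterates with $t$ large enough lie in this region.

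The core of the argument is the telescoping estimate. Writing $\Delta_t \triangleq \pi\big(F(\b^{(t)}) - F^*\big) - \pi\big(F(\b^{(t+1)}) - F^*\big)$, concavity of $\pi$ gives $\Delta_t \geq \pi'(F(\b^{(t)}) - F^*)\,(F(\b^{(t)}) - F(\b^{(t+1)}))$. Combining the KL inequality with the subgradient bound of Theorem~\ref{thm:decrease}-(iii) yields $\pi'(F(\b^{(t)}) - F^*) \geq 1/(C_1\|\b^{(t)} - \b^{(t-1)}\|)$, while the sufficient decrease of Theorem~\ref{thm:decrease}-(i) bounds the objective gap below by $\frac{C_0}{2}\|\b^{(t+1)} - \b^{(t)}\|^2$. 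Together these give
\[
\|\b^{(t+1)} - \b^{(t)}\|^2 \leq \frac{2C_1}{C_0}\,\|\b^{(t)} - \b^{(t-1)}\|\,\Delta_t,
\]
and then the inequality $2\sqrt{uv} \leq u + v$ produces $2\|\b^{(t+1)} - \b^{(t)}\| \leq \|\b^{(t)} - \b^{(t-1)}\| + \frac{2C_1}{C_0}\Delta_t$. Summing over $t$ makes the $\Delta_t$-terms telescope (they are bounded below by $-\pi(\eta)$ since $\pi \geq 0$), which leaves $\sum_t \|\b^{(t+1)} - \b^{(t)}\| < \infty$, proving (i). Finite length means $\{\b^{(t)}\}$ is Cauchy in $\RB^p$, hence convergent to some $\b^*$; by Lemma~\ref{lem:critical}-(i), $\b^* \in \MM(\b^{(0)}) \subseteq \mathrm{crit}\, F$, giving (ii).

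The step I expect to be the main obstacle is establishing and correctly applying the uniformized KL inequality over the whole limit set rather than at a single point, since the neighborhood ${\cal U}$ and function $\pi$ in Definition~\ref{def:klpro} are a priori point-dependent; reconciling this with the telescoping index bookkeeping, and ensuring that all sufficiently large iterates fall simultaneously into the valid region, is the delicate part, whereas the algebraic manipulations with $\Delta_t$ are routine.
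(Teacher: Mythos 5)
Your proposal is correct and follows essentially the same route as the paper's own proof: the same trivial-case dispatch, the same combination of sufficient decrease, the subgradient bound, and the KL inequality to get $\|\b^{(t+1)}-\b^{(t)}\|^2 \leq \frac{2C_1}{C_0}\Delta_t\|\b^{(t)}-\b^{(t-1)}\|$, the same $2\sqrt{uv}\leq u+v$ step, and the same telescoping to finite length and a Cauchy sequence. If anything, you are slightly more careful than the paper in explicitly invoking the uniformized KL inequality over the compact connected limit set $\MM(\b^{(0)})$, a point the paper's proof uses implicitly.
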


The  convergence property of Algorithm~\ref{alg:palm} is a direct corollary of the results studied by
\citet{BolteMathProgram}. Notice that in the proof for  Algorithm~\ref{alg:palm}, it is not necessarily required that $P''(b)$
on $(0, \infty)$ exists.
Thus, when we use the MCP or SCAD function in Algorithm~\ref{alg:palm}, the resulting procedure also has the convergence results
in Theorems~\ref{thm:decrease} and \ref{thm:global} because both MCP and SCAD satisfy the Kurdyka-\L{}ojasiewicz property
(see Section~\ref{sec:example}).

It is worth poiniting out that the convergence analysis of  \citet{MazumderSparsenet:11} (see Theorem 4 therein)
has only established the  square summable result $\sum_{t=0}^{\infty}\|\b^{(t+1)}- \b^{(t)} \|^2< + \infty$.  However,
by the square summable result, it can  not be directly  obtained that $\left\{\b^{(t)} \right\}_{t \in \NB}$ is a Cauchy sequence.
The theory of the Kurdyka-\L{}ojasiewicz inequality is an essential tool to obtain this result.

Theorem~\ref{thm:global} says that $\b^*$ is a critical point of $F$.   Let $S = \mathrm{supp}(\b^*)$ and $S^c$ denote the complement of $S$ in $\{1, \ldots, p\}$.   Hence,  when taking Algorithm~1 for regression, we have
\begin{equation} \label{eqn:ktt00}
\0 \in \X^T \X \b^* - \X^T \y + \lambda_n \z,
\end{equation} 
where $\z=(z_1, \ldots, z_p)^T$ and $z_j= \alpha  \Phi'(\alpha |b_j^{*}|) \partial |b_j^*|$. 
Let  $\z_S$ be the sub-vector of $\z$ with entries in $S$ and $\X_S$ is the submatrix of $\X$ with columns indexed by $S$.  Then the Karush-Kuhn-Tucker (KKT)  condition in \eqref{eqn:ktt00} is equivalent to 
\[
\X_S^T \X_S \b_S^* - \X_S^T \y + \lambda_n \z_{S}  = \0
\]
and $\| \X_{S^c}^T \X_{S^c} \b_{S^c}^* - \X_{S^c}^T \y\|_{\max} \leq \lambda_n \alpha \Phi'(0)$. 

When $L = \sum_{i=1}^n \log(1 + \exp(-y_i \b^T \x_i))$  is defined in the classification problem,  we have 
 \[
 \0 \in  \lambda_n \z - \sum_{i=1}^n  \omega_i y_i \x_i  =  \lambda_n \z  -  \X^T \D \y.
 \]
 where $\omega_i = \frac{ \exp(-y_i \b^T \x_i) }{1+  \exp(-y_i \b^T \x_i)} $ and $\D = \diag(\omega_1, \ldots, \omega_n)$.  The current KKT condition is equivalent to 
\[
\lambda_n \z_{S}  -  \X_{S}^T \D \y =\0
\] 
and $\| \X_{S^c}^T \D \y\|_{\max} \leq \lambda_n \alpha \Phi'(0)$. Notice that $\frac{\partial^2 L}{\partial \b \partial \b^T} = \X^T (\I_n-\D) \D \X$.  
The following theorem is a direct corollary of Theorem~1 of  \cite{LvFan:2009,FanLv:2011}. It shows that  $\b^*$ is a strict local minimizer of $F$ under certain regularity conditions.

\begin{theorem} \label{thm:minimizer} Assume that the conditions in Theorem~\ref{thm:global} are satisfied. If $\lambda_{\min}(\X_S^T \X_S) +  \frac{\lambda_n \alpha^2}{\Phi(\alpha)} \Phi''(0) >0$ and $\| \X_{S^c}^T \X_{S^c} \b_{S^c}^* - \X_{S^c}^T \y\|_{\max} < \lambda_n \alpha \Phi'(0)$ in Algorithm~1 or $\lambda_{\min}(\X_S^T (\I_n - \D) \D \X_S) +  \frac{\lambda_n \alpha^2}{\Phi(\alpha)} \Phi''(0) >0$ and $\| \X_{S^c}^T \D \y\|_{\max} < \lambda_n \alpha \Phi'(0)$ in Algorithm~2, then $\b^*$ is a strict local minimizer of $F$. Here $\lambda_{\min}(\A)$ denotes the smallest eigenvalue of the positive semidefinite  matrix $\A$. 
\end{theorem}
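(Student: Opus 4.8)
The plan is to invoke Theorem~1 of \citet{LvFan:2009,FanLv:2011}, which furnishes sufficient conditions for a sparse vector to be a strict local minimizer of a nonconcave penalized loss, and to verify that the two hypotheses stated here are exactly those conditions once the Bernstein penalty is placed in that framework. Most of the groundwork is already available: Theorem~\ref{thm:lapexp00}(a) shows that $P(b)=\Phi(\alpha|b|)$ satisfies Condition~1 of \citet{LvFan:2009} (it is nonnegative, even, increasing and concave on $[0,\infty)$ with $P'(0+)=\alpha\Phi'(0)>0$), and Proposition~\ref{thm:nest}(c) records the local convexity their argument exploits. So the task reduces to checking stationarity and two local optimality directions.

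First I would recall from Theorem~\ref{thm:global} that $\b^*$ is a critical point of $F$, i.e.\ $\0\in\partial F(\b^*)$. Writing $S=\mathrm{supp}(\b^*)$ and using $\partial P(b)=\alpha\Phi'(\alpha|b|)\partial|b|$ from Theorem~\ref{thm:lapexp00}(c), this stationarity is precisely the KKT system displayed before the theorem: the on-support equality ($\X_S^T\X_S\b_S^* - \X_S^T\y + \lambda_n\z_S=\0$ for Algorithm~1, and $\lambda_n\z_S - \X_S^T\D\y=\0$ for Algorithm~2) together with the off-support bound $\|\cdot\|_{\max}\le\lambda_n\alpha\Phi'(0)$. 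Thus the equality part of \citet{LvFan:2009}'s first condition is automatic, while the \emph{strict} off-support inequality assumed here upgrades the KKT bound: for a perturbation that activates a coordinate $j\in S^c$ with small size $t$, the first-order change in $F$ is at least $(\lambda_n\alpha\Phi'(0)-|\nabla_j L(\b^*)|)\,|t|+o(|t|)>0$, so $F$ strictly increases along every direction that leaves the support.

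Next I would treat directions that stay within the support. On $S$ the penalty is twice differentiable and the objective is smooth, with restricted Hessian $\X_S^T\X_S + \lambda_n\,\diag(\alpha^2\Phi''(\alpha|b_j^*|))_{j\in S}$ in the regression case and $\X_S^T(\I_n-\D)\D\X_S + \lambda_n\,\diag(\alpha^2\Phi''(\alpha|b_j^*|))_{j\in S}$ in the classification case (note $\nabla^2 L=\X^T(\I_n-\D)\D\X$). Because $\Phi'$ is completely monotone, $-\Phi''$ is nonincreasing, hence $-\Phi''(\alpha|b_j^*|)\le -\Phi''(0)$ for every $j\in S$, so the penalty contributes at worst the maximum-concavity term. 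The eigenvalue hypothesis then forces this restricted Hessian to be positive definite, which is exactly the local strong-convexity requirement of \citet{LvFan:2009} guaranteeing that $\b^*$ strictly dominates every nearby point whose support lies in $S$. Combining the two families of directions yields that $\b^*$ is a strict local minimizer.

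The main obstacle is the curvature bookkeeping on $S$: one must confirm, via complete monotonicity of $\Phi'$ (so that $-\Phi''$ attains its maximum at the origin), that the worst-case penalty curvature is governed by $\Phi''(0)$, and then match this maximum-concavity constant---connected to $\zeta(\Phi_\alpha)=-\frac{\alpha^2}{\Phi(\alpha)}\Phi''(0)$ of Remark~3---to the coefficient appearing in the eigenvalue hypothesis. Once the curvature constant is correctly normalized, the assertion follows verbatim from Theorem~1 of \citet{LvFan:2009,FanLv:2011}; the remaining effort is a routine translation between our KKT display and their conditions.
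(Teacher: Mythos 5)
Your proposal is correct and follows exactly the route the paper takes: the paper offers no separate proof of this theorem, stating only that it is a direct corollary of Theorem~1 of \citet{LvFan:2009,FanLv:2011} after the KKT conditions displayed just before the statement, which is precisely the reduction you carry out (and you additionally spell out the off-support first-order argument and the on-support curvature check that the paper leaves implicit). Your observation about normalizing the concavity constant $\zeta(\Phi_\alpha)=-\frac{\alpha^2}{\Phi(\alpha)}\Phi''(0)$ against the unnormalized subgradient $z_j=\alpha\Phi'(\alpha|b_j^*|)\partial|b_j^*|$ in the paper's KKT display is a legitimate bookkeeping point that the paper itself glosses over.
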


Notice that without the condition $\| \X_{S^c}^T \X_{S^c} \b_{S^c}^* - \X_{S^c}^T \y\|_{\max} < \lambda_n \alpha \Phi'(0)$ or $\| \X_{S^c}^T \D \y\|_{\max} < \lambda_n \alpha \Phi'(0)$,
we only can ensure that $\b^*$ is a local minimizer of $F$.

\section{Experimental Analysis} \label{sec:experiment}

In this paper our principal focus has been to explore the theoretical properties of the Bernstein function
in nonconvex sparse modeling. However, we have also developed the coordinate descent
algorithm and the PALM algorithm  for the  supervised learning problems  with the Bernstein penalty.
Thus, it is
interesting to conduct empirical analysis of the estimation algorithms  with different Bernstein penalty functions. We particularly
study the nonconvex  LOG, EXP and LFR functions because they bridge the $\ell_0$-norm and
the $\ell_1$-norm. The MATLAB code will be available at the homepage of the author.

\subsection{Regression Analysis on Simulated Datasets}

First we  conduct  experiments on the regression problem with the  coordinate descent
algorithm based on LOG, EXP and LFR, respectively.
We also implement the lasso, and the $\ell_{1/2}$-norm
and  MCP based
methods~\citep{MazumderSparsenet:11}.
All these methods are also solved by using  coordinate descent.
Moreover, the hyperparameters involved in all the methods are selected via cross validation.

Our empirical analysis is based on a simulated data, which
was used by  \citet{MazumderSparsenet:11}. In particular, we  generate data
from the following model:
\begin{equation*}
 y = \x^T \b + \sigma e
\end{equation*}
where $e \sim N(0, 1)$. 
We choose $\sigma$ such that the Signal-to-Noise Ratio (SNR), which is
$\mathrm{SNR} = \frac{\sqrt{\b^T \Sigma \b}}{\sigma}$,
is a specified value. Following the setting in \citet{MazumderSparsenet:11},  we use $\mathrm{SNR} = 3.0$ in all the experiments.

We generate three different datasets with different $p$ and $n$  to implement the experiments. That is,
\begin{description}
\item[Data {1}:]  $n = 100$, $p = 200$,
$\b_{1}$ has $10$ non-zeros such that $b_{1, 20i{+}1}=1$ for $i=0, 1, \cdots, 9$,
and $\Si_{1} = \{0.7^{|i-j|}\}_{1\leq i,j \leq p}$.
\item[Data {2}:]  $n = 500$, $p = 1000$,
$\b_{2} = (\b_{1}, \cdots, \b_{1})$,
and $\Si_{2} = \mathrm{diag}(\Si_{1}, \cdots, \Si_{1})$ (five blocks).
\item[Data {3}:]  $n = 500$, $p = 2000$,
$\b_{3} = (\b_{1}, \cdots, \b_{1})$,
and $\Si_{3} = \mathrm{diag}(\Si_{1}, \cdots, \Si_{1})$ (ten blocks).
\end{description}

Our experimental analysis is performed on the above training datasets, and the corresponding test datasets
of $m=10000$ samples. Let $\hat{\b}$
denote the solution obtained from each algorithm.  We use a standardized
prediction error (SPE) and a feature selection error (FSE)  as measure metrics.  SPE  is defined as
$\textrm{SPE} = \frac{\sum_{i=1}^{m} (y_i - \x_i^T \hat{\b})^2}{m \sigma^2}$
and  FSE is proportion of coefficients in
$\hat{\b}$ which is incorrectly set to zero or nonzero based on the true $\b$.

Table~\ref{tab:sim_res2} reports the average results over 25 repeats. From
them, we can see that all the methods are competitive
in both prediction accuracy and feature selection accuracy.
But nonconvex penalization outperforms  convex penalization in sparsity.
Although there does not exist a closed-form thresholding operator in the coordinate descent method with
EXP, it is still efficient in computational times.
This agrees with our analysis in Section~\ref{sec:cda}.
We find that the  $\ell_{1/2}$ penalty indeed suffers from the numerically unstable problem during
the computation and prediction. It is worth pointing out that \emph{SparseNet} is based on a calibrated version of MCP~\citep{MazumderSparsenet:11}.
Our experiments show that the performance of the conventional MCP is less effective. Thus, the calibration technique
is necessary for MCP. However, we do not apply any calibration techniques to LOG, EXP and LFR in their implementations.
Thus, the Bernstein penalty function is effective and efficient in sparse modeling.

Comparing further the several nonconvex penalty functions, we see that the performance
of LFR is slightly better than those of the remainders.
Recall that for any fixed $\alpha>0$,
\[
1 {-} \exp( {-} \alpha |b|) \leq \frac{2 \alpha |b|}{\alpha |b| {+}2} \leq  \log\big({\alpha} |b|  {+}1 \big)
\leq  \alpha |b|,\]
with equality only when $b=0$. However, we have seen that related to EXP, LFR has the closed-form thresholding operator.
This would be an important reason why LFR has the best performance. In summary,
the experimental results show that LFR is an especially good choice
for nonconvex penalization in finding spare solutions.

\begin{table*}[!ht]\setlength{\tabcolsep}{1.3pt}
\caption{Results of the coordinate descent algorithms with different penalty functions on the simulated data sets.
\label{tab:sim_res2}}
\begin{center}
\begin{tabular}{l | c  c c  | c  c   c | c c c }
\hline
    &  SPE($\pm$STD) & ``FSE'' &  &  SPE($\pm$STD) & ``FSE'' &  & SPE($\pm$STD) & ``FSE'' &  \\
    \hline
    & \multicolumn{3}{c|}{Data  1}
    & \multicolumn{3}{c|}{Data  2}
    & \multicolumn{3}{c}{Data  3}
\\
 & \multicolumn{3}{c|}{ $p=200$ and $n=100$}
    & \multicolumn{3}{c|}{$p=1000$ and $n=500$}
    & \multicolumn{3}{c}{$p=2000$ and $n=500$}
\\ \hline
LOG  & 1.2307 ($\pm$0.0131) &  0.0146 & & 1.2192($\pm$0.0028) & 0.0121   &  & 3.5200($\pm$1.1410) & 0.0739   & \\
EXP  & 1.1452 ($\pm$0.0074) & {\bf 0.0037}  &  & 1.1407($\pm$0.0013) & 0.0024  &   & 3.6499($\pm$0.1800) & 0.0785   &\\
LFR & {\bf 1.1145} ($\pm$0.0093) & {0.0050}  & & {\bf 1.1205} ($\pm$0.0018) & {\bf 0.0018}   &  & {\bf 3.4109} ($\pm$0.1590) & {\bf 0.0610}   & \\
\hline
$\ell_{1/2}$  & 1.2480 ($\pm$0.0230) & 0.0277 & & 1.2689($\pm$0.0071) & 0.0262   &  & 3.7468($\pm$0.2041) & 0.0896   & \\
MCP      & 1.1195 ($\pm$0.0041) & 0.0051 &  & 1.2736($\pm$0.0509) & 0.0430   &   & 3.6853($\pm$0.1580) & 0.0821   &\\
\hline
 Lasso  &  1.6678 ($\pm$0.0654) & .01555  & & 1.6588($\pm$0.0184) & 0.1520   &   & 4.0433($\pm$0.1607) & 0.1470   &\\
\hline
\end{tabular}
\end{center}
\end{table*}

\subsection{Classification Analysis on Real Datasets}

We now conduct empirical analysis of the classification problem with the PALM algorithm based on the Bernstein penalty functions.
More specifically, our experiments are performed on four real datasets. The \emph{heart} data (270 samples and 13 features),  the
\emph{Australian} data (690 samples and 14 features), and
the \emph{German number} data (1000 samples and 24 features)   come from Statlog.
The  \emph{splice} dataset  (1000 samples and 60 features) is from Delve.
The datasets are  used for the binary classification problem.

For comparison, we also implement the conventional SVM and  the penalized logistic regression with the $\ell_1$ penalty.
We use the $70\%$ of the data for training and the rest $30\%$ for testing. Table~\ref{tab:class} reports the average results over 30 repeats.
We see that the methods based on the Bernstein penalty functions slightly outperform the two convex methods.
Figure~\ref{fig:converg} illustrates the convergence results of the PALM procedures with EXP, LOG, and LFR, respectively.
As we see, the PALM method for the nonconvex penalization problem admits the convergence property.

It is worth pointing out that in the PALM algorithm the input samples  are not necessarily standardized such that $\sum_{i=1}^n x_{ij}=0$
and $\sum_{i=1}^n x^2_{ij}=1$. However, the coordinate descent methods studied by \citet{MazumderSparsenet:11}
and \citet{BrehenyAAS:2010} typically require such standardization.

\begin{table}[!ht] 
\caption{Classification accuracies}
\label{tab:class}
\vspace{0.1in}
\begin{center}
\begin{tabular}{l| c|c|c|c}
\hline
	{}      & ~~~~\thead {Heart\\ $n=270,p=13$}~~~ &~~\thead{Australian\\$n=690,p=14$}~~&~~\thead{German number\\$n=1000,p=24$}~~&~~\thead{Splice\\$n=3175, p=60$}~~\\
\hline
    {LOG}   &  $ 0.8463$ ($\pm 0.0340$)   & $ 0.8589$  ($\pm 0.0196 $) &  $ 0.7603$ ($\pm 0.0236$) & $0.7997$ ($\pm 0.0150$) \\
    {EXP} &   ${\bf 0.8639}$ ($\pm 0.0296$)     &  ${\bf 0.8638}$ ($\pm 0.0221$) &  $ 0.7602$  ($\pm 0.0237$) & ${\bf 0.8037} $ ($\pm 0.0241$)  \\
    {LFR} &   $ 0.8620$ ($\pm 0.0297$)     & $ 0.8638 $ ($\pm 0.0222$) &  $ {\bf 0.7622}$ ($\pm 0.0248 $) & $0.7999$ ($\pm 0.0201$)  \\
\hline
    {$\ell_1$-norm} &   $ 0.8231 $ ($\pm 0.0212$)    &  $ 0.8595$ ($\pm 0.0194 $)  &  $ 0.7356$ ($\pm 0.0230 $) & $ 0.7875$ ($\pm 0.0077$) \\
\hline
    {SVM} &   $0.8417$ ($\pm 0.0455$)      &  $ 0.8522 $ ($\pm 0.0223 $)&  $ 0.7600$ ($\pm 0.0226$) & $0.7890 $ ($\pm 0.0235$) \\
\hline
\end{tabular}
\end{center}
\end{table}

\begin{figure}[!ht]
\centering
\subfigure[``Heart" data]{\includegraphics[width=75mm,height=60mm]{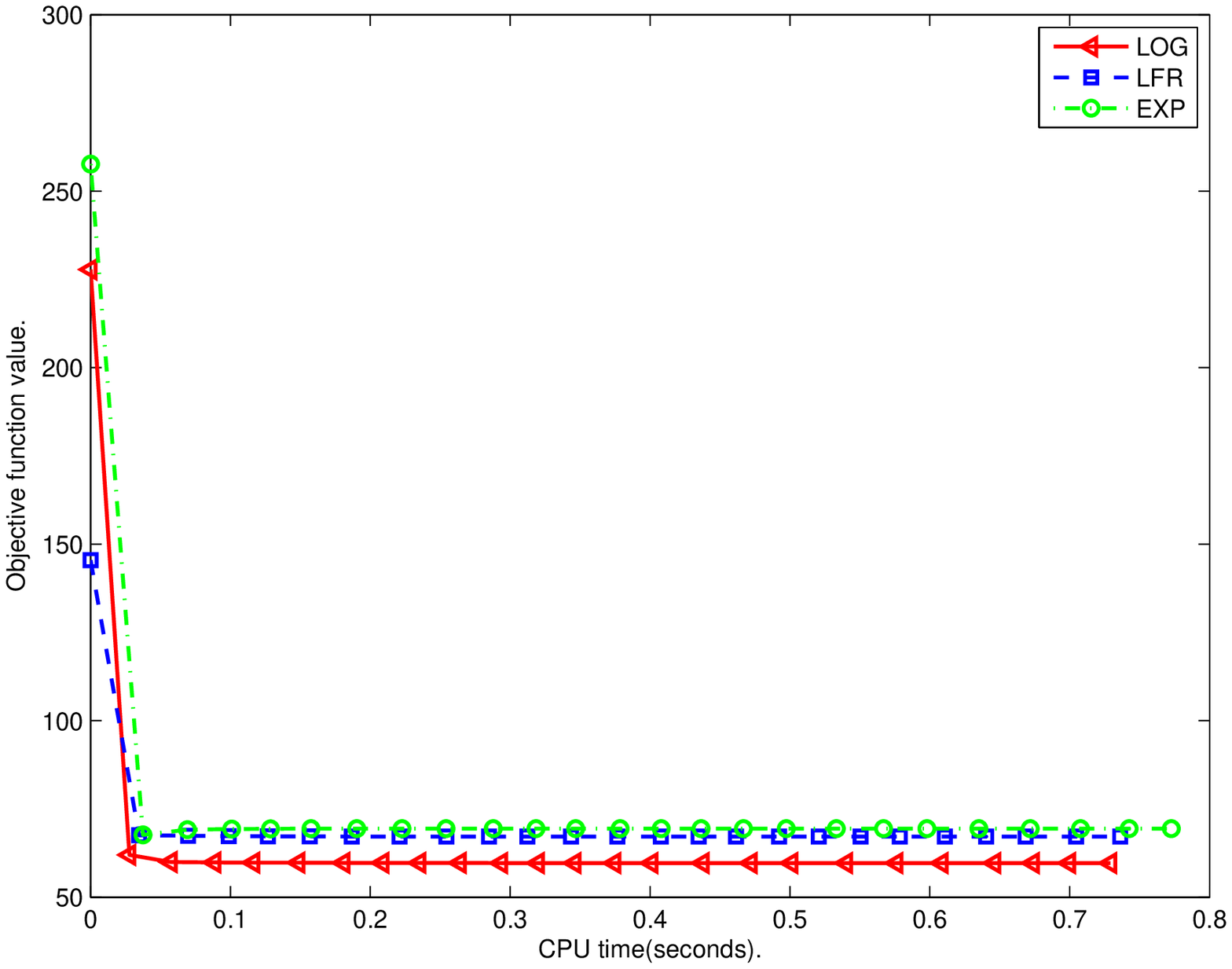}}  \subfigure[``Australian" data]{\includegraphics[width=75mm,height=60mm]{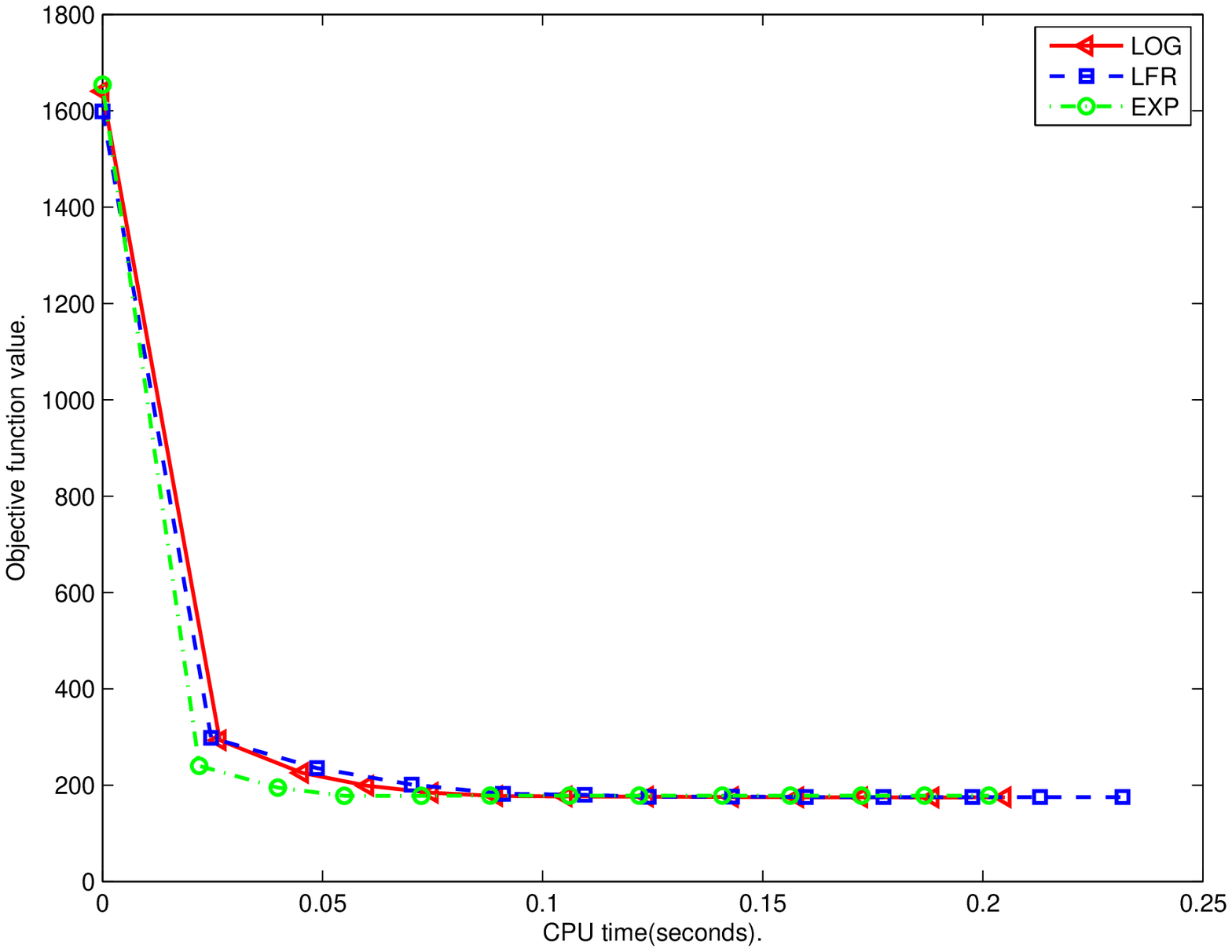}} \\
\subfigure[``German number" data]{\includegraphics[width=75mm,height=60mm]{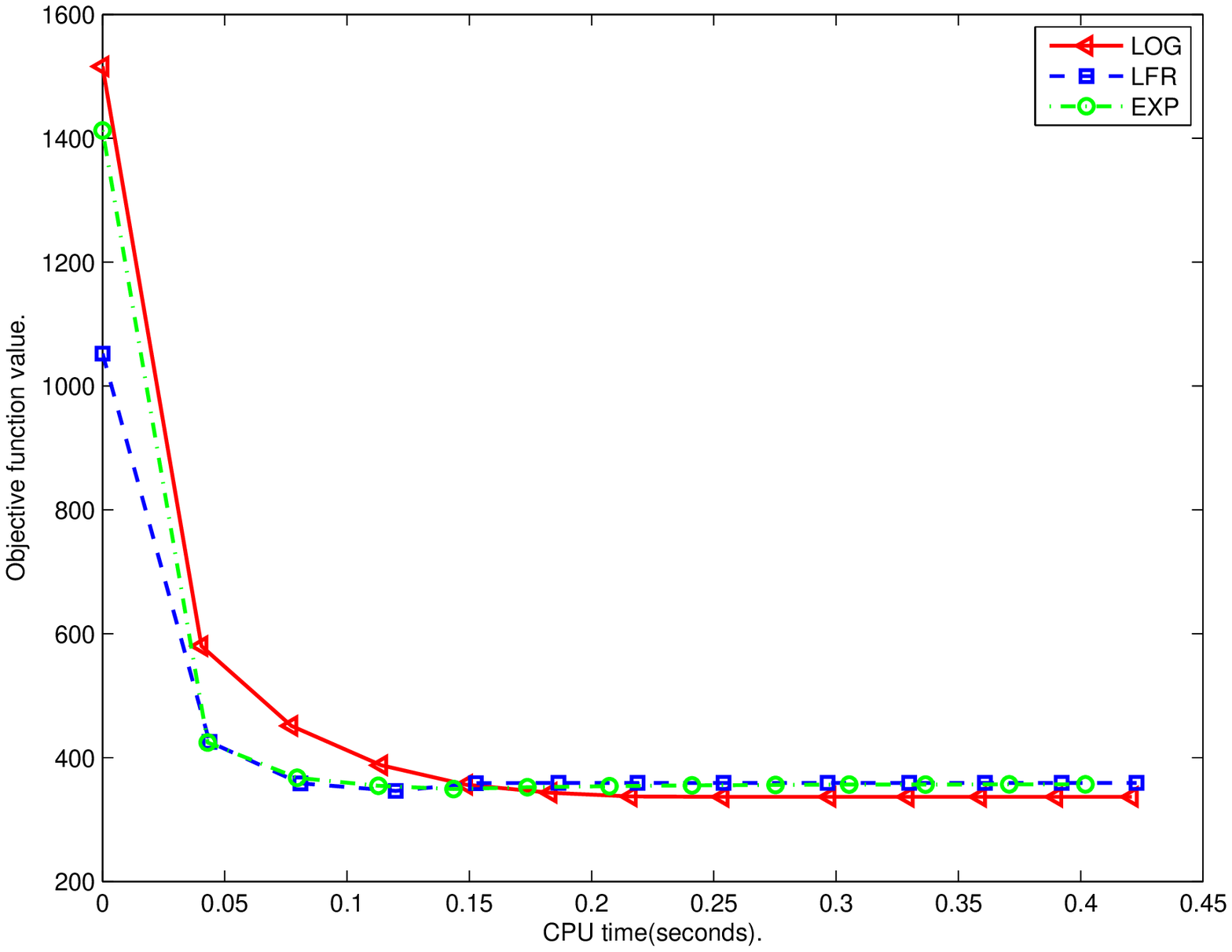}}
\subfigure[``Splice" data]{\includegraphics[width=75mm,height=60mm]{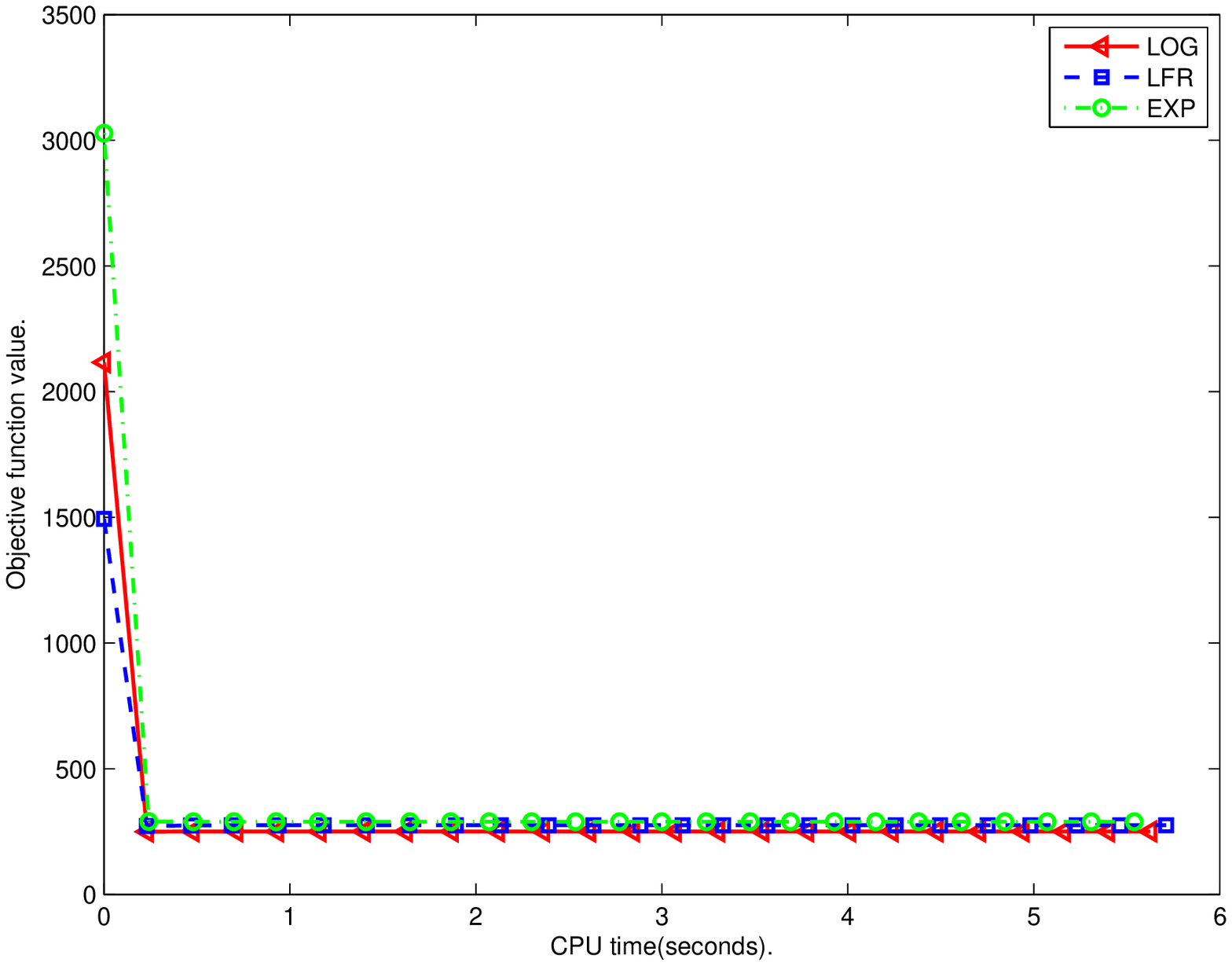}} \\
\caption{Convergence results of the PALM procedure.}
\label{fig:converg}
\end{figure}

Finally, we apply the classification method based on the PALM algorithm on the microarray gene expression data of  leukemia
patients~\citep{GolubT:1999}.
This  data involves 7129 genes for 72 patients. Following the treatment in \cite{BrehenyAAS:2010}, we use 38 patients for training and the other 34 for testing. 
We implement the PALM algorithm with LOG, EXP, LFR, and MCP, respectively. For comparison,
we also implement the   method of \citet{BrehenyAAS:2010},
which ia based on the second order Taylor approximation at the current estimate value of
the regression vector.
We report the misclassification error evaluated on the leukemia dataset. We point out that there can be $33/34$ accuracy adopted by the PALM with the either of LOG, EXP, FLR, and MCP,  while $31/34$ used by the  method of \citet{BrehenyAAS:2010}.

\section{Conclusion} 
\label{sec:conclusion}

In this paper we have exploited Bernstein functions in the definition of nonconvex penalty functions.
To the best of our knowledge, it is the first time that we apply theory of Bernstein functions
to systematically study nonconvex penalization problems.
We have illustrated  the KEP, LOG, EXP and LFR functions, which have wide applications in many scenarios but sparse modeling.
We have conducted empirical analysis with  LOG, EXP and LFR, which
shows they are good choices.

The Bernstein function has attractive ability in sparsity modeling. Geometrically, the Bernstein function holds  the property of regular variation~\citep{FellerBook:1971}. In other words, the Bernstein function
bridges the $\ell_{q}$-norm ($0\leq q <1$) and the $\ell_1$-norm.  Computationally, the resulting
estimation problems can be efficiently solved by using coordinate descent algorithms. The Bernstein function enjoys the Kurdyka-\L{}ojasiewicz property~\citep{Lojasiewicz,Kurdyka,BolteDaniilidisLewis}, which makes the coordinate descent procedure have global convergence properties.   Theoretically,
the Bernstein function admits the oracle properties (more details given in the appendix) 
and can result in an unbiased and continuous sparse estimator.


\appendix

\section{Several Important Results on Bernstein functions}

In this section we present several lemmas that are useful for Bernstein functions.

\begin{lemma} \label{lem:lapexp}
Let $\Phi(s)$ be a nonzero Bernstein function of $s$ on $(0, \infty)$. Assume $\liml_{s \to 0}\Phi(s)=0$ and $\liml_{s\to \infty} \frac{\Phi(s)}{s} =0$.
Then
\begin{enumerate}
\item[\emph{(a)}] $\liml_{s\rightarrow +\infty} \Phi^{(k)}(s) =0$ and  $\liml_{s\rightarrow 0+} s^k \Phi^{(k)}(s) =0$ for any $k\in \NB$. Additionally,
if $\liml_{s \to \infty} \Phi(s) < \infty$, then $\liml_{s\rightarrow \infty} s^k \Phi^{(k)}(s) =0$ for $k\in \NB$.
\item[\emph{(b)}] If  $\liml_{s \rightarrow \infty} s \Phi'(s)$ exists (possibly infinite), then $\liml_{s \rightarrow \infty} \frac{(-1)^{k{-}1}}{(k{-}1)!} s^k \Phi^{(k)}(s)$ for all $k \in \NB$
exist and are identical. In fact, if $\Phi'(0) = \liml_{s \to 0+} \Phi'(s) =1$, then  $\liml_{s \rightarrow \infty} s \Phi'(s) = \liml_{u\to 0+} \frac{F(u)}{u}$ where
$F(u)$ is the probability distribution which concentrated on $(0, \infty)$ and  whose Laplace transform is $\Phi'(s)$.
\end{enumerate}
\end{lemma}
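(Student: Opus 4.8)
The starting point is the L\'evy--Khintchine representation, which under the hypotheses $\lim_{s\to 0}\Phi(s)=0$ and $\lim_{s\to\infty}\Phi(s)/s=0$ reduces to $\Phi(s)=\int_0^\infty[1-e^{-su}]\,\nu(du)$ with $a=\beta=0$. Differentiating under the integral sign (legitimate for every $s>0$ because, on any interval $[s_0,\infty)$ with $s_0>0$, the integrand $u^k e^{-su}$ and the next one $u^{k+1}e^{-su}$ are dominated by $u^{k+1}e^{-s_0 u}$, which is $\nu$-integrable since $u^{k+1}e^{-s_0u}\le u$ near $0$ and $\int_0^\infty\min(u,1)\,\nu(du)<\infty$, while $\nu$ has finite mass on $[1,\infty)$) yields the working formula
\[
(-1)^{k-1}\Phi^{(k)}(s)=\int_0^\infty u^k e^{-su}\,\nu(du)\ge 0,\qquad k\in\NB .
\]
Every assertion of the lemma will be read off from this representation.

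For part (a) the plan is to run three dominated-convergence arguments against this formula. For the first limit, fix $s_0>0$; since $\Phi^{(k)}(s_0)$ is finite, $u^k e^{-s_0 u}$ is $\nu$-integrable and dominates $u^k e^{-su}$ for all $s\ge s_0$, so letting $s\to\infty$ with $u^k e^{-su}\to 0$ pointwise gives $\Phi^{(k)}(s)\to 0$. For the second limit I would write $s^k(-1)^{k-1}\Phi^{(k)}(s)=\int_0^\infty(su)^k e^{-su}\,\nu(du)$ and use the elementary bound $x^k e^{-x}\le C_k\min(x,1)$ with $C_k=\max(1,k^k e^{-k})$, together with $\min(su,1)\le\min(u,1)$ for $s\le 1$; this produces the $s$-independent $\nu$-integrable dominator $C_k\min(u,1)$, and $(su)^k e^{-su}\to 0$ pointwise as $s\to 0+$ yields the claim. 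For the third limit I would first observe, by monotone convergence in the L\'evy--Khintchine formula, that $\lim_{s\to\infty}\Phi(s)=\nu\big((0,\infty)\big)$, so the extra hypothesis $\lim_{s\to\infty}\Phi(s)<\infty$ is exactly the statement that $\nu$ is a finite measure; then $(su)^k e^{-su}\le k^k e^{-k}$ is dominated by a constant, now $\nu$-integrable, and dominated convergence again gives $s^k\Phi^{(k)}(s)\to 0$.

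Part (b) is the substantive one. The plan is to regard $\Phi'$ as the Laplace transform of the measure $\mu(du)=u\,\nu(du)$, so that $\Phi'(s)=\int_0^\infty e^{-su}\,\mu(du)$ and $(-1)^{k-1}\Phi^{(k)}(s)=\int_0^\infty u^{k-1}e^{-su}\,\mu(du)$; when $\Phi'(0)=1$ the measure $\mu$ is a probability distribution $F$ whose Laplace transform is $\Phi'$. Writing $\phi_k(s)=\frac{(-1)^{k-1}}{(k-1)!}s^k\Phi^{(k)}(s)$ (so $\phi_1(s)=s\Phi'(s)$), a direct differentiation gives the recursion $\phi_{k+1}(s)=\phi_k(s)-\frac{s}{k}\phi_k'(s)$, which reduces the whole statement to the Tauberian assertion that $s\phi_k'(s)\to 0$ whenever $\phi_k$ has a finite limit. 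This is precisely where complete monotonicity is indispensable: because $\Phi'$ is completely monotone, each derivative is monotone in the relevant sense, so I would invoke Karamata's Tauberian theorem and the monotone density theorem (Feller, Chapters VIII.9 and XIII.5) to convert the hypothesis $s\Phi'(s)\to c$ into regular variation of $\Phi'$ with index $-1$, and hence of every $(-1)^{k-1}\Phi^{(k)}$ with index $-k$; evaluating the Karamata constants gives $(-1)^{k-1}\Phi^{(k)}(s)\sim (k-1)!\,c\,s^{-k}$, so every $\phi_k$ tends to the common value $c$ (and all tend to $+\infty$ in the divergent case). Finally, applying Karamata's Tauberian theorem directly to $\Phi'(s)=\int_0^\infty e^{-su}\,dF(u)$ translates $\Phi'(s)\sim c/s$ into $F(u)\sim cu$ as $u\to 0+$, i.e.\ $\lim_{s\to\infty}s\Phi'(s)=\lim_{u\to 0+}F(u)/u$.

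The main obstacle is the Tauberian step in part (b): the recursion makes the ``exist and are identical for all $k$'' claim equivalent to showing $s\phi_k'(s)\to 0$, and this does not follow from convergence of $\phi_k$ alone — it genuinely uses the monotonicity built into complete monotonicity, which is what licenses the monotone density theorem. Part (a), by contrast, is routine once the correct $\nu$-integrable dominators are identified.
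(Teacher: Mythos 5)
Your part (a) is correct and takes essentially the paper's route: both differentiate the L\'evy--Khintchine representation to get $(-1)^{k-1}\Phi^{(k)}(s)=\int_0^\infty u^k e^{-su}\,\nu(du)$ and run three dominated-convergence arguments, with only cosmetically different dominators (the paper uses $u^k/(1+u^k)$ for the first limit and a split at $u=1$ for the second, where you use $u^k e^{-s_0u}$ and $C_k\min(u,1)$).

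Part (b) is where you genuinely diverge. The paper stays inside the integral representation: it rewrites $\frac{(-1)^{k-1}}{(k-1)!}s^k\Phi^{(k)}(s)=\int_0^\infty\frac{s^ku^{k-1}}{(k-1)!}e^{-su}\,u\,\nu(du)$, recognizes the Gamma$(k,1/s)$ kernel as an approximate identity concentrating at the origin as $s\to\infty$, and argues by monotone convergence away from $0$ that the limit is the same ``mass of $u\,\nu(du)$ at $0^{+}$'' for every $k$; the identity $\lim_{s\to\infty}s\Phi'(s)=\lim_{u\to0+}F(u)/u$ is obtained the same way from $s\Phi'(s)=\int_0^\infty s^2e^{-su}F(u)\,du$, with the Tauberian theorem cited only as an alternative. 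Your route --- the recursion $\phi_{k+1}=\phi_k-\frac{s}{k}\phi_k'$ (which checks out) combined with regular variation, the monotone density theorem, and Karamata --- is a genuinely different organization: it isolates exactly what must be proved ($s\phi_k'(s)\to0$) and anchors the argument to named theorems, which makes it arguably more rigorous than the paper's passage through Dirac deltas. The trade-off is coverage: regular variation of $\Phi'$ with index $-1$ follows from $s\Phi'(s)\to c$ only when $c\in(0,\infty)$, while the lemma explicitly allows $c=0$ and $c=\infty$; those cases need a separate elementary argument in your framework (for $c=0$, the bound $\frac{x^{k-1}}{(k-1)!}e^{-x}\le C_k e^{-x/2}$ gives $\phi_k(s)\le 2C_k\cdot\frac{s}{2}\Phi'(s/2)\to0$, and a matching lower bound on the kernel over $su\ge1$ handles $c=\infty$), whereas the paper's kernel argument treats all three cases uniformly, albeit informally. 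With that supplement your proof is complete.
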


\begin{proof}
First,
it follows from the L\'{e}vy-Khintchine representation
that
\[
\Phi(s) =  \int_{0}^{\infty} \big[1- e^{- s u}  \big ] \nu(d u)
\]
due to $\Phi(0)=0$ and $\liml_{s \to \infty} \frac{\Phi(s)}{s}=0$.
Thus, we have
\[
\Phi^{(k)}(s) = (-1)^{k-1} \int_{0}^{\infty}  e^{- s u} u^{k} \nu(d u).
\]
When $s\geq k$ for any $k\in \NB$, it is easily verified  that $e^{-su} u^k \leq \frac{u^k}{1+u^k}$ for $u>0$. Notice that
\[
\int_{0}^{\infty}{\min(u^k, 1) \nu(d u)} \leq \int_{0}^{\infty}{\min(u, 1) \nu(d u)} < \infty
\]
and
\[
\frac{u^k}{1+u^k} \leq \min(u^k, 1) \leq \frac{2 u^k}{1+u^k}, \quad u\geq 0.
\]
This implies that $\int_{0}^{\infty}{\min(u^k, 1) \nu(d u)} < \infty$ is equivalent to that  $\int_{0}^{\infty}{ \frac{u^k}{1+u^k} \nu(d u)} < \infty$.
As a result, we have that when $s\geq k$,
\[
 \int_{0}^{\infty}  e^{- s u} u^k \nu(d u) = \int_{0}^{\infty}  e^{- s u} u^k \nu(d u) \leq \int_{0}^{\infty}{ \frac{u^k}{1+u^k} \nu(d u)} < \infty.
\]
Thus,
\[
\lim_{s \to \infty} \Phi^{(k)}(s) = (-1)^{k-1} \lim_{s \to \infty} \int_{0}^{\infty}  e^{- s u} u^k \nu(d u)
= (-1)^{k-1} \int_{0}^{\infty} \lim_{s \to \infty}   e^{- s u} u^k  \nu(d u) =0.
\]

Additionally, since $e^{-s u} (su)^k \leq k^k e^{-k}$ for $s \geq 0$ and $u \geq 0$, we have
\begin{align}
\int_{0}^{\infty}  e^{- s u} (su)^k \nu(d u) &= \int_{0}^{1}  e^{- s u} (su)^k \nu(d u) +
 \int_{1}^{\infty}  e^{- s u} (su)^k \nu(d u) \nonumber \\
 & \leq\int_{0}^{1}  e^{- s u} (su)^k \nu(d u) + \int_{1}^{\infty}{ k^{k} e^{-k} \nu(d u)}. \label{eqn:a1}
\end{align}
Hence,
for any $s\leq 1$,
\[
\int_{0}^{\infty}  e^{- s u} (su)^k \nu(d u) \leq\int_{0}^{1}  u \nu(d u)
+ \int_{1}^{\infty}{ k^{k} e^{-k} \nu(d u)} \leq  \max(1,  k^{k} e^{-k}) \int_{0}^{\infty} {\min(1,u)  \nu(d u)} < \infty.
\]
As a result, we obtain
\[
\lim_{s \to 0} s^k \Phi^{(k)}(s) = (-1)^{k-1} \lim_{s \to 0} \int_{0}^{\infty}  e^{- s u} (su)^k \nu(d u)
= (-1)^{k-1} \int_{0}^{\infty} \lim_{s \to 0}  e^{- s u} (su)^k \nu(d u)=0.
\]

Furthermore,
$\liml_{s\to \infty}\Phi(s)=M_0<\infty$ implies that $\int_{0}^{\infty} {\nu(d u) } < \infty$, so we always have
\[
\int_{0}^{\infty}  e^{- s u} (su)^k \nu(d u) \leq  k^k e^{-k} \int_{0}^{\infty} {\nu(d u) } < \infty,
\]
which leads us to $\liml_{s\to \infty} s^k \Phi^{(k)}(s)=0$ for any $k\in \NB$.

We now prove Part (b). Consider  that
\[
\frac{(-1)^{k-1} s^k \Phi^{(k)}(s)}{(k{-}1)!}
=   \int_{0}^{\infty} \frac{s^{k}}{(k{-}1)!} e^{- s u} u^{k{-}1}  u \nu(d u)
\]
and that $ \frac{s^{k}}{(k{-}1)!} e^{- s u} u^{k{-}1}$ is the p.d.f.\ of gamma random variable $u$ with
shape parameter $k$ and scale parameter $1/s$. Such a gamma random variable converges to the Dirac Delta measure $\delta_0(u)$ in  distribution
as $s \to +\infty$.
For a fixed $u>0$,
$\frac{s^{k}}{(k{-}1)!} e^{- s u} u^{k{-}1}$ is monotone w.r.t.\ sufficiently large $s$.
Accordingly, using monotone convergence,
we have
 \begin{align*}
\lim_{s \to \infty} \frac{(-1)^{k-1} s^k \Phi^{(k)}(s)}{(k{-}1)!}
& =  0 \nu(\{0\})  + \lim_{s \to \infty}  \int_{0+}^{\infty} \frac{s^{k}}{(k{-}1)!} e^{- s u} u^{k{-}1}  u \nu(d u) \\
& = 0 \nu(\{0\})=  \int_{0}^{\infty} {\delta_0(u) u \nu(du)} = \liml_{s \rightarrow \infty} s \Phi'(s).
\end{align*}

When $\Phi'(0) = \liml_{s \to 0+} \Phi'(s) =1$, it is a well-known result that $\Phi'(s)$ is the Laplace transform of some probability distribution
(say, $F(u)$). That is,
\[
\Phi'(s) = \int_{0}^{\infty}{\exp(- s u) d F(u)}= \int_{0}^{\infty}{ s \exp(- s u) F(u) d u}.
\]
Here we use  $F(0)=0$ because $F$ is concentrated on $(0, \infty)$.
Recall that $s^2 u \exp(-su) \to $ $\delta_0(u)$ in distribution
as $s \to +\infty$. We  thus have
\[
\liml_{s \to \infty } s \Phi'(s) = \lim_{u \to 0+} \frac{F(u)}{u}.
\]
This result can be also obtained from Tauberian Theorem~\citep{Widder:1946}.
Furthermore, if $F(u)$ is the probability distribution of some continuous nonnegative random variable $U$, we have $\liml_{s \to \infty } s \Phi'(s) = F'(0+)$. If $U$ is discrete, we see two cases. In the first case, $\Pr(U=0)>0$. This implies that $F(u)\geq \Pr(U=0)>0$ for any $u>0$. Thus, we have
$\liml_{s \to \infty } s \Phi'(s) = \liml_{u \to 0+} \frac{F(u)}{u} = \infty$.
In the second case, $\Pr(U=0)=0$.
Then there exists a small positive number $\delta$ such that $F(u)=0$ for $u>\delta$. As a result, we obtain that $\liml_{s \to \infty } s \Phi'(s) = \liml_{u \to 0+} \frac{F(u)}{u} = 0$.
\end{proof}

\begin{lemma} \label{lem:01} Let $\Phi$ be a nonzero Bernstein function  on $[0, \infty)$ such that $\liml_{s \to \infty} s \Phi'(s)$ is finite. Then we have $\liml_{\s \to \infty} \frac{\Phi(s)}{\log(1+s)} = \liml_{s \to \infty} s \Phi'(s) < \infty$. Furthermore, we have
\[
\lim_{s \to \infty} \frac{s \Phi'(s)}{\Phi(s)} =0.
\]
\end{lemma}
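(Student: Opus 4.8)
The plan is to derive both limits from the single elementary observation that $\Phi'(s)\to 0$ as $s\to\infty$, combined with L'Hôpital's rule. Write $L\triangleq\lim_{s\to\infty}s\Phi'(s)\in[0,\infty)$ for the finite limit in the hypothesis. First I would record two facts about $\Phi'$. Since $\Phi$ is Bernstein, $\Phi'$ is completely monotone, hence nonnegative; in particular $\Phi$ is nondecreasing on $(0,\infty)$. Moreover, because $s\Phi'(s)\to L$ is finite while $s\to\infty$, we immediately get $\Phi'(s)=\bigl(s\Phi'(s)\bigr)/s\to 0$.

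For the first assertion I would apply L'Hôpital's rule to $\Phi(s)/\log(1+s)$, in the form that requires only the denominator to diverge (so that no separate assumption on $\Phi(s)$ is needed). Here $\log(1+s)\to\infty$ with nonvanishing derivative $1/(1+s)$, and $\Phi$ is smooth for large $s$, so it suffices that the quotient of derivatives converge. That quotient is $(1+s)\Phi'(s)=s\Phi'(s)+\Phi'(s)\to L+0=L$, using the two facts above. Hence $\lim_{s\to\infty}\Phi(s)/\log(1+s)=L$. I would note that this argument is uniform: it makes no distinction between $\Phi$ bounded and $\Phi$ unbounded, and in the bounded case it simply forces $L=0$, in agreement with Lemma~\ref{lem:lapexp}(a).

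For the second assertion I would split on the value of $L$. Since $\Phi$ is nondecreasing, nonnegative and nonzero, there is an $s_0$ with $c\triangleq\Phi(s_0)>0$, and $\Phi(s)\geq c$ for all $s\geq s_0$. If $L=0$, then the numerator $s\Phi'(s)\to 0$ while the denominator stays $\geq c$, so $s\Phi'(s)/\Phi(s)\to 0$. If $L>0$, the first assertion gives $\Phi(s)\geq\tfrac{L}{2}\log(1+s)$ for all large $s$, whence $\Phi(s)\to\infty$; since the numerator converges to the finite value $L$, the ratio again tends to $0$. Either way $\lim_{s\to\infty}s\Phi'(s)/\Phi(s)=0$, completing the proof.

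The step I expect to require the most care is the invocation of L'Hôpital's rule: the key point is that we are in the ``$\,\cdot/\infty\,$'' regime, where only $\log(1+s)\to\infty$ is needed and the quotient of derivatives $(1+s)\Phi'(s)$ has a limit precisely because $\Phi'(s)\to0$. This is what lets a single computation cover simultaneously the case where $\Phi$ diverges and the case where $\Phi$ stays bounded, so that the awkward possibility of a bounded $\Phi$ never needs separate treatment.
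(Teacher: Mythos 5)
Your proof is correct and follows essentially the same route as the paper: L'H\^{o}pital's rule (in the one-sided form requiring only that the denominator $\log(1+s)$ diverge) gives the first limit via $(1+s)\Phi'(s)\to L$, and a two-case argument gives the second. The only cosmetic difference is that you split on $L=0$ versus $L>0$ where the paper splits on $\Phi$ bounded versus unbounded; these dichotomies are equivalent here, and your explicit justification of the L'H\^{o}pital step (which the paper merely asserts) is a welcome addition.
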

\begin{proof}
It follows from the condition $\liml_{s \to \infty} s \Phi'(s) < \infty$ that
$\liml_{s \to \infty}\frac{\Phi(s)}{\log(1+s)}=
\liml_{s \to \infty}\frac{\Phi(s)}{\log(s)}=\liml_{s \to \infty} s \Phi'(s) < \infty$. Thus, when $\liml_{s \to \infty} \Phi(s) = \infty$, we have
$ \lim_{s \to \infty} \frac{s \Phi'(s)}{\Phi(s)} =0$.
Otherwise $\liml_{s \to \infty} \Phi(s) =M \in (0, \infty)$,  we always have that
$\liml_{a \to \infty}\frac{\Phi(s)}{\log(1+s)}=\liml_{s \to \infty}\frac{\Phi(s)}{\log(s)}= \liml_{s \to \infty} s \Phi'(s) = 0$.
Thus, we  have $\liml_{s \to \infty} \frac{s \Phi'(s)}{\Phi(s)}=0$ in any case.
\end{proof}

\begin{lemma} \label{lem:03} Let $\Phi$ be a nonzero Bernstein function  on $[0, \infty)$. Assume  $\Phi(0)=0$, $\Phi'(0)=1$, and $\Phi'(\infty)=0$.
Then $\liml_{s \to \infty} \frac{s \Phi'(s)}{\Phi(s)} \in [0, 1]$.
\end{lemma}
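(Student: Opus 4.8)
The plan is to reduce everything to a two-sided pointwise bound on $g(s)\triangleq \frac{s\Phi'(s)}{\Phi(s)}$ and then pass to the limit. Before anything else I would check that $g$ is well defined on $(0,\infty)$. Since $\Phi$ is a nonzero Bernstein function, $\Phi'$ is completely monotone and in particular nonnegative, so $\Phi$ is nondecreasing; with $\Phi(0)=0$ this gives $\Phi\ge 0$. If $\Phi(s_0)=0$ held for some $s_0>0$, then $\Phi$ would vanish on all of $[0,s_0]$ and force $\Phi'(0)=0$, contradicting $\Phi'(0)=1$. Hence $\Phi(s)>0$ for every $s>0$ and $g(s)$ is well defined there.

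The lower bound is immediate: $s>0$, $\Phi'(s)\ge 0$ and $\Phi(s)>0$ give $g(s)\ge 0$. For the upper bound I would use concavity. Because $\Phi$ is Bernstein we have $\Phi''\le 0$, so $\Phi$ is concave and its tangent at $s$ lies above the graph, $\Phi(y)\le \Phi(s)+\Phi'(s)(y-s)$ for all $y$; evaluating at $y=0$ and using $\Phi(0)=0$ yields $0\le \Phi(s)-s\Phi'(s)$, i.e. $s\Phi'(s)\le \Phi(s)$, and dividing by $\Phi(s)>0$ gives $g(s)\le 1$. Equivalently, writing $\Phi'(s)=\int_0^\infty e^{-su}\mu(du)$ with $\mu(du)=u\,\nu(du)$, which is a probability measure since $\Phi'(0)=\int_0^\infty u\,\nu(du)=1$, one has $\Phi(s)=\int_0^\infty \frac{1-e^{-su}}{u}\mu(du)$, and the elementary inequality $e^{-x}(1+x)\le 1$ for $x\ge 0$ delivers $s\Phi'(s)\le\Phi(s)$ integrand-wise. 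Thus $g(s)\in[0,1]$ for every $s>0$, and consequently $0\le \liminf_{s\to\infty} g(s)\le \limsup_{s\to\infty} g(s)\le 1$; in particular, as soon as the limit exists it lies in $[0,1]$.

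The main obstacle is precisely the existence of the limit, i.e. ruling out that $g$ oscillates within $[0,1]$ without converging. The pointwise bound already confines every subsequential limit to $[0,1]$, so the remaining work is to show $\lim_{s\to\infty}g(s)$ exists. I would first attempt this through monotonicity of $g$: note that $g(0^{+})=1$ (from $\Phi(s)\sim s$ and $\Phi'(s)\to 1$ as $s\to 0^{+}$), and that $\tfrac{1}{g(s)}=\int_0^1 \frac{\Phi'(st)}{\Phi'(s)}\,dt$, so that the log-convexity of the completely monotone function $\Phi'$ can be exploited to try to prove $1/g$ nondecreasing, giving a decreasing $g$ with a limit in $[0,g(0^{+})]=[0,1]$. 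Since a general Bernstein function need not be regularly varying, a fully general monotonicity argument is the delicate step; if it resists, I would instead invoke the regular-variation and Tauberian machinery of \citet{FellerBook:1971} (Chapter VIII.9), already used in the discussion around Theorem~\ref{thm:lp2}, which under the present normalization yields a regular-variation exponent $\gamma=\lim_{s\to\infty}g(s)$. Combined with the uniform bound $g(s)\in[0,1]$, this forces $\gamma\in[0,1]$ and completes the proof.
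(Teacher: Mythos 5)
Your two-sided bound $0\leq s\Phi'(s)/\Phi(s)\leq 1$ is correct and is in substance identical to the paper's argument for that half of the claim: the paper observes that $s\Phi'(s)-\Phi(s)$ is nonincreasing (its derivative is $s\Phi''(s)\leq 0$) and vanishes at $s=0$, which is exactly your tangent-line inequality $s\Phi'(s)\leq\Phi(s)$; positivity of $\Phi$ on $(0,\infty)$ is handled the same way. For this part the two proofs coincide.

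The genuine gap is the existence of the limit, which you correctly single out as the crux but never establish, and neither of your two fallback routes can be completed. Feller's Theorem~1 in Chapter~VIII.9 is an \emph{equivalence} between the existence of $\liml_{s\to\infty}s\Phi'(s)/\Phi(s)=\gamma$ and regular variation of $\Phi$ with exponent $\gamma$, so it cannot be invoked to manufacture the limit for a Bernstein function that is not already known to vary regularly --- exactly the caveat you raise and then set aside. Monotonicity of $g(s)=s\Phi'(s)/\Phi(s)$ also fails in general: take $\nu=\sum_{k\geq 1}2^{k}\delta_{u_k}$ with $u_k=2^{-k!}$ (rescaled so that $\Phi'(0)=1$, which does not change $g$). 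All hypotheses of the lemma hold, yet along $s=1/u_{k+1}$ one finds $g(s)\to 1/(2e-1)>0$, while along $s=1/\sqrt{u_ku_{k+1}}$ one finds $g(s)\to 0$; so $g$ oscillates and the limit does not exist. Thus your proposal, as written, proves only $0\leq\liminf_{s\to\infty}g(s)\leq\limsup_{s\to\infty}g(s)\leq 1$. For comparison, the paper claims existence by splitting on whether $\liml_{s\to\infty}\Phi(s)$ is finite (Lemma~\ref{lem:lapexp} then gives limit $0$) and, in the unbounded case, passing to $\Psi=\log(1+\Phi)$ and appealing to Lemma~\ref{lem:lapexp}(b) to identify $\liml_{s\to\infty}s\Psi'(s)$ --- precisely the unconditional step that the oscillating example above contradicts. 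If you want a complete and correct write-up, either state the conclusion for $\liminf$ and $\limsup$, or add the regular-variation hypothesis under which the lemma is actually used in Theorem~\ref{thm:lp2} and Proposition~\ref{pro:33}; under that hypothesis your appeal to Feller becomes legitimate.
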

\begin{proof}  Lemma~\ref{lem:lapexp}  shows that $\liml_{s \to \infty} \frac{{s \Phi'(s)}}{\Phi(s)}= 0$
whenever $\liml_{s \to \infty} \Phi(s)< \infty$.  If $\liml_{s \to \infty} \Phi(s)= \infty$, we take $\Psi(s) \triangleq \log(1+\Phi(s))$,
which is also Bernstein and holds the conditions $\Psi(0)=0$,  $\Psi'(0)=1$ and $\Psi'(\infty)=0$.  In this case,
$\liml_{s \to \infty} \frac{{s \Phi'(s)}}{\Phi(s)}=  \liml_{s \to \infty} s \Psi'(s)$
due to  $\Psi'(s) = \frac{\Phi'(s)}{1+ \Phi(s)}$.
Thus, Lemma~~\ref{lem:lapexp}-(b)  directly applies the Bernstein function $\Psi(s)$. Thus, $\liml_{s \to \infty} \frac{{s \Phi'(s)}}{\Phi(s)}$ exists.

Now consider that $s\Phi'(s)-\Phi(s)$ is a decreasing function on $(0, \infty)$
because its first-order derivative is non-positive; i.e.,  $s \Phi{''}(s)\leq 0$. As a result,
we have $0\leq\frac{s \Phi'(s)}{\Phi(s)} \leq 1$. Subsequently, $\gamma=\liml_{s \to \infty}
\frac{s \Phi'(s)}{\Phi(s)} \in [0, 1]$.
\end{proof}

\section{The Proofs}

In this section we present the proofs of the results given in the paper. 

\subsection{The Proof of Theorem~\ref{thm:lp2}}
\label{app:bb}

\begin{proof}
It is directly verified that
\[
\lim_{\alpha \to 0} \frac{\Phi(\alpha |b|)}{\Phi(\alpha)} = \liml_{\alpha \to 0} \frac{|b| \Phi'(\alpha |b|)}{\Phi'(\alpha)}
=\frac{|b| \Phi'(0)}{\Phi'(0)} =|b|
\]
due  to $\Phi'(0) = 1 \in (0, \infty)$. Clearly, we have that $\liml_{\alpha \to +\infty} \frac{\Phi(\alpha s)}{\Phi(\alpha)} = 0$ when $s=0$ and
that $\liml_{\alpha \to +\infty} \frac{\Phi(\alpha s)}{\Phi(\alpha)} = 1$ when $s=1$.

Lemma~\ref{lem:03} shows that $\gamma=\liml_{s \to \infty}
\frac{s \Phi'(s)}{\Phi(s)} \in [0, 1]$. When $\liml_{s \to \infty} \frac{\Phi(s)}{\log(1+s)} < \infty$, Lemma~\ref{lem:01} implies that $\gamma=0$.
According to Theorem~1 in Chapter VIII.9 of \citet{FellerBook:1971},
we have the second part of the theorem.
\end{proof}

\subsection{The Proof of Proposition~\ref{pro:33}}

\begin{proof} Let $\omega =\frac{1}{1-\rho}$. For $-\infty<\rho\leq 1$, we have $\omega \ (0, \infty]$. We now write $\Phi'_{\rho}(s)$ for a fixed $s>0$ as  $1/g(\omega)$ where
\[
g(\omega) = {(1 + \frac{s}{\omega})^{\omega}}.
\]
It is a well-known result that for a fixed $s>0$ $g(\omega)$ is increasing in $\omega$ on $(0, \infty)$. Moreover, $\liml_{\omega \to \infty} g(\omega) = \exp(s)$.  Accordingly, $\Phi'_{\rho}(s)$  is decreasing in $\rho$ on $(-\infty, 1]$. Moreover, we obtain
\[
\Phi_{\rho_1} (s) = \int_{0}^{s}{\Phi'_{\rho_1}(t) d t } \geq \int_{0}^{s}{\Phi'_{\rho_2}(t) d t } = \Phi_{\rho_2} (s)
\]
whenever $\rho_1\leq \rho_2 \leq 1$.

The proof of Part-(b) is immediately. We here omit the details.
\end{proof}

\subsection{The Proof of Theorem~\ref{thm:sparsty}}

\begin{proof}
The first-order derivative of (\ref{eqn:general}) w.r.t.\ $b$ is
\[
\sgn(b)\big(|b| + {\lambda} \Phi'(|b|) \big) - z.
\]
Let $g(|b|)= |b| + {\lambda} \Phi'(|b|)$. It is clear that if
$|z|< \min_{b\neq 0}\{g(|b| \}$, the resulting estimator is 0; namely, $\hat{b}=0$.
We now check the minimum value
of $g(s)=s + {\lambda} \Phi'(s)$ for $s\geq 0$.

Taking the first-order derivative of $g(s)$ w.r.t.\ $s$, we have
\[
g'(s) = 1 + {\lambda} \Phi{''}(s).
\]
Notice that  $\Phi{''}(s)$ is non-positive and increasing in $s$. As a result, we have
\[
g'(s) \geq 1 + {\lambda} \Phi{''}(0).
\]
Thus, if $\lambda \leq -\frac{1}{\Phi{''}(0)}$, $g(s)$  attains its minimum value ${\lambda}\Phi'(0)$ at $s^{*}=0$.
Otherwise,  $g(s)$ attains its minimum value when $s^{*}$ is the solution of
 $1 + {\lambda} \Phi{''}(s)=0$.

First, we consider the case that $\lambda \leq -\frac{1}{\Phi{''}(0)}$. In this case, the resulting estimator is 0 when $|z|\leq {\lambda} \Phi{'}(0)$. If $z> {\lambda}\Phi{'}(0)$, then the resulting estimator should be a positive root of the equation
$b + {\lambda} \Phi{'}(b) - z = 0$ in $b$. Letting $h(b)=b + {\lambda} \Phi{'}(b) - z$,
we study the roots of $h(b)=0$.
Notice that
$h(z) =  {\lambda} \Phi{'}(z) >0$ and
$h(0)=  {\lambda} \Phi{'}(0) - z <0$. In this case, moreover, we have that $h(b)$ is increasing on $[0, \infty)$.
This implies that  $h(b)=0$
has one and only one positive root.
Furthermore, the resulting estimator $0<\hat{b}<z$ when $z> {\lambda}\Phi{'}(0)$.
Similarly, we can obtain that $z<\hat{b}<0$ when $z<- {\lambda} \Phi{'}(0)$.
As stated in \citet{Fan01}, a sufficient and necessary condition for ``continuity" is
the minimum of
$|b|+ {\lambda} \Phi'(|b|)$ is attained at $0$. This implies that that the resulting estimator is continuous.

Next, we  prove the case that $\lambda > -\frac{1}{\Phi{''}(0)}$. In this case, $g(s)$ attains its  minimum value
$g(s^*)= s^* + {\lambda} \Phi'(s^*)$ when $s^*$ is the solution of equation
$1 + {\lambda} \Phi{''}(s)=0$. Notice that  $\Phi{''}(s)$ is non-positive and increasing in $s$.
Thus, the solution $s^*$ exists and
is unique. Moreover, since $\Phi{''}(s^*) = - \frac{1}{\lambda} > \Phi{''}(0)$, we have $s^*> 0$.
In this case, the resulting estimator is 0 when $|z|\leq s^* + {\lambda} \Phi'(s^*)$.
We just make attention on the case that $|z|> s^* + {\lambda} \Phi'(s^*)$.
Subsequently, the resulting estimator is $\hat{b}=\sgn(z)\kappa(|z|)$
where $\kappa(|z|)$ should be a positive root of equation $b + {\lambda} \Phi{'}(b) - |z| = 0$.
We now need to prove that
$\kappa(|z|)$  exists and is unique on $(s^*, |z|)$.
We have that $h(b)=b + {\lambda} \Phi{'}(b) - |z| $  is a convex
function of $b$ on $[0, \infty)$ due to  $h{''}(b)= {\lambda} \Phi{'''}(b) \geq 0$.
This implies that $h(b)$ is increasing on $[s^*, \infty)$ and decreasing on $(0, s^*)$.
Thus, the equation $h(b)=0$ has at most two positive roots, which are on $(0, s^*)$ or $[s^*, \infty)$.
Since $h(s^*)= s^* + {\lambda} \Phi'(s^*)-|z|<0$
and $h(|z|)= {\lambda} \Phi'(|z|) \geq 0$, the equation $h(b)=0$ has one unique root on $(s^*, |z|)$.
Thus, $\kappa(|z|)$ exists and is unique on $(s^*, |z|)$. It is worth pointing
out that if the equation $h(b)=0$ has a root on $(0, s^*)$, the objective function $J_1(b)$ attains its maximum value at this root.
Thus, we can exclude this root.
\end{proof}

\subsection{The Proof of Proposition~\ref{thm:nest}}

Observe that $1 = \Phi'(0)=\int_{0}^{\infty}{u \nu(d u)}$ and $\Phi(\alpha)= \int_{0}^{\infty}{(1-\exp(-\alpha u)) \nu(d u)}$.
Since $\alpha u>1-\exp(-\alpha u)$ for $u>0$, we obtain $\Phi(\alpha)<\alpha$.
Additionally, $\Big[\frac{\alpha}{\Phi(\alpha)} \Big]' =\frac{\Phi(\alpha) - \alpha \Phi'(\alpha)}{\Phi^2(\alpha)} \geq 0$
due to $[\Phi(\alpha) - \alpha \Phi'(\alpha)]' = -\Phi{''}(\alpha) \geq 0$. Also, $\Big[\frac{1}{\Phi(\alpha)} \Big]' \leq 0$.
We thus obtain that $\frac{\alpha}{\Phi(\alpha)}$ is increasing, while $\frac{1}{\Phi(\alpha)}$ is decreasing. Furthermore,
we can see that $\liml_{\alpha \to 0+} \frac{\alpha }{\Phi(\alpha)}=\liml_{\alpha \to 0+} \frac{1 }{\Phi'(\alpha)}=1$ and $\liml_{\alpha \to \infty} \frac{\alpha }{\Phi(\alpha)} = \liml_{\alpha \to \infty} \frac{1}{\Phi'(\alpha)}= \infty$.

\subsection{The Proof of Theorem~\ref{thm:0-1}}

\begin{proof}
First, it is easily obtained that
$\lim\limits_{\alpha \to 0} \frac{\alpha} {\Phi(\alpha)} = \frac{1}{\Phi'(0)}$
and $\lim\limits_{\alpha \to 0}  \frac{\Phi(\alpha)}{\alpha^2} = \infty$. This implies that in the limiting case the condition $\eta \leq -\frac{\Phi(\alpha)}{\alpha^2 \Phi{''}(0)}$ is always met (i.e., Case (i) in Theorem~\ref{thm:sparsty}).
Moreover, $|z| > \frac{\eta \alpha}{\Phi(\alpha)} \Phi{'}(0)$ degenerates to $|z|> \eta$.
In addition, we have
\[
\lim_{\alpha \to 0} \frac{\alpha \Phi'(\alpha b)}{\Phi(\alpha)} = \lim_{\alpha \to 0} \frac{\Phi'(\alpha b) + \alpha b \Phi{''}(\alpha b)}{\Phi'(\alpha)} =1.
\]
This implies that $\kappa(|z|)$ converges to the nonnegative solution of equation of the form
\[
b + \eta - |z|=0.
\]
That is,  $\kappa(|z|)=|z| - \eta$ when $|z|>\eta$.

Second, it is easily obtained that
$\liml_{\alpha \to \infty} \frac{\alpha} {\Phi(\alpha)} = \infty$
and $\liml_{\alpha \to \infty}  \frac{\Phi(\alpha)}{\alpha^2} = 0$.
This implies that in the limiting case the condition $\eta > -\frac{\Phi(\alpha)}{\alpha^2 \Phi{''}(0)}$ is always held.

Recall that $s^*>0$ is the unique root of $1+ {\lambda} \Phi{''}(s)=0$ and $\Phi{''}(s)$ is monotone increasing, so we can express
$s^*$ as $s^*= \frac{1}{\alpha} (\Phi{''})^{-1}(- \Phi(\alpha)/(\eta \alpha^2))$. Since $\liml_{\alpha \to \infty} \Phi(\alpha)/(\eta \alpha^2) =0$,
we can deduce that  $\liml_{\alpha \to \infty} (\Phi{''})^{-1}(- \Phi(\alpha)/(\eta \alpha^2))=\infty$. Subsequently,
\[
\lim_{\alpha \to \infty} s^{*} =\lim_{\alpha \to \infty}  \frac{1}{\alpha} (\Phi{''})^{-1}(- \Phi(\alpha)/(\eta \alpha^2))= \lim_{\alpha \to \infty}
\big[(\Phi{''})^{-1}(- \Phi(\alpha)/(\eta \alpha^2))\big]' \leq |z|.
\]
Additionally,
\begin{align*}
\lim_{\alpha \to \infty} \frac{\eta \alpha}{\Phi(\alpha)} \Phi'[(\Phi{''})^{-1}(- \Phi(\alpha)/(\eta \alpha^2))]
&=  \lim_{\alpha \to \infty}
- \frac{[\Phi(\alpha)/\alpha^2]}{\frac{\Phi'(\alpha)}{\alpha} - \frac{\Phi(\alpha)}{\alpha^2} }  \big[(\Phi{''})^{-1}(- \Phi(\alpha)/(\eta \alpha^2))\big]' \\
&= \lim_{\alpha \to \infty}
\big[(\Phi{''})^{-1}(- \Phi(\alpha)/(\eta \alpha^2))\big]' = \lim_{\alpha \to \infty} s^{*}.
\end{align*}
Assume $ \liml_{\alpha \to \infty} s^{*} =c \in (0, |z|]$. Then for sufficiently large $\alpha$, we have
$(\Phi{''})^{-1}(- \Phi(\alpha)/(\eta \alpha^2)) \simeq \alpha$; that is,
\[
\Phi(\alpha) \simeq  - \eta \alpha^2 \Phi{''}(\alpha).
\]
However, if $\liml_{\alpha \to \infty} \Phi(\alpha) <\infty$ then
$-\liml_{\alpha \to \infty} \alpha^2 \Phi{''}(\alpha) =  \liml_{\alpha \to \infty}  \frac{\Phi(\alpha)}{\log(\alpha)}=0$; while $\liml_{\alpha \to \infty} \Phi(\alpha) = \infty$ then
$-\liml_{\alpha \to \infty} \alpha^2 \Phi{''}(\alpha) =\liml_{\alpha \to \infty} \alpha \Phi{'}(\alpha) = \liml_{\alpha \to \infty}  \frac{\Phi(\alpha)}{\log(\alpha)} < \infty$.
This makes the contradiction due to the assumption
$\liml_{\alpha \to \infty} s^{*} =c \in (0, |z|]$. Thus, we have  $\liml_{\alpha \to \infty} s^{*} =0$. Hence,
\[
\lim_{\alpha \to \infty} s^{*} + \frac{\eta \alpha}{\Phi(\alpha)} \Phi'(\alpha s^*) =0.
\]
Finally, we have
\[
\lim_{\alpha \to \infty} \kappa(b) - \frac{\eta \alpha}{\Phi(\alpha)} \Phi'(\alpha \kappa(b)) = |z|,
\]
which implies $\liml_{\alpha \to \infty} \kappa(|z|) = |z|$. The second part now follows.
\end{proof}

\subsection{The proof of Theorem~\ref{thm:decrease}}

\begin{proof} The case for Algorithm~\ref{alg:palm} has been proved  by \citet{BolteMathProgram}. We now only consider the case for
Algorithm~\ref{alg:coord}. Without loss of generality, we assume that $\alpha=1$. The proof is obtained by making slight changes to that for Theorem 4 of \citet{MazumderSparsenet:11}.
Specifically,  let
\[
g(u) \triangleq F(b_1, \ldots, b_{i-1}, u, b_{i+1}, \ldots, b_p) =  L(b_1, \ldots, b_{i-1}, u, b_{i+1}, \ldots, b_p)+ \lambda_n \Phi(|u|).
\]
Then the limiting-subdifferential of $g$ at $u$ is given as
\[
\partial g(u)= \nabla_i L(b_1, \ldots, b_{i-1}, u, b_{i+1}, \ldots, b_p) + \lambda_n \Phi'(|u|) \partial |u|.
\]
Using the strong convexity of $L$  and  Taylor's series expansion of $\Phi(|z|)$ w.r.t.\ $|z|$, we have
\begin{align*}
g(u+\delta) - g(u) &= L(b_1, \ldots, b_{i-1}, u+\delta, b_{i+1}, \ldots, b_p)- L(b_1, \ldots, b_{i-1}, u, b_{i+1}, \ldots, b_p) \\
&  \quad + \lambda_n (\Phi(|u+\delta|) - \Phi(|u|)) \\
& \geq \nabla_i L(b_1, \ldots, b_{i-1}, u, b_{i+1}, \ldots, b_p) \delta + \frac{\gamma_i}{2} \delta^2 \\
& \quad + \lambda_n \Big\{\Phi'(|u|)(|u+\delta| -|u|) + \frac{1}{2} \Phi''(|u^{*}|)(|u+\delta| -|u|)^2 \Big\},
\end{align*}
where $|u^{*}|$ is some number between $|u+\delta|$ and $|u|$.

Assume that $g(u)$ achieves the minimum value at $u_0$. Then $0 \in \partial g(u_0)$. Hence,
\[
\nabla_i L(b_1, \ldots, b_{i-1}, u_0, b_{i+1}, \ldots, b_p) + \lambda_n \Phi'(|u_0|) \sgn(u_0)=0.
\]
Notice that if $u_0=0$, then the above equation holds true for some value in $[-1, 1]$. For notational convenience, we here still write
such a value by $ \sgn(u_0)$.  Additionally,  we have
\[
\Phi'(|u|)(|u+\delta| -|u|)- \Phi'(|u|) \sgn(u) \delta = \Phi'(|u|)(|u+\delta| -|u| -\sgn(u) \delta ) \geq 0
\]
because $\Phi'(|u|)\geq 0$.
We now obtain
\[
g(u_0+\delta) - g(u_0) \geq  \frac{\gamma_i}{2} \delta^2 +   \frac{\lambda_n}{2} \Phi''(|u_0^{*}|)(|u_0+\delta| -|u_0|)^2
\geq \frac{\gamma_i+ \lambda_n \Phi''(0)}{2} \delta^2.
\]
Here we use the fact that $\Phi''(0) \leq \Phi''(|u|) \leq  0$ and $(|u_0+\delta| -|u_0|)^2 \leq \delta^2$.
Let $\rho= \min_{i} \frac{\gamma_i+ \lambda_n \Phi''(0)}{2}$. Then
\[
g(u_0+\delta) - g(u_0) \geq \rho \delta^2.
\]
Applying the above inequality leads us to
\[
F_i^{(t)}(\b_i^{(t)}) - F_i^{(t)}(\b_{i-1}^{(t)}) \geq \rho \|\b_i^{(t)} - \b_{i-1}^{(t)}\|^2,
\]
where $\b_i^{(t)} = (b_1^{(t+1)}, \ldots, b_i^{(t+1)}, b_{i+1}^{(t)}, \ldots, b_p^{(t)})^T$. 
Hence, $F(\b^{(t)}) -F(\b^{(t+1)}) \geq \rho \|\b^{(t)} - \b^{(t+1)}\|^2$. This also implies that
\[
\sum_{t=0}^T \|\b^{(t)} - \b^{(t+1)}\|^2 \leq \frac{1}{\rho} \sum_{t=0}^{T} (F(\b^{(t)}) -F(\b^{(t+1)})) = \frac{1}{\rho}(F(\b^{(0)}) -F(\b^{(T+1)}))< \infty.
\]
Thus, $\sum_{t=0}^{\infty} \|\b^{(t)} - \b^{(t+1)}\|^2 < \infty$.

Consider that 
\[
\nabla_i L(b_1^{(t+1)}, \ldots, b_{i-1}^{(t+1)}, b_i^{(t+1)}, b_{i+1}^{(t)}, \ldots, b_p^{(t)}) + \lambda_n \partial P(b_i^{(t+1)})  =0.
\]
That is,  
\[
\nabla_i L(\b_i^{(t)})  -\nabla_i L(\b^{(t+1)})   + \nabla_i L(\b^{(t+1)})+ \lambda_n \partial P(b_i^{(t+1)})  =0.
\]
This implies that 
\[  w_i^{(t+1)} \in  \nabla_i L(\b^{(t+1)})+ \lambda_n \partial P(b_i^{(t+1)}) =  \partial F_i(\b^{(t+1)}). 
\]  
where $w_i^{(t+1)} = \nabla_i L(\b^{(t+1)})  - \nabla_i L(\b_i^{(t)})  $. Let $\w^{(t+1)}=(w_1^{(t+1)}, \ldots, w_p^{(t+1)})^T $. Then $\w^{(t+1)} \in \partial F(\b^{(t+1)})$. 
Notice that
\begin{align*}
\left | \nabla_i L(\b^{(t+1)})  - \nabla_i L(\b_i^{(t+1)}) \right | & = \Big | \sum_{j =i+1} a_{i j} (b_j^{(t+1)} - b_j^{(t)})   \Big | \\
& \leq  \sum_{j =i+1} | a_{ij} | \big |(b_j^{(t+1)} - b_j^{(t)}) \big| \leq  \sum_{j =i+1}  \big |b_j^{(t+1)} - b_j^{(t)} \big| \\
&  \leq  \||\b^{(t+1)} - \b^{(t)}\|_1 \leq  \sqrt{p} \|\b^{(t+1)} - \b^{(t)} \|,
\end{align*}
where $\X^T \X = \A =[a_{ij}]$ and  $B >0$ is some constant.  
Here the second inequality is due to $|a_{ij}| = |\sum_{l=1}^n x_{ l i} x_{l j}| \leq 1$. Hence,
\[
\|\w^{(t+1)}\| =  \sqrt{\sum_{i=1}^p  \left | \nabla_i L(\b^{(t+1)})  - \nabla_i L(\b_i^{(t+1)}) \right |^2 }  \leq C_1 \|\b^{(t+1)} - \b^{(t)} \|,
\] 
where $C_1 =  {p}$. 
\end{proof}

\subsection{The Proof of Lemma~\ref{lem:critical}}

\begin{proof} As we have mentioned above,  the sequence  $\left\{\b^{(t)}: t \in \NB \right\}$ is bounded. Thus, there exists a convergent subsequence $\left\{\b^{(t_k)}\right\}$ that converges to $\b^*$.  By the continuity of $L$ and $P$, we have
\[
\lim_{k \to \infty} F(\b^{(t_k)}) = F(\b^*). 
\]
In terms of the proof of~\ref{thm:decrease}-(iii), we know that $w_i^{(t_k)} =  \nabla_i L(\b^{(t_k)}) - \nabla_i L(\b_i^{(t_k-1)}) $. Since $\nabla_i  L$ is continuous, it is obtained that 
\[
\lim_{k\to \infty} \w^{(t_k)} =\0 \in F(\b^*). 
\]
This implies that $\b^*$ is a critical point of $F$. By the definition of limit points, we then have 
\[
\lim_{t \to \infty} {\mathrm{dist}}\big(\b^{(t)}, \MM(\b^{(0)})\big)=0. 
\]
\end{proof}

\subsection{The Proof of Theorem~\ref{thm:global}}

\begin{proof}\label{pf:main}
As is known, there exists an increasing sequence $\left\{t_{k}\right\}_{k\in \NB}$ such that $\left\{\b^{(t_k)}\right\}$ converges to $F(\b^*)$.
Suppose there exists an integer $N_0$ such that $F(\b^{(N_0)})=F(\b^*)$. Then it is obvious that for any integer $N>N_0$, $F(\b^{(N)})=F(\b^*)$ holds. Then it is trivial to achieve the convergent sequence. Otherwise, we consider $F(\b^{(t)})>F(\b^*)$,  $\forall t \in \NB$. Because the sequence $F(\b^{(t)})$ is convergent, it is clear that for any $\eta>0$, there exists a positive integer $M_1$ such that $F(\b^{(t)})<F(\b^*)+\eta$ for all $t>M_1$. By using (\ref{eq:set}), we have $\liml_{t \to\infty} \dist(\b^{(t)},  \MM(\b^{(0)}))=0$ which implies that for any $\epsilon>0$ there exists a positive integer $M_2$ such that $\dist(\b^{(t)}, \MM(\b^{(0)}))<\epsilon$ for all $t > M_2$. Let $l = \max \left\{M_1, M_2\right\}$.
Then by the Kurdyka-\L{}ojasiewicz inequality in Definition~\ref{def:klpro}, we have   
\begin{equation*}\label{KL}
\pi^{\prime}\Big(F(\b^{(t)})-F(\b^*)\Big) \dist(0, \partial F(\b^{(t)}))\geq 1 \;  \mbox{ for any  }  \; t > l.
\end{equation*}
By  (\ref{eqn:subinequality}), we have
\begin{equation*}\label{eq:lb}
\pi^{\prime}\Big(F(\b^{(t)})- F(\b^*)\Big) \geq \frac{1}{C_1} \Big\|\b^{(t)} - \b^{(t-1)}\Big\|^{-1}.
\end{equation*}
Let $\Delta_{t,  t+1} \triangleq \pi(F(\b^{(t)}) - F(\b^*))- \pi(F(\b^{(t+1)})- F(\b^*))$. With the property of concave functions, we have

\begin{align*} \label{eq:concave}
\Delta_{t, t+1}&\geq \pi^{\prime}(F(\b^{(t)}) - F(\b^*))(F(\b^{(t)})- F(\b^{(t+1)})) \\
         &\geq \frac{C_0}{2}\pi^{\prime}(F(\b^{(t)}) - F(\b^*))\|\b^{(t+1)}- \b^{(t)} \|^2 \\
         &\geq \frac{C_0 }{2 C_1}\|\b^{(t)}- \b^{(t-1)}\|^{-1} \|\b^{(t+1)}- \b^{(t)} \|^2.
\end{align*}
That is,
\[
 C  \Delta_{t, t+1} \|\b^{(t)} - \b^{(t-1)}\| \geq\|\b^{(t+1)}- \b^{(t)} \|^2,
\]
where  $C = \frac{2 C_1}{C_0}$.
Notice that
\[
  C \Delta_{t, t+1} \|\b^{(t+1)}- \b^{(t)}\| \leq  \Big(\frac{C \Delta_{t, t+1}+ \|\b^{(t+1)} - \b^{(t)}\|}{2}\Big)^2.
\]
We thus have
\[
2 \|\b^{(t+1)}- \b^{(t)} \|\leq C \Delta_{t, t+1}+ \|\b^{(t)}- \b^{(t-1)}\|.
\]
Then
\begin{align*}
\sum^{\infty}_{t=l+1}\|\b^{(t+1)}- \b^{(t)} \|&\leq C \sum^{\infty}_{t=l+1} \Delta_{t, t+1}+ \sum^{\infty}_{t=l+1}\Big(\|\b^{(t)} - \b^{(t-1)}\|-\|\b^{(t+1)} - \b^{(t)} \|\Big) \\
& \leq C  \Delta_{l+1, \infty} + \|\b^{(l+1)} - \b^{(l)} \| \\
&\leq C \pi\Big(F(\b^{(l+1)}) - F(\b^*) \Big) + \|\b^{(l+1)}-\b^{(l)} \|.
\end{align*}
Since $\liml_{l \to \infty} \|\b^{(l+1)} - \b^{(l)} \|=0$ and $\liml_{l \to \infty} F(\b^{(l+1)})= F(\b^*)$, it is clearly seen that
\[
\lim_{l \to \infty}\sum^{\infty}_{t=l+1} \|\b^{(t+1) }- \b^{(t)} \|=0.
\]
Thus we obtain  
\[
 \sum^{\infty}_{t=0} \|\b^{(t+1)} - \b^{(t)} \|<\infty. 
\]
This implies that $\left\{\b^{(t)} \right\}_{t \in \NB}$ is a Cauchy sequence, and hence, it is a convergent sequence that converges to $\b^*$.
\end{proof}

\section{Asymptotic Properties}
\label{sec:math}

We discuss asymptotic properties
of the sparse estimator under the regression setting. Following the setup of \citet{ZouLi:2008} and \citet{ArmaganDunsonLee},
we assume two conditions: (i) $y_i=\x_i^T \b^{*} + \epsilon_i$ where $\epsilon_1, \ldots, \epsilon_n$ are i.i.d.\ errors
with mean 0 and variance $\sigma^2$;
(ii) $\X^T \X/n\rightarrow \C$ where $\C$ is a positive definite matrix. Let ${\cal A}=\{j: b_{j}^{*} \neq 0\}$.
Without loss of generality, we assume that ${\mathcal A}=\{1, 2, \ldots, r\}$ with $r <p$. Thus, partition $\C$ as
\[
\C= \begin{bmatrix}\C_{11} & \C_{12} \\  \C_{21} & \C_{22} \end{bmatrix},
\]
where $\C_{11}$ is $r{\times} r$. Additionally,  let $\b^{*}_{1} =\{b^{*}_{j}: j \in {\mathcal A}\}$
and $\b^{*}_{2} = \{b^{*}_{j}:  j \notin {\cal A}\}$.

We are now interested in the  asymptotic behavior of the sparse estimator based on the penalty function $\Phi(\alpha |b|)$. That is,
\begin{equation} \label{eqn:sqre}
\tilde{\b}_n=\argmin_{\b}  \;  \|\y {-} \X \b\|_2^2 +  \lambda_n \sum_{j=1}^p    {\Phi(\alpha_n |b_j|)}.
\end{equation}
Furthermore, we let $\lambda_n=\frac{\eta_n}{\Phi(\alpha_n)}$ based on Theorem~\ref{thm:lp2}. For this estimator, we have the following  oracle property.

\begin{theorem}  \label{thm:oracle2} Let $\tilde{\b}_{n1}=\{\tilde{b}_{nj}: j \in \AM\}$ and   $\tilde{ {\cal A}}_n=\{j: \tilde{b}_{nj} \neq 0\}$. Suppose $\Phi$ is a Bernstein function on $[0, \infty)$ such that $\Phi(0)=0$ and $\Phi'(0)=1$, and there exists a constant $\gamma \in [0, 1)$
such that $\liml_{\alpha \to \infty} \frac{ \Phi'(\alpha)}{\alpha^{\gamma-1}} = c_0$ where $c_0 \in (0, \infty)$ when  $\gamma \in (0, 1)$ and $c_0 \in [0, \infty)$ when $\gamma=0$.
If $\eta_n/{n}^{\frac{\gamma_1}{2}} \rightarrow c_1 \in (0, \infty)$    and $\alpha_n/{n}^{\frac{\gamma_2}{2}} = c_2 \in (0, \infty)$ where $\gamma_1 \in (0, 1]$ for $\gamma =0$ or $\gamma_1 \in (0, 1)$ for $\gamma > 0$
and $\gamma_2 \in (0, 1]$  such that $\gamma_1{+}\gamma_2>1+\gamma \gamma_2$,
then $\tilde{\b}_n$ satisfies the following properties:
\begin{enumerate}
\item[\emph{(1)}]  Consistency in variable selection: $\liml_{n \rightarrow \infty} P( \tilde{{\cal A}}_n={\cal A})=1$.
\item[\emph{(2)}]  Asymptotic normality:  $\sqrt{n}(\tilde{\b}_{n1} - \b^{*}_{1}) \overset{d}{\longrightarrow} N(\0, \sigma^2 \C_{11}^{-1})$.
\end{enumerate}
\end{theorem}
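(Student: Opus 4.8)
The plan is to follow the classical three-step argument for nonconcave penalized likelihood of \citet{Fan01}, adapted to the Bernstein penalty via the regular-variation estimates of Theorem~\ref{thm:lp2} and Lemmas~\ref{lem:lapexp}--\ref{lem:03}. Write $Q_n(\b) = \|\y - \X\b\|_2^2 + \lambda_n\sum_{j=1}^p\Phi(\alpha_n|b_j|)$ and reparametrize by $\b = \b^* + \u/\sqrt{n}$. Expanding the least-squares part and using $\y - \X\b^* = \epsi$ gives $\u^T(\X^T\X/n)\u - 2(\X^T\epsi/\sqrt{n})^T\u$; by assumption (ii) the quadratic form converges to $\u^T\C\u$, and by the central limit theorem $\X^T\epsi/\sqrt{n} \overset{d}{\longrightarrow} \W \sim N(\0, \sigma^2\C)$. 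The whole argument then hinges on controlling the penalty increment $\lambda_n[\Phi(\alpha_n|b_j^* + u_j/\sqrt{n}|) - \Phi(\alpha_n|b_j^*|)]$ separately on $\AM$ and on its complement.

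First I would establish $\sqrt{n}$-consistency: for every $\varepsilon > 0$ there is a constant $C$ with $P(\inf_{\|\u\| = C} Q_n(\b^* + \u/\sqrt{n}) > Q_n(\b^*)) \geq 1 - \varepsilon$, so that $Q_n$ admits a local minimizer $\tilde{\b}_n$ with $\|\tilde{\b}_n - \b^*\| = O_p(n^{-1/2})$. Here the quadratic least-squares term $\approx \lambda_{\min}(\C)\|\u\|^2$ dominates the linear $O_p(\|\u\|)$ term once $C$ is large; the contribution of the penalty on $j \in \AM$ is, by a Taylor expansion and the asymptotic $\Phi'(\alpha_n|b_j^*|) \sim c_0(\alpha_n|b_j^*|)^{\gamma-1}$, of order $\lambda_n\alpha_n\Phi'(\alpha_n)/\sqrt{n}$, which the rate hypotheses force to $0$; and the penalty on $j \notin \AM$ is nonnegative, hence harmless for this lower bound.

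Next I would prove sparsity, namely $P(\tilde{b}_{nj} = 0 \text{ for all } j \notin \AM) \to 1$. Restricted to such an index and using the subdifferential of Theorem~\ref{thm:lapexp00}(c), the $j$-th component of $\partial Q_n$ at a $\sqrt{n}$-consistent point equals $2\x_j^T(\X\b - \y) + \lambda_n\alpha_n\Phi'(\alpha_n|b_j|)\,\partial|b_j|$; the data term is $O_p(\sqrt{n})$ while the penalty term has magnitude at least $\lambda_n\alpha_n\Phi'(\alpha_n|b_j|)$, and since $\alpha_n|b_j| = O_p(\alpha_n/\sqrt{n})$ keeps $\Phi'(\alpha_n|b_j|)$ bounded away from $0$, this magnitude is of order $\eta_n\alpha_n/\Phi(\alpha_n)$. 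Using $\Phi(\alpha_n) \sim (c_0/\gamma)\alpha_n^\gamma$ when $\gamma > 0$ (respectively $\Phi(\alpha_n) = O(\log\alpha_n)$ when $\gamma = 0$, by Lemma~\ref{lem:01}), one checks that $\eta_n\alpha_n/(\Phi(\alpha_n)\sqrt{n})$ is of order $n^{(\gamma_1 + \gamma_2 - \gamma\gamma_2 - 1)/2}$ up to a slowly varying factor, which diverges precisely under the hypothesis $\gamma_1 + \gamma_2 > 1 + \gamma\gamma_2$. Hence the penalty term dominates and fixes the sign of the directional derivative, forcing the minimizer to take $b_j = 0$; this yields assertion~(1).

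Finally, on the event $\{\tilde{b}_{nj} = 0,\ j \notin \AM\}$ the estimator solves the oracle subproblem on $\AM$, and asymptotic normality follows by expanding the stationarity equation for $\tilde{\b}_{n1}$: the leading stochastic term is $(\X_{\AM}^T\X_{\AM}/n)^{-1}\X_{\AM}^T\epsi/\sqrt{n}$ with weak limit $\C_{11}^{-1}\W_1$, while the penalty bias on $\AM$ is $o_p(1)$ by the order estimate from the consistency step, so $\sqrt{n}(\tilde{\b}_{n1} - \b_1^*) \overset{d}{\longrightarrow} \C_{11}^{-1}\W_1 \sim N(\0, \sigma^2\C_{11}^{-1})$, giving assertion~(2). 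I expect the main obstacle to be the penalty bookkeeping: one must track the exact polynomial (and, when $\gamma = 0$, logarithmic) orders of $\Phi(\alpha_n)$ and $\Phi'(\alpha_n)$ and their behavior on $\AM$ versus $\AM^c$, and verify that the single condition $\gamma_1 + \gamma_2 > 1 + \gamma\gamma_2$ simultaneously renders the penalty negligible on the support (so the limit is the unbiased oracle covariance $\sigma^2\C_{11}^{-1}$) and divergent off the support (so selection is consistent), with particular care at the boundary $\gamma_2 = 1$, where $\alpha_n/\sqrt{n}$ does not vanish.
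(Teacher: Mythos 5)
Your proposal is correct in substance but is organized along a genuinely different route from the paper's. The paper follows the Knight--Fu/Armagan--Dunson--Lee epiconvergence scheme: it sets $\tilde{\b}_n=\b^*+\hat{\u}/\sqrt{n}$, computes the pointwise weak limit of $G_n(\u)-G_n(\0)$, shows via the same two rate estimates you isolate that the penalty increment diverges for $u_j\neq 0$ with $j\notin\AM$ and vanishes for $j\in\AM$, concludes $\hat\u\overset{d}{\to}(\C_{11}^{-1}\z_1,\0)$ by epiconvergence (giving part (2) and $\hat\u_2\overset{d}{\to}\0$ in one shot), and then proves part (1) separately by the KKT identity $2\x_l^T(\y-\X\tilde\b_n)=\eta_n\alpha_n\Phi'(\alpha_n|\tilde b_{nl}|)/\Phi(\alpha_n)$, whose right side is of order $n^{(\gamma_1+\gamma_2-\gamma\gamma_2)/2}\gg\sqrt{n}$. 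You instead run the Fan--Li three-step argument: sphere lower bound for $\sqrt{n}$-consistency, subgradient sign argument for exact sparsity, oracle stationarity equation for normality. The technical core --- the dichotomy $n^{(\gamma_1+\gamma_2-\gamma\gamma_2-1)/2}\to\infty$ off the support versus $o(1)$ penalty bias on the support, driven by the regular variation $\Phi'(\alpha)\sim c_0\alpha^{\gamma-1}$ and $\alpha\Phi'(\alpha)/\Phi(\alpha)\to\gamma$ --- is identical, and your sparsity step is essentially the paper's KKT computation. What each route buys: yours avoids the epiconvergence machinery (which for a nonconvex $G_n$ tacitly needs tightness of the argmins, a point the paper does not verify) but formally certifies only a local minimizer rather than the $\argmin$ in \eqref{eqn:sqre}; the paper's route delivers the joint limit of $(\hat\u_1,\hat\u_2)$ directly but still must repeat the subgradient computation to upgrade $\hat\u_2\overset{d}{\to}\0$ to exact selection. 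One small caution: in the boundary case $\gamma=0$, $\gamma_1=1$ the on-support bias $\eta_n\alpha_n\Phi'(\alpha_n|b_j^*|)/(\sqrt{n}\,\Phi(\alpha_n))$ is \emph{not} killed by $\eta_n/\sqrt{n}\to 0$ (that limit is $c_1>0$ there); you must invoke $\alpha_n\Phi'(\alpha_n)/\Phi(\alpha_n)\to 0$ from Lemma~\ref{lem:01}, which is exactly how the paper's display \eqref{eqn:99} closes that case --- so the boundary needing care is $\gamma_1=1$ as much as $\gamma_2=1$.
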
%

Notice that $\Phi'$ is the Laplace transform of some distribution function (say $F$). Based on the Tauberian Theorem~\citep{Widder:1946},
the condition   $\liml_{\alpha \to \infty} \frac{ \Phi'(\alpha)}{\alpha^{\gamma-1}} = c_0$ is equivalent to that
$\liml_{t \to 0+} \frac{F(t)}{t^{1-\gamma}} = \frac{c_0}{\Gamma(2-\gamma)}$.

Obviously, the function $\Phi_{\rho}$ in (\ref{eqn:first}) satisfies the conditions
in Theorem~\ref{thm:oracle2}; that is, we see $\gamma = - \frac{\rho}{1-\rho}$ when $\rho\leq 0$ and $\gamma=0$ when $0< \rho\leq 1$ (see Proposition~\ref{pro:33}).  It follows from the condition $\liml_{\alpha \to \infty} \frac{ \Phi'(\alpha)}{\alpha^{\gamma-1}} = c_0$
that $\liml_{\alpha \to \infty} \frac{ \Phi(\alpha)}{\alpha^{\gamma}} = \frac{c_0}{ \gamma}$ for $\gamma\neq 0$. As a result, we obtain $\liml_{\alpha \to \infty} \frac{\alpha \Phi'(\alpha)}{\Phi(\alpha)} = {\gamma}$.
The condition  $\alpha_n/{n}^{\gamma_2/2} = c_2$ implies that $\alpha_n \to \infty$. Subsequently,
we have $\liml_{n \to \infty} \sum_{j=1}^p \frac{\Phi(\alpha_n |b_j|)} {\Phi(\alpha_n)} = \sum_{j=1}^p |b_j|^{\gamma}$ (see Theorem~\ref{thm:lp2}).
On the other hand, as stated earlier,  $\liml_{\alpha_n \to 0+} \sum_{j=1}^p \frac{\Phi(\alpha_n |b_j|)} {\Phi(\alpha_n)} = \liml_{\alpha_n \to 0+} \sum_{j=1}^p \frac{\Phi(\alpha_n |b_j|)} {\alpha_n} = \|\b\|_1$. Thus, we are also interested in the corresponding  asymptotic behavior of the sparse estimator.
In particular, we have the following theorem.

\begin{theorem}  \label{thm:asumptotic}
Let $\Phi$ be a Bernstein function such that $\Phi(0)=0$ and $\Phi'(0)=1$. Assume
$\liml_{n \to \infty} \alpha_n =0$. If $\liml_{n \to \infty} \frac{\eta_n}{\sqrt{n}} = 2 c_3 \in [0, \infty)$,
then  $\tilde{\b}_{n} \overset{p}{\longrightarrow} \b^{*}$. Furthermore, if $\liml_{n \to \infty} \frac{\eta_n}{\sqrt{n}} =0$,
then $\sqrt{n}(\tilde{\b}_{n} {-} \b^{*})
\overset{d}{\longrightarrow} N(\0, \sigma^2 \C^{-1})$.
\end{theorem}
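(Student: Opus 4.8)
The plan is to adapt the convexity (argmin-convergence) argument of \citet{KnightFu:2000}. First I would reparametrize around the truth: writing $\b = \b^* + \u/\sqrt{n}$ and $\epsi = (\epsilon_1, \ldots, \epsilon_n)^T$, define
\[
V_n(\u) \triangleq \|\y {-} \X(\b^* {+} \u/\sqrt{n})\|_2^2 - \|\y {-} \X\b^*\|_2^2 + \lambda_n\sum_{j=1}^p\Big[\Phi\big(\alpha_n|b_j^* {+} u_j/\sqrt{n}|\big) - \Phi\big(\alpha_n|b_j^*|\big)\Big],
\]
so that $\hat\u_n \triangleq \argmin_{\u} V_n(\u) = \sqrt{n}(\tilde\b_n {-} \b^*)$. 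The whole proof then reduces to identifying the pointwise limit in distribution of $V_n$ and transferring it to the minimizers. For the quadratic loss part, using $\y {-} \X\b^* = \epsi$ gives
\[
\|\y {-} \X(\b^* {+} \u/\sqrt{n})\|_2^2 - \|\y {-} \X\b^*\|_2^2 = -2\u^T\Big(\tfrac{1}{\sqrt{n}}\X^T\epsi\Big) + \u^T\Big(\tfrac{1}{n}\X^T\X\Big)\u.
\]
By the design assumption $\X^T\X/n \to \C$ and the Lindeberg central limit theorem, $\tfrac{1}{\sqrt{n}}\X^T\epsi \overset{d}{\longrightarrow} \W \sim N(\0, \sigma^2\C)$, so this part converges in distribution to $-2\u^T\W + \u^T\C\u$.

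For the penalty part I would analyze each coordinate separately, using $\lambda_n = \eta_n/\Phi(\alpha_n)$ together with the two limits $\alpha_n/\Phi(\alpha_n) \to 1/\Phi'(0) = 1$ and $\Phi'(\alpha_n t) \to \Phi'(0) = 1$, both valid because $\alpha_n \to 0$ and $\Phi'$ is continuous at the origin with $\Phi'(0)=1$. When $b_j^* \neq 0$, a mean-value expansion writes the increment as $\Phi'(\alpha_n \xi_j)\alpha_n(|b_j^* {+} u_j/\sqrt{n}| {-} |b_j^*|)$; since $\sqrt{n}(|b_j^* {+} u_j/\sqrt{n}| {-} |b_j^*|) \to u_j\sgn(b_j^*)$ and $\eta_n/\sqrt{n} \to 2c_3$, the $j$-th term tends to $2c_3 u_j\sgn(b_j^*)$. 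When $b_j^* = 0$, the increment is $\Phi(\alpha_n|u_j|/\sqrt{n}) \sim \alpha_n|u_j|/\sqrt{n}$, so the $j$-th term tends to $2c_3|u_j|$. Collecting everything, $V_n(\u) \overset{d}{\longrightarrow} V(\u)$ with
\[
V(\u) = -2\u^T\W + \u^T\C\u + 2c_3\Big[\sum_{j\in\AM}u_j\sgn(b_j^*) + \sum_{j\notin\AM}|u_j|\Big].
\]

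The hard part is that $V_n$ is \emph{not} convex for finite $n$, because $\Phi(\alpha_n|\cdot|)$ is concave, so the convexity lemma of \citet{KnightFu:2000} does not apply directly; the remedy I would use is to show $V_n$ is \emph{eventually} convex. The Hessian of the objective in $\b$ equals $2\X^T\X + \lambda_n\alpha_n^2\diag\!\big(\Phi''(\alpha_n|b_j|)\big)$, which is positive semidefinite whenever $\lambda_n\alpha_n^2|\Phi''(0)| \leq 2\lambda_{\min}(\X^T\X)$ (using $\Phi''(0) \leq \Phi''(s) \leq 0$). Since $\lambda_n\alpha_n^2 = \eta_n\alpha_n^2/\Phi(\alpha_n) \sim \eta_n\alpha_n = o(n)$ while $\lambda_{\min}(\X^T\X) \sim n\,\lambda_{\min}(\C)$, this inequality holds for all large $n$, so $V_n$ is convex eventually and the argmin-convergence $\hat\u_n \overset{d}{\longrightarrow} \argmin_{\u}V(\u)$ goes through. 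As $\C$ is positive definite, $V$ is strictly convex, its minimizer $\hat\u$ is unique and $O_p(1)$; hence $\tilde\b_n {-} \b^* = \hat\u_n/\sqrt{n} \overset{p}{\longrightarrow} \0$, which gives consistency for every $c_3 \in [0, \infty)$. Finally, when $c_3 = 0$ the bracketed penalty contribution vanishes and $V(\u) = -2\u^T\W + \u^T\C\u$ is minimized at $\hat\u = \C^{-1}\W \sim N(\0, \sigma^2\C^{-1})$, so $\sqrt{n}(\tilde\b_n {-} \b^*) \overset{d}{\longrightarrow} N(\0, \sigma^2\C^{-1})$, completing the proof.
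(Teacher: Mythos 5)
Your proposal reproduces the paper's argument in all essentials: the same reparametrization $\b=\b^*+\u/\sqrt{n}$, the same decomposition of $V_n(\u)=G_n(\u)-G_n(\0)$ into the quadratic loss part converging to $\u^T\C\u-2\u^T\W$ and the penalty increments, the same coordinatewise limits $2c_3 u_j\sgn(b_j^*)$ for $j\in\AM$ and $2c_3|u_j|$ for $j\notin\AM$ (both resting on $\Phi(\alpha_n)/\alpha_n\to\Phi'(0)=1$), and the same identification of the limiting minimizer. The one genuine point of departure is how you license the passage from pointwise convergence in distribution of $V_n$ to convergence of the argmins: the paper simply invokes epi-convergence in the sense of \citet{KnightFu:2000}, which is designed precisely for non-convex $V_n$, whereas you instead prove that $V_n$ is \emph{eventually convex} by bounding the Hessian $2\X^T\X+\lambda_n\alpha_n^2\,\diag(\Phi''(\alpha_n|b_j|))$ from below using $\Phi''(s)\geq\Phi''(0)$ and the rate $\lambda_n\alpha_n^2\sim\eta_n\alpha_n=o(n)$ versus $\lambda_{\min}(\X^T\X)\sim n\lambda_{\min}(\C)$. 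Your route is more self-contained and makes the argmin transfer elementary, but it quietly imports the extra hypothesis $\Phi''(0)>-\infty$, which is \emph{not} among the stated assumptions of the theorem (only $\Phi(0)=0$ and $\Phi'(0)=1$ are assumed; a Bernstein function with L\'{e}vy measure satisfying $\int u\,\nu(du)<\infty$ but $\int u^2\,\nu(du)=\infty$ has $\Phi'(0)=1$ yet $\Phi''(0)=-\infty$, and then your Hessian bound is vacuous). The condition does hold for the family $\Phi_\rho$ exemplified in the paper, so this is a restriction rather than an error, but you should either add $\Phi''(0)>-\infty$ explicitly or fall back on the epi-convergence argument for the general case. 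Your consistency conclusion for all $c_3\in[0,\infty)$ via $\hat\u_n=O_p(1)$ is a clean shortcut past the paper's Chebyshev computation, and the $c_3=0$ normality conclusion matches the paper exactly.
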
%

In the previous discussion, $p$ is fixed. It  would be also interested in the asymptotic properties when $r$  and $p$ rely on $n$~\citep{ZhaoJMLR:06}. That is, $r \triangleq r_{n}$ and $p \triangleq p_n$ are allowed to grow as $n$ increases.
 Consider that $\tilde{\b}_n$
is the solution of the problem in (\ref{eqn:sqre}). Thus,
\[
0 \in  (\X \tilde{\b}_n {-} \y)^T \x_{\cdot j} + \frac{\eta_n \alpha_n \Phi'(\alpha_n |\tilde{b}_{nj}|)}{\Phi(\alpha_n)}  \partial |\tilde{b}_{nj}|, \quad j=1, \ldots, p.  \]
Under the condition $\alpha_n\to 0$, we have
\[
0 \in \lim_{n \to \infty} \Big\{ (\X \tilde{\b}_n {-} \y)^T \x_{\cdot j} + \frac{\eta_n \alpha_n \Phi'(\alpha_n |\tilde{b}_{nj}|)}{\Phi(\alpha_n)}  \partial |\tilde{b}_{nj}| \Big\}
= \lim_{n \to \infty} \Big\{ (\X \tilde{\b}_n {-} \y)^T \x_{\cdot j} + {\eta_n}  \partial |\tilde{b}_{nj}| \Big\}
\]
for $ j=1, \ldots, p$. Since the minimizer of the conventional lasso exists and unique (denote  $\hat{\b}_{0}$), the above
relationship implies that $\liml_{n \to \infty} \tilde{\b}_n=\liml_{n \to \infty} \hat{\b}_{0}$. Accordingly, we can obtain the same result as
in Theorem~4 of \citet{ZhaoJMLR:06}.

Recently, \citet{ZhangZhang2012} presented a general theory of nonconvex regularization for sparse learning problems.
Their work is built on the following four conditions on the penalty function $P(b)$: (i) $P(0)=0$; (ii)  $P(-b)=P(b)$;
(iii) $P(b)$ is increasing in $b$ on $[0, \infty)$; (iv) $P(b)$ is subadditive w.r.t.\
$b\geq 0$, i.e., $P(s+t) \leq P(s)+ P(t)$ for any $s \geq 0$ and $t \geq 0$.
It is easily seen  that the Bernstein function $\Phi(|b|)$ as a function of $b$ satisfies the first three conditions.
As for the fourth condition, it is also obtained via the fact that
\begin{align*}
\Phi(s+t) & = \int_{0}^{\infty}{ [1 - \exp(-(s+t) u)] \nu(d u) } \\
 & \leq \int_{0}^{\infty}{[1 - \exp(-s u) + 1 - \exp(-t u) ] \nu(d u) }
= \Phi(s) + \Phi(t), \quad \mbox{ for } s, t>0.
\end{align*}
Thus, we can directly apply the theoretical analysis of \citet{ZhangZhang2012}  to the Bernstein nonconvex penalty function.

The Bernstein function $\Phi(|b|)$ studied in this paper also satisfies Assumption~1 made in 
\citet{LohWainwright:2013}  (see  Proposition~\ref{thm:nest}-(a) and (c)).  
This implies that  the theoretical analysis of \citet{LohWainwright:2013}  applies to the Bernstein penalty function.

\subsection{The Proof of Theorems~\ref{thm:oracle2} and \ref{thm:asumptotic}}

The proof is similar to that of Theorem~1 in \citet{ArmaganDunsonLee}.
Let $\tilde{\b}_n =\b^{*} + \frac{\hat{\u} }{\sqrt{n}}$ and
\[
\hat{\u} = \argmin_{\u} \; \bigg\{G_n(\u) \triangleq  \Big\|\y - \X (\b^{*} +\frac{\u}{\sqrt{n}} ) \Big\|_2^2
    + \eta_n \sum_{j=1}^p \frac{\Phi(\alpha_n|b^{*}_j {+} \frac{u_j}{\sqrt{n}}|) }  {\Phi(\alpha_n)} \bigg\}.
\]
Then $\hat{\u}  = \sqrt{n}(\tilde{\b}_n - \b^{*})$.
Consider that
\[
 G_n(\u) - G_n(\0)
 = \u^T(\X^T \X/n) \u - 2 \u^T  \frac{\X^T \epsi}{\sqrt{n}}+
  \eta_n \sum_{j=1}^p
\frac{\Phi(\alpha_n|b^{*}_j {+} \frac{u_j}{\sqrt{n}}|) {-} \Phi(\alpha_n|b^{*}_j|) } {\Phi(\alpha_n)}.
\]
Clearly, $\X^T \X/n \rightarrow \C$ and $\frac{\X^T\epsi}{\sqrt{n}} \overset{d}{\rightarrow} \z  \overset{d}{=} N(\0, \sigma^2 \C)$.
We now discuss the limiting behavior of the third term of the right-hand side.

We partition $\z$ into $\z^T=(\z_1^T, \z_2^T)$ where  $\z_1=\{z_j: j \in {\cal A}\}$ and $\z_2=\{z_j: j \notin {\cal A}\}$.
First, assume $b^{*}_j=0$. The previous results imply
\[
 \eta_n \frac{\Phi(|u_j| \frac{\alpha_n}{\sqrt{n}})} {\Phi(\alpha_n)}
\backsimeq  \frac{n^{\frac{\gamma_1 {+} \gamma_2 {-} 1}{2}}}{n^{\frac{\gamma_2 \rho}{2} }} \frac{\eta_n}{n^{\frac{\gamma_1}{2} }} \frac{\alpha_n}{n^{\frac{\gamma_2}{2} }}  \frac{n^{\frac{\gamma_2 \rho}{2} }}{\alpha_n^{\rho}}  \frac{\alpha_n^{\rho} } {\log (\alpha_n)}  \frac{ \log (\alpha_n) } {\Phi(\alpha_n)}  \frac{\Phi \big(|u_j| \frac{ \alpha_n}{\sqrt{n} } \big) }
 {\frac{ \alpha_n}{\sqrt{n} }} \rightarrow +\infty
\]
whenever $\gamma=0$, due to $\liml_{\alpha \to \infty} \frac{\log(\alpha)}{\Phi(\alpha)} = \liml_{\alpha \to \infty} \frac{1}{\alpha \Phi'(\alpha)} = \frac{1}{c_0}>0$. Here we take $\rho$ as a positive constant such that $\rho\leq \frac{\gamma_1 {+} \gamma_2 {-} 1}{\gamma_2}$. If $\gamma \in (0, 1)$, we also have
\[
 \eta_n \frac{\Phi(|u_j| \frac{\alpha_n}{\sqrt{n}})} {\Phi(\alpha_n)}
\backsimeq \frac{n^{\frac{\gamma_1 +\gamma_2- 1}{2}}}{n^{ \frac{\gamma_2 \gamma}{2}}}  \frac{ \alpha_n^{\gamma} } {\Phi(\alpha_n)}  \frac{\Phi\big(|u_j| \frac{\alpha_n}{\sqrt{n}} \big) }
 { \frac{\alpha_n}{\sqrt{n}}} \rightarrow +\infty,
\]
because $\liml_{\alpha \to \infty} \frac{\alpha^{\gamma}}{\Phi(\alpha)} = \liml_{\alpha \to \infty} \frac{\gamma \alpha^{\gamma-1}}{ \Phi'(\alpha)} = \frac{\gamma}{c_0}>0$.

Next, we assume that $b^{*}_j\neq 0$. Subsequently, for sufficiently large $n$,
\begin{align} \label{eqn:99}
& \eta_n \frac{\Phi(\alpha_n|b^{*}_j {+} \frac{u_j}{\sqrt{n}}|) - \Phi(\alpha_n|b^{*}_j|) } {\Phi(\alpha_n)} \nonumber  \\
 & =   \eta_n \frac{\Phi(\alpha_n (b^{*}_j {+} \frac{u_j}{\sqrt{n}})
 \sgn(b^{*}_j) ) {-} \Phi(\alpha_n b^{*}_j \sgn(b^{*}_j)) } {\Phi(\alpha_n)}  \nonumber \\
& =  \frac{u_j}{b^{*}_j {+} \theta \frac{u_j}{\sqrt{n}}} \frac{\eta_n }{\sqrt{n}}  \frac{ \Phi'\Big(\alpha_n (b^{*}_j {+} \theta \frac{u_j}{\sqrt{n}}) \sgn(b^{*}_j) \Big)  \alpha_n (b^{*}_j {+} \theta \frac{u_j}{\sqrt{n}}) \sgn(b^{*}_j)} {\Phi(\alpha_n)}  \quad \{\mbox{for some }  \theta \in (0, 1) \}  \\
& \to 0. \nonumber
\end{align}
Here we use the fact that $\liml_{z \to \infty} \frac{z \Phi'(z)}{\Phi(z)} = \gamma \in [0, 1)$.

By Slutsky's theorem, we have
\[
 G_n(\u) - G_n(\0)  \overset{d}{\rightarrow} \left\{\begin{array}{ll} \u_1^T \C_{11} \u_1 - 2 \u_1^T  \z_1 & \mbox{ if } u_j=0 \; \forall j \notin {\cal A},
 \\ \infty & \mbox{ otherwise}. \end{array} \right.
\]
This implies that $G_n(\u) - G_n(\0)$ converges in distribution to a convex function, whose unique minimum is
$(\C_{11}^{-1} \z_1, \0)^T$. It then follows from  epiconvergence \citep{KnightFu:2000} that
\begin{equation} \label{eqn:11}
\hat{\u}_1 \overset{d}{\rightarrow} \C_{11}^{-1} \z_1 \; \mbox{ and } \;  \hat{\u}_2 \overset{d}{\rightarrow} \0.
\end{equation}
This proves asymptotic normality due to $\z_1 \overset{d}{=} N(\0, \sigma^2 \C_{11})$.

Recall that $\tilde{b}_{n j} \overset{p}{\rightarrow} b^{*}_j$ for any $j \in \AM$, which implies that $\Pr(j \in \AM_n) \to 1$.
Thus, for consistency in Part (1), it suffices to obtain  $\Pr(l \in \AM_n) \to 0$ for any $l \notin \AM$.
For such an event ``$l\in \AM_n$," it follows from the KKT optimality conditions
that $2 \x_{l}^T(\y - \X \tilde{\b}_{n})=\frac{\eta_n \alpha_n \Phi'(\alpha_n |\tilde{b}_{nj}|)}{\Phi(\alpha_n)}$.
Notice that
\[
\frac{2 \x_l^T (\y -\X \tilde{\b}_{n})}{\sqrt{n}} = 2 \frac{\x_l^T \X \sqrt{n}(\b^{*}-\tilde{\b}_n)}{n} + \frac{2 \x_l^T \epsi}{\sqrt{n}},
\]
and $\liml_{n\to \infty} \frac{\eta_n \alpha_n \Phi'(\alpha_n |\tilde{b}_{nj}|)}{\sqrt{n} \Phi(\alpha_n)}
= \liml_{n\to \infty} \frac{\eta_n \alpha_n \Phi'( \sqrt{n} |\tilde{b}_{nj}| \alpha_n/\sqrt{n})}{\sqrt{n} \Phi(\alpha_n)}
\backsimeq \liml_{n\to \infty} \frac{n^{\gamma_1 {+} \gamma_2 {-} \frac{1}{2}} }{\gamma_2 \log(n)} \frac{\log(\alpha_n)} {\Phi(\alpha_n)} \to \infty$ for $\gamma=0$ or $\liml_{n\to \infty} \frac{\eta_n \alpha_n \Phi'(\alpha_n |\tilde{b}_{nj}|)}{\sqrt{n} \Phi(\alpha_n)}
= \liml_{n\to \infty} \frac{\eta_n \alpha_n \Phi'( \sqrt{n} |\tilde{b}_{nj}| \alpha_n/\sqrt{n})}{\sqrt{n} \Phi(\alpha_n)}
\backsimeq \liml_{n\to \infty} \frac{n^{\gamma_1 {+} \gamma_2 {-} \frac{1}{2}} }{n^{\frac{\gamma \gamma_2}{2} } } \frac{ \alpha_n^{\gamma} } {\Phi(\alpha_n)} \to \infty$ for $\gamma>0$
due to $\sqrt{n} |\tilde{b}_{nj}| \overset{p}{\rightarrow}  0$ by (\ref{eqn:11}) and Slutsky's theorem. Accordingly, we have
\[
\Pr(l \in \AM_n) \leq \Pr\Big[ 2 \x_{l}^T(\y - \X \tilde{\b}_{n})=\frac{\eta_n \alpha_n \Phi'(\alpha_n |\tilde{b}_{nj}|)}{\Phi(\alpha_n)}\Big]
\to 0.
\]

As for the proof of Theorem~\ref{thm:asumptotic}, we consider the case that $\liml_{n\to \infty} \alpha_n =0$. In this case, we have
\[
\lim_{n \to \infty} \frac{\Phi(\alpha_n/\sqrt{n})}{\alpha_n/\sqrt{n}} = 1 \; \mbox{ and } \;
\lim_{n \to \infty} \frac{\Phi(\alpha_n)}{\alpha_n} = 1.
\]
Assume that $\liml_{n\to \infty} \eta_n/\sqrt{n}= 2 c_3 \in [0, \infty]$. Then
\[
 \eta_n \frac{\Phi(|u_j| \frac{\alpha_n}{\sqrt{n}})} {\Phi(\alpha_n)}= |u_j|  \frac{\eta_n}{\sqrt{n}}
 \frac{\alpha_n  \Phi(|u_j| \frac{\alpha_n}{\sqrt{n}})} {\Phi(\alpha_n) |u_j| {\alpha_n}/{\sqrt{n}}} \to 2 c_3 |u_j|
\]
when $u_j\neq 0$. If $b^{*}_j\neq 0$, then
\begin{align*}
& \eta_n \frac{\Phi(\alpha_n|b^{*}_j {+} \frac{u_j}{\sqrt{n}}|) - \Phi(\alpha_n|b^{*}_j|) } {\Phi(\alpha_n)} \nonumber  \\
 & =   \eta_n \frac{\Phi(\alpha_n (b^{*}_j {+} \frac{u_j}{\sqrt{n}})
 \sgn(b^{*}_j) ) {-} \Phi(\alpha_n b^{*}_j \sgn(b^{*}_j)) } {\Phi(\alpha_n)}  \nonumber \\
& =  \frac{u_j}{b^{*}_j {+} \theta \frac{u_j}{\sqrt{n}}} \frac{\eta_n }{\sqrt{n}}  \frac{ \Phi'\Big(\alpha_n (b^{*}_j {+} \theta \frac{u_j}{\sqrt{n}}) \sgn(b^{*}_j) \Big)  \alpha_n (b^{*}_j {+} \theta \frac{u_j}{\sqrt{n}}) \sgn(b^{*}_j)} {\Phi(\alpha_n)}  \quad \{\mbox{for some }  \theta \in (0, 1) \}  \\
& \to 2 c_3 u_j \sgn(b^{*}_j).
\end{align*}
We now first consider the case that $c_3 = 0$. In this case,
we have
\[
 G_n(\u) - G_n(\0)  \overset{d}{\longrightarrow} \u^T \C \u - 2 \u^T  \z,
\]
which is convex w.r.t.\ $\u$. Then the minimizer of $\u^T \C \u {-} 2 \u^T  \z$ is $\u^{*}$ if and only if  $\C \u^{*} - \z=\0$. Since
$\hat{\u} \overset{d}{\rightarrow} \u^{*}$ (by epiconvergence), we obtain $ \sqrt{n}(\tilde{\b}_n - \b^{*})=\hat{\u}  \overset{d}{\rightarrow}
N(\0, \sigma^2 \C^{-1})$.

We then consider the case that $c_3 \in (0, \infty)$. Right now we have
\[
 G_n(\u) - G_n(\0)  \overset{d}{\longrightarrow} \u^T \C \u - 2 \u^T  \z+ 2c_3 \sum_{j \in \AM} u_j \sgn(b^{*}_j) +
2 c_3 \sum_{j \notin \AM}  |u_j|  \triangleq H_2(\u).
\]
$H_2(\u)$ is convex in $\u$. Let the minimizer of $H_2(\u)$ be $\u^{*}$. Then
\[
\C \u^{*} - \z + c_3 \s =0
\]
where $\s^T = (\sgn(\b^{*}_1)^T, \v^T)$ and $\v \in \RB^{p_2}$ with $\max_{j} |v_j| \leq 1$. Thus,
we have $\u^{*} \overset{d}{\rightarrow} N({\bf t}, \sigma^2 \Tha)$ where ${\bf t}=(t_1, \ldots, t_p)^T=-c_3 \C^{-1} \s$
and $\Tha=[\theta_{ij}]= \C^{-1}$.
For any $\epsilon>0$, when $n$ is significantly large and using Chebyshev's inequality,  we have that
\begin{align*}
\Pr\Big[|u_j^{*}|/\sqrt{n} \geq \epsilon \Big]
&= \Pr\Big[ |u_j^{*}| \geq \sqrt{n} \epsilon \Big] \\
& \leq
\Pr\Big[|u_j^{*} - t_j| \geq \sqrt{n} \epsilon - |t_j| \Big]
 \leq \frac{\sigma^2 \theta_{jj}}{(\sqrt{n} \epsilon - |t_j| )^2} \to 0
\end{align*}
for $j=1, \ldots, p$. Consequently,  $|u_j^{*}|/\sqrt{n} \overset{p}{\rightarrow}0$;
that is, $\tilde{\b}_n  \overset{p}{\rightarrow} \b^{*}$.

\bibliography{ncvs2}

\end{document}